\theoremstyle{plain}
\newtheorem{theorem}{Theorem}[section]
\newtheorem{lemma}[theorem]{Lemma}
\theoremstyle{definition}
\theoremstyle{remark}
\def\indicator{\mathbbm{1}}
\newcommand{\revise}[1]{\textcolor{black}{#1}}
\newcommand{\NOTE}[1]{\textcolor{red}{[NOTE: #1]}}
\newcommand{\yufei}[1]{\textcolor{black}{#1}}
\def\eqref#1{equation~\ref{#1}}
\def\ceil#1{\lceil #1 \rceil}
\def\1{\bm{1}}
\DeclareMathAlphabet{\mathsfit}{\encodingdefault}{\sfdefault}{m}{sl}
\SetMathAlphabet{\mathsfit}{bold}{\encodingdefault}{\sfdefault}{bx}{n}
\def\gC{{\mathcal{C}}}
\def\gS{{\mathcal{S}}}
\newcommand{\E}{\mathbb{E}}
\newcommand{\R}{\mathbb{R}}
\def\yanblue{\textcolor{blue}}
\def\yanred{\textcolor{red}}
\def\exponential{\text{exp}}
\def\rank{\text{rank}}
\def\Beta{\text{Beta}}
\def\tr{\text{tr}}
\def\calX{\mathcal X}
\def\calY{\mathcal Y}
\def\calC{\mathcal C}
\def\calD{\mathcal D}
\def\calP{\mathcal P}
\def\calF{\mathcal F}
\def\E{\mathbb E}
\def\P{\mathbb P}
\def\R{\mathbb R}
\def\I{\mathbb I}
\def\yanred{\textcolor{red}}
\def\yanblue{\textcolor{blue}}
\newcommand{\CUT}[1]{}
\def\code#1{\texttt{#1}}
\icmltitlerunning{The Pitfalls and Promise of Conformal Inference Under Adversarial Attacks}
\begin{document}

\twocolumn[
\icmltitle{The Pitfalls and Promise of Conformal Inference Under Adversarial Attacks}



\icmlsetsymbol{equal}{*}

\begin{icmlauthorlist}
\icmlauthor{Ziquan Liu}{qmul}
\icmlauthor{Yufei Cui}{mcgill}
\icmlauthor{Yan Yan}{wsu}
\icmlauthor{Yi Xu}{dtu}
\icmlauthor{Xiangyang Ji}{tsinghua}
\icmlauthor{Xue Liu}{mcgill}
\icmlauthor{Antoni B. Chan}{cityu}
\end{icmlauthorlist}

\icmlaffiliation{qmul}{Queen Mary University of London}
\icmlaffiliation{mcgill}{McGill University, Mila}
\icmlaffiliation{wsu}{Washington State University}
\icmlaffiliation{dtu}{Dalian University of Technology}
\icmlaffiliation{tsinghua}{Tsinghua University}
\icmlaffiliation{cityu}{City University of Hong Kong}

\icmlcorrespondingauthor{Ziquan Liu}{ziquan.liu@qmul.ac.uk}

\icmlkeywords{Machine Learning, ICML}

\vskip 0.3in
]



\printAffiliationsAndNotice{}  

\begin{abstract}
In safety-critical applications such as medical imaging and autonomous driving, where decisions have profound implications for patient health and road safety, it is imperative to maintain both high adversarial robustness to protect against potential adversarial attacks and reliable uncertainty quantification in decision-making. 
With extensive research focused on enhancing adversarial robustness through various forms of adversarial training (AT), a notable knowledge gap remains concerning the uncertainty inherent in adversarially trained models. To address this gap, this study investigates the uncertainty of deep learning models by examining the performance of conformal prediction (CP) in the context of standard adversarial attacks within the adversarial defense community. It is first unveiled that existing CP methods do not produce informative prediction sets under the commonly used $l_{\infty}$-norm bounded attack if the model is not adversarially trained, which underpins the importance of adversarial training for CP. Our paper next demonstrates that the prediction set size (PSS) of CP using adversarially trained models with AT variants is often worse than using standard AT, inspiring us to research into CP-efficient AT for improved PSS. We propose to optimize a Beta-weighting loss with an entropy minimization regularizer during AT to improve CP-efficiency, where the Beta-weighting loss is shown to be an upper bound of PSS at the population level by our theoretical analysis. Moreover, our empirical study on four image classification datasets across three popular AT baselines validates the effectiveness of the proposed Uncertainty-Reducing AT (AT-UR). 

\end{abstract}

\section{Introduction}
The research into adversarial defense has been focused on improving adversarial training with various strategies, such as logit-level supervision \citep{zhang2019theoretically,cui2021learnable} and loss re-weighting \citep{wang2019improving,liu2021probabilistic}. However, the predictive uncertainty of an adversarially trained model is a crucial dimension of the model in safety-critical applications such as healthcare \citep{razzak2018deep}, and is not sufficiently understood. Existing works focus on calibration uncertainty \citep{stutz2020confidence,qin2021improving,kireev2022effectiveness}, without investigating practical uncertainty quantification of a model, e.g., prediction sets in image classification \citep{shafer2008tutorial,angelopoulos2020uncertainty,romano2020classification}. 

On the other hand, the research into conformal prediction (CP) has been extended to non-i.i.d. (identically independently distributed) settings, including distribution shifts \citep{gibbs2021adaptive} and toy adversarial noise \citep{ghosh2023probabilistically,gendler2021adversarially}. However, there is little research work on the performance of CP under standard adversarial attacks in the adversarial defense community, such as PGD-based attacks \citep{madry2018towards,croce2020reliable} with $l_{\infty}$-norm bounded perturbations. For example, \cite{gendler2021adversarially} and \cite{ghosh2023probabilistically} only consider $l_2$-norm bounded adversarial perturbations with a small attack budget, e.g., $\epsilon=0.125$ for the CIFAR dataset \citep{krizhevsky2009learning}. In contrast, the common $l_2$-norm bounded attack budget in the adversarial defense community reaches $\epsilon=0.5$ on CIFAR \citep{croce2020reliable}. In other words, existing research on adversarially robust conformal prediction is not practical enough to be used under standard adversarial attacks. 
\begin{figure*}[t]
     \centering
     \includegraphics[width=0.90\textwidth]{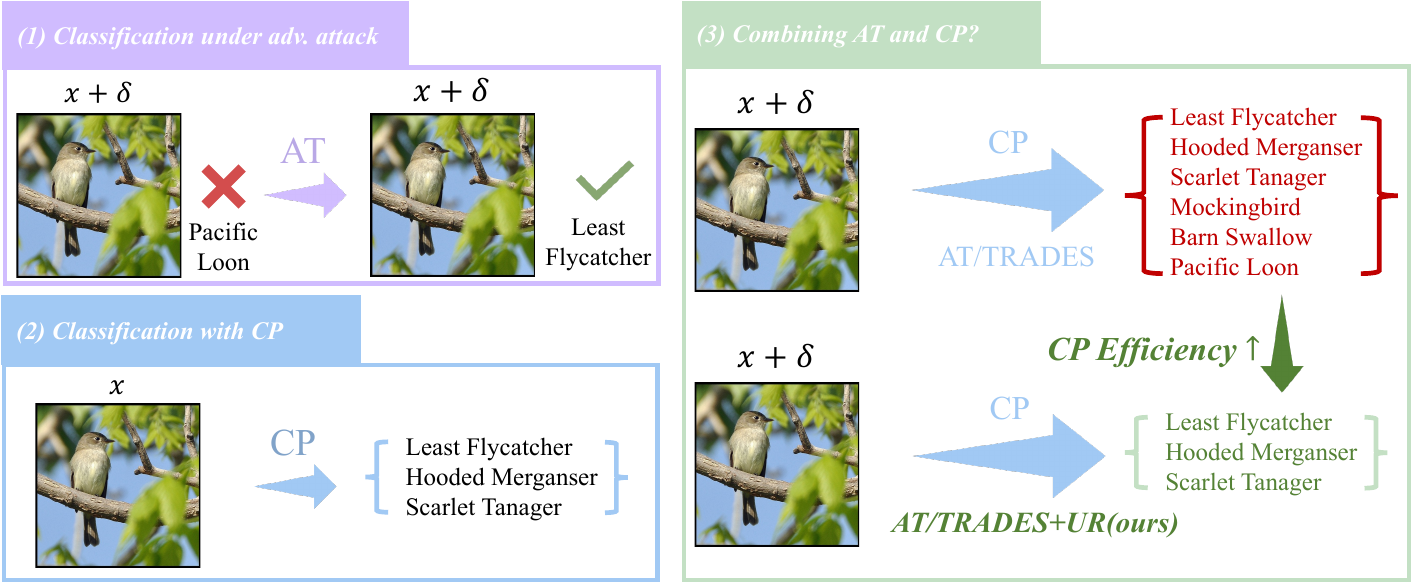}
    \caption{The proposed uncertainty-reducing adversarial training (AT-UR) improves the CP-efficiency of existing adversarial training methods like AT, FAT and TRADES. (1) AT improves the Top-1 robust accuracy of a standard model; (2) \revise{CP generates a prediction set with a pre-specified coverage guarantee for an input image, but for models not adversarially trained, CP fails to generate informative prediction sets, as the PSS is almost the same as the class number, when models being attacked (Fig.~\ref{fig:pitfalls_std_model})}; (3) When using CP in an adversarially trained model, the prediction set size is generally large, leading to inefficient CP. Our AT-UR substantially improves the CP-efficiency of existing AT methods. }
    \label{fig:main_figure}
\end{figure*}

In this context, our paper is among the first research papers to explore uncertainty of deep learning models within the framework of CP in the presence of a \emph{standard} adversary. We first present an empirical result that shows the failure of three popular CP methods on non-robust models under standard adversarial attacks, indicating the necessity of using adversarial training (AT) during the training stage. Next, we show the CP performance of three popular AT methods, finding that advanced AT methods like TRADES \citep{zhang2019theoretically} and MART \citep{wang2019improving} substantially increase the PSS in CP even though they improve the Top-1 robust accuracy. This key observation inspires us to develop the uncertainty-reducing AT (AT-UR) to learn an adversarially robust model with improved \emph{CP-efficiency} \citep{angelopoulos2020uncertainty}, meaning that CP uses a smaller PSS to satisfy the coverage. The proposed AT-UR consists of two training techniques, Beta weighting and entropy minimization, based on our observation about the two major factors that affect PSS: True Class Probability Ranking (TCPR) and prediction entropy, both defined in Sec.~\ref{sec:at_ur}. Our theoretical analysis on the Beta-weighting loss reveals that the proposed weighted loss is an upper bound for the PSS at the population level. The proposed AT-UR is demonstrated to be effective at reducing the PSS of models on multiple image classification datasets. In summary, there are four major contributions of this paper.

\begin{compactenum}
\item We test several CP methods under commonly used adversarial attacks in the adversarial defense community. \revise{It turns out that for models not adversarially trained, CP cannot generate informative prediction sets.} Thus, adversarial training is necessary for CP to work under adversarial attacks.
\item We test the performance of adversarially trained models with CP and demonstrate that improved AT often learns a more uncertain model and leads to less efficient CP with increased PSS.
\item We propose uncertainty-reducing AT (AT-UR) to learn a CP-efficient and adversarially robust model by minimizing the entropy of predictive distributions and a weighted loss where the weight is a Beta density function of TCPR.
\item Our main theorem shows that at the population-level, the Beta-weighting loss is an upper bound for the targeted PSS, so minimizing the weighted loss leads to reduced PSS in theory. This theoretical result corroborates our hypothesis that optimizing the \emph{promising} samples with high weights leads to reduced PSS.
\item Our empirical study demonstrates that the proposed AT-UR learns adversarially robust models with substantially improved CP-efficiency on four image classification datasets across three AT methods, validating our major theoretical result. 
\end{compactenum}

The paper is structured as follows. Section 2 discusses related works and Section 3 introduces mathematical notations and two key concepts in this paper. Section 4 shows the pitfalls of three CP methods under standard attacks when the model is not robustly trained and the low CP-efficiency of two improved AT methods and motivates us to develop the AT-UR introduced in Section 5. Our major empirical results are shown in Section 6 and we conclude the paper in Section 7. Our code is available at \textcolor{magenta}{\url{https://github.com/ziquanliu/ICML2024-AT-UR}}.

\CUT{
1. Entropy-based focal loss (sometimes working well)

2. CP distillation (not working well)

3. label smoothing/regularization (not working well)

4. Beta-distribution rank focal (best)

5. hard sample abstain (promising)

6. small-loss sample drop (reduce overfitting)
}

\section{Related Works}
\textbf{Adversarial Robustness.} The most effective approach to defending against adversarial attacks is adversarial training (AT) \citep{madry2018towards}. There is a sequence of works following the vanilla version of AT based on projected gradient descent (PGD), including regularization \citep{qin2019adversarial,liu2022boosting,liu2021improve}, logit-level supervision \citep{zhang2019theoretically,cui2021learnable} and loss re-weighting \citep{wang2019improving,liu2021probabilistic}. Existing methods on regularization focus on improving Top-1 robust accuracy by training the model with certain properties like linearization \citep{qin2019adversarial} and large margins \citep{liu2022boosting}. In contrast, our work focuses on the PSS, i.e., the efficiency of CP, in adversarially trained models by regularizing the model to have low prediction entropy.  The entropy minimization regularization also entails logit-level supervision as in \cite{zhang2019theoretically}. In comparison, our proposed approach, AT-EM, enhances CP efficiency, whereas TRADES (Zhang et al., 2019) impedes CP-efficiency. \revise{The most related work is \cite{gendler2021adversarially} which also studies CP under adversarial attacks. However, there are two fundamental differences: 1) \cite{gendler2021adversarially} only considers a small attack budget under $l_2$-norm bounded attacks, while our work investigates CP under common adversarial attacks in adversarial defense literature with $l_{\infty}$-norm bounded attacks; 2) Our paper shows that AT is essential for CP to work under strong adversarial attacks and proposes novel AT methods to learn a CP-efficient and adversarially-robust model, while \cite{gendler2021adversarially} only considers the post-training stage. Our experiment validates that  \cite{gendler2021adversarially} fails when there are strong adversarial attacks (Fig.~\ref{fig:pitfalls_std_model}). }

\textbf{Uncertainty Quantification.} \yufei{Uncertainty quantification aims to provide an uncertainty measure for a machine learning system's decisions. Within this domain, Bayesian methods stand out as a principled approach, treating model parameters as random variables with distinct probability distributions. This is exemplified in Bayesian Neural Networks (BNNs), which place priors on network weights and biases, updating these with posterior distributions as data is observed~\citep{gal2016dropout,kendall2017uncertainties,cui2020accelerating}. However, the large scale of modern neural networks introduces challenges for Bayesian methods, making prior and posterior selection, and approximate inference daunting tasks~\citep{kingma2015variational,cui2021bayesian,cui2023variational,cui2023bayesmil}. This can sometimes compromise the optimal uncertainty quantification in BNNs. In contrast, the frequentist approach offers a more direct route to uncertainty estimation. It views model parameters as fixed yet unknown, deriving uncertainty through methods like conformal prediction~\citep{vovk1999machine,ghosh2023probabilistically,gendler2021adversarially}. While Bayesian methods integrate prior beliefs with data, their computational demands in large networks can be overwhelming, positioning the straightforward frequentist methods as a viable alternative for efficient uncertainty quantification.} Thus, our paper investigates the uncertainty of adversarially trained models via CP. Note that our work is fundamentally different from existing research on uncertainty calibration for AT \citep{stutz2020confidence,qin2021improving,kireev2022effectiveness}, as our focus is to produce a valid prediction set while uncertainty calibration aims to align accuracy and uncertainty. Finally, \cite{einbinder2022training} proposes to train a model with uniform conformity scores on a calibration set in standard training, while our work proposes CP-aware adversarial training to reduce PSS.

\section{Preliminary}
Before diving into the details of our analysis and the proposed method, we first introduce our mathematical notations, adversarial training and conformal prediction.

\textbf{Notations.} Denote a training set with $m$ samples by $\calD_\tr = \{(x_i, y_i)\}_{i=1}^{m}$.
Suppose each data sample $(x_i, y_i) \in \calX \times \calY$ is drawn from an underlying distribution $\calP$ defined on the space $\calX \times \calY$, where $x_i$ and $y_i$ are the feature and label, respectively.
Particularly, we consider the classification problem and assume that there are $K$ classes, i.e., $\calY = \{1,..., K\}$ (we denote $[K] = \{1, ..., K\}$ for simplicity).
Let $f_{\theta} : \calX \rightarrow \Delta_p^K$ denote a predictive model from a hypothesis class $\calF$ that generates a $K$-dimensional probability simplex: $\Delta_p^K = \{ v \in [0, 1]^K : \sum_{k=1}^K v_k = 1 \}$. $\theta$ is the model parameter we optimize during training. A loss function $\ell : \calY \times \calY \rightarrow \R$ is used to measure the difference between the prediction made by $f_{\theta}(x)$ and the ground-truth label $y$. 

To measure the performance of $f_{\theta}$ in the sense of population over $\calP$, the {\it true risk} is typically defined as $R(f_{\theta}) = \P_{(x, y) \sim \calP} [ f_{\theta}(x) \neq y ]$.
Unfortunately, $R(f)$ cannot be realized in practice, since the underlying $\calP$ is unreachable.
Instead, the {\it empirical risk} $\widehat R(f_{\theta}) = \frac{1}{m} \sum_{i=1}^m \I[ f_{\theta}(x_i) \neq y_i ]$ is usually used to estimate $R(f_{\theta})$, where $\I[\cdot]$ is the indicator function.
The estimation error of $\widehat R(f_{\theta})$ to $R(f_{\theta})$ is usually referred to as generalization error bound and can be bounded by a standard rate $O(1/\sqrt{m})$.
To enable the minimization of empirical risk, a loss function $\ell$ is used as the surrogate of $\I[\cdot]$, leading to the classical learning paradigm empirical risk minimization (ERM): $\min_{f_{\theta} \in \calF} \widehat L(f_{\theta}) = \frac{1}{m} \sum_{i=1}^m \ell(f_{\theta}(x_i), y_i)$. In this work, we use the standard cross-entropy loss as the loss function where $j$ is the index for a $j$th element in a vector,
\setlength{\belowdisplayskip}{1.0pt} \setlength{\belowdisplayshortskip}{1.0pt}
\setlength{\abovedisplayskip}{1.0pt} \setlength{\abovedisplayshortskip}{1.0pt}
\begin{align}
    \ell(f_{\theta}(x_i), y_i)=-\sum_{j=1}^Ky_{ij}\log(f_{\theta}(x_i)_j).
\end{align}

\textbf{Adversarial training.} Write the loss for sample ($x_i,y_i$) in adversarial training as $l(\tilde x_i,y_i)$, where $\tilde x_i=x_i+\delta_i$ and $\delta_i$ is generated from an adversarial attack, e.g., PGD attack \citep{madry2018towards}. The vanilla adversarial training minimizes the loss with uniform weights for a mini-batch with $B$ samples, i.e.,
\begin{align}
    \nabla f_{\theta} = \nabla \frac{1}{B}\sum_{i=1}^Bl(f_{\theta}(\tilde x_i)_j,y_i),
\end{align}
where $\nabla f_{\theta}$ is the gradient in this mini-batch step optimization with respect to $\theta$.

\begin{figure*}[t]
     \centering
     \includegraphics[width=\textwidth]{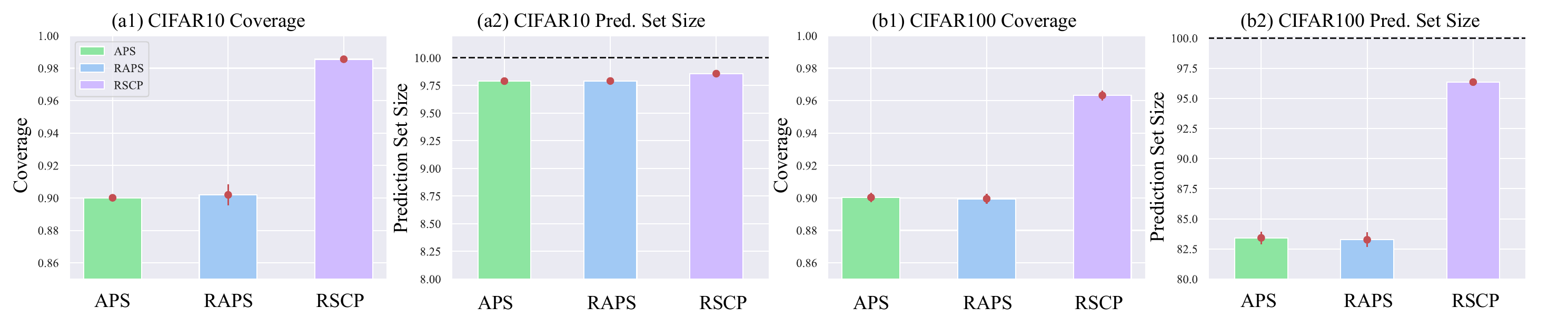}
    \caption{The performance of three representative CP methods using non-robust models under standard adversarial attacks in the adversarial defense community. The red line denotes means standard deviation of the metric. For comparison, the average PSS for normal images is 1.03 and 2.39 for CIFAR10 and CIFAR 100. \revise{See Sec.~\ref{sec:experiment:setting} for details of the experiment.}}
    \label{fig:pitfalls_std_model}
\end{figure*}

\begin{table}
\centering
\resizebox{1.0\linewidth}{!}{
\begin{tabular}{|c|c|c|c|c|c|}
\hline
&Dataset & CIFAR10 & CIFAR100 & Caltech256 & CUB200 \\
\hline
\multirow{4}{*}{AT} & Rob. Coverage &93.25(0.45) & 91.99 (0.61) & 94.35(0.81) & 91.87(0.90) \\
& Rob. Set Size & 2.54(0.04) & 14.29(0.59) & 23.73(1.68) & \textbf{17.75(0.71)}\\
& Clean Acc. &89.76(0.15) & 68.92(0.38) & 75.28(0.51) & 65.36(0.27)\\
& Rob. Acc. & 47.75(0.94) & 24.89(1.05) & 43.41(0.59) & 23.34(0.41) \\
\hline
\multirow{4}{*}{TRADES} & Rob. Coverage & 93.01(0.46) & 91.75(0.72)	 &  94.33(0.49) & 92.03(0.59)	 \\
& Rob. Set Size & \textbf{2.49(0.03)} & \textbf{12.22(0.50)} & \textbf{22.82(1.37)} & 22.29(0.96) \\
& Clean Acc. &87.31(0.27) & 62.83(0.33) & 69.57(0.25) & 58.16(0.38) \\
& Rob. Acc. & \textbf{50.52(0.31)} & 26.31(0.29) & 43.69(0.45) & 22.30(0.21)
\\
\hline
\multirow{4}{*}{MART} & Rob. Coverage & 94.56(0.36) &92.24(0.73) &  95.28(0.66) & 92.24(0.70)\\
& Rob. Set Size &3.07(0.05) & 15.26(0.65) &28.89(2.83) & 25.01(1.23) \\
& Clean Acc. &85.43(0.24) & 59.66(0.26) & 69.68(0.31) & 58.72(0.18)\\
& Rob. Acc. & 48.66(0.38) & \textbf{27.41(0.55)} & \textbf{44.61(0.37)}  & \textbf{24.01(0.43)}\\
\hline
\end{tabular}
}
\caption{CP and Top-1 accuracy of three popular adversarial defense methods under AutoAttack \cite{croce2020reliable}. Bold numbers are the best PSS and robust accuracy. }
\label{tab:popular_AT}
\end{table}

\CUT{
\begin{table}
\centering
\resizebox{1.0\linewidth}{!}{
\begin{tabular}{|c|c|c|c|c|c|}
\hline
&Dataset & CIFAR10 & CIFAR100 & Caltech256 & CUB200 \\
\hline
\multirow{4}{*}{AT} & Rob. Coverage &90.55(0.51) & 90.45(0.59) & 91.35(0.85) &90.33(0.89)  \\
& Rob. Set Size & \textbf{3.10(0.07)} &\textbf{23.79(0.80)} & \textbf{43.20(2.11)} & \textbf{37.37(2.11)}\\
& Clean Acc. &89.76(0.15) & 68.92(0.38) & 75.28(0.51) & 65.36(0.27)\\
& Rob. Acc. & 50.17(0.91) & 28.49(1.14) & 47.53(0.67) & 26.29(0.44) \\
\hline
\multirow{4}{*}{TRADES} & Rob. Coverage & 90.72(0.62) &90.35(0.57) &  90.82(0.81) & 90.38(0.76)  \\
& Rob. Set Size &3.31(0.09) & 27.60(0.97) &44.80(3.42) & 52.18(2.60) \\
& Clean Acc. &87.31(0.27) & 62.83(0.33) & 69.57(0.25) & 58.16(0.38) \\
& Rob. Acc. & 53.07(0.23)& 32.07(0.20) & 47.07(0.37) & 27.82(0.23)\\
\hline
\multirow{4}{*}{MART} & Rob. Coverage & 91.60(0.48) & 90.67(0.83) &91.92(0.84) & 90.22(0.53)\\
& Rob. Set Size &3.81(0.07) &28.37(1.29) &46.79(2.73) & 45.31(1.78)\\
& Clean Acc. &85.43(0.24) & 59.66(0.26) & 69.68(0.31) & 58.72(0.18)\\
& Rob. Acc. & \textbf{54.48(0.29)} & \textbf{34.04(0.46)} & \textbf{49.82(0.32)}  & \textbf{28.99(0.25)}\\
\hline
\end{tabular}
}
\caption{CP and Top-1 accuracy of three popular adversarial defense methods under PGD100 adversarial attack. Bold numbers are the best PSS and robust accuracy. }
\label{tab:popular_AT}
\end{table}
}

\textbf{Conformal prediction (CP).} CP is a distribution-free uncertainty quantification method and can be used in a wide range of tasks including both regression and classification \citep{vovk1999machine,vovk2005algorithmic}. This paper focuses on the image classification task, where CP outputs a prediction set instead of the Top-1 predicted class as in a standard image classification model, and satisfies a coverage guarantee. Mathematically, CP maps an input sample $x$ to a prediction set $\gC(x)$, which is subset of $[K]=\{1,\cdots,K\}$, with the following coverage guarantee,
\begin{align}
P(y\in\gC(x))\geq 1-\alpha,
\end{align}
where $1-\alpha$ is a pre-defined confidence level such as 90\%, meaning that the prediction set will contain the ground-truth label with 90\% confidence for future data. This paper mainly considers the \emph{split conformal prediction}, an efficient CP approach applicable to any pre-trained black-box classifier \citep{papadopoulos2002inductive,lei2018distribution} as it does not need to re-train the classifier with different train-calibration-test splits. 

The prediction set of CP is produced by the calibration-then-test procedure. In the context of a classification task, we define a prediction set function $\gS(x,u;\pi,\tau)$, where $u$ is a random variable sampled from a uniform distribution $\text{Uniform}[0,1]$ independent of all other variables, $\pi$ is shorthand for the predictive distribution $f_{\theta}(x)$, and $\tau$ is a threshold parameter that controls the size of the prediction set. An increase in the value of $\tau$ leads to an expansion in the size of the prediction set within $\mathcal{S}(x,u;\pi,\tau)$. We give one example \citep{romano2020classification} of the function $\gS$ in Appendix~\ref{app_sec:aps}. The calibration process computes the smallest threshold parameter $\hat\tau_{\text{cal}}$ to achieve an empirical coverage of $(1-\alpha)(n_c+1)/n_c$ on the calibrations set with $n_c$ samples. For a test sample $x^*$, a prediction set is the output of the function $\mathcal{S}(x^*,u;\pi^*,\hat\tau_{\text{cal}})$. 

\section{Necessitate AT for Robust and Efficient Coverage.}
\noindent\textbf{The pitfalls of CP under strong adversarial attacks}. We test the performance of three conformal prediction methods, i.e., APS (Adaptive Prediction Sets) \citep{romano2020classification}, RAPS (Regularized Adaptive Prediction Sets) \citep{angelopoulos2020uncertainty} and RSCP (Randomly Smoothed Conformal Prediction) \citep{gendler2021adversarially}, under standard adversarial attacks. Specifically, for APS and RAPS, we use PGD100 adversarial attacks with $l_{\infty}$-norm bound and attack budget $\epsilon=8/255=0.0314$. For RSCP, we adopt PGD20 with an $l_2$-norm bound, in accordance with the original paper's settings, but with a larger attack budget of $\epsilon=0.5$ as in RobustBench \citep{croce2020reliable}. If not specified otherwise, we use adversarial attack PGD100 with $l_{\infty}$ norm and $\epsilon=8/255=0.0314$ to generate adversarial examples throughout this paper. 

Fig.~\ref{fig:pitfalls_std_model} shows the coverage and PSS of three CP methods on CIFAR10 and CIFAR100 when models are trained in a standard way, i.e., without adversarial training. Although all CP methods have good coverages, their prediction set sizes are close to the number of classes in both datasets as the classifier is completely broken under strong adversarial attacks. In contrast, when the same models are applied to standard images, the PSS are 1.03 and 2.39 for CIFAR10/CIFAR100. This result reveals that adversarial training is indispensable if one wants to use CP to get reasonable uncertainty quantification for their model in an adversarial environment. Therefore, in next section, we test AT and two improved AT methods to investigate the performance of CP for adversarially trained models.

\CUT{
\begin{table}[t]
\centering
\begin{tabular}{|l|l|l|l|l|}
\hline
             & CIFAR10 & CIFAR100 & Caltech256 & CUB200 \\ \hline
Adv Coverage &    90.01(0.03)     &   90.03(0.28)       &            &        \\
Adv Set Size &     9.79 (0.01)    &    83.42(0.53)      &            &        \\ \hline
\end{tabular}
\caption{APS}
\end{table}
}

\CUT{
\begin{table}[t]
\centering
\begin{tabular}{|l|l|l|l|l|}
\hline
             & CIFAR10 & CIFAR100 & Caltech256 & CUB200 \\ \hline
Adv Coverage &   90.190 (0.654)      &    89.948 (0.312)      &            &        \\
Adv Set Size &   9.791 (0.022)      &    83.261 (0.608)      &            &        \\ \hline
\end{tabular}
\caption{RAPS}
\end{table}
}

\CUT{
\begin{table}[t]
\centering
\begin{tabular}{|l|l|l|l|l|}
\hline
             & CIFAR10 & CIFAR100 & Caltech256 & CUB200 \\ \hline
Adv Coverage &   98.5528(0.1397)      & 96.3204(0.3021)         &            &        \\
Adv Set Size &   9.857516(0.007097)      &     96.353580(0.129677)     &            &        \\ \hline
\end{tabular}
\caption{RSCP}
\end{table}
}

\noindent\textbf{Improved AT Compromises Conformal Prediction's Efficiency}. We test three popular adversarial training methods, i.e., AT \citep{madry2018towards}, TRADES \citep{zhang2019theoretically} and MART \citep{wang2019improving}, using APS as the conformal prediction method under a commonly used adversarial attack, AutoAttack with $l_{\infty}$-norm and $\epsilon=8/255=0.0314$. See more detailed experimental settings in Sec.~\ref{sec:experiment}. Tab.~\ref{tab:popular_AT} shows their coverage and PSS, as well as clean and robust accuracy on four datasets. The results demonstrate that while the two enhanced adversarial training methods, TRADES and MART, effectively improve the Top-1 accuracy in the presence of adversarial attacks, they often lead to an increase in the size of the prediction set, consequently yielding a less CP-efficient model. In other words, the improvement in Top-1 accuracy does not necessarily lead to less uncertainty. Therefore, to design a new AT method that learns an adversarially robust model with efficient CP, a deep investigation into the PSS is necessary. In the following section, we identify two major factors that play an important role in controlling the PSS through our empirical study.

\CUT{
\begin{table}
\centering
\begin{tabular}{|c|c|c|c|c|c|}
\hline
&Dataset & CIFAR10 & CIFAR100 & Caltech256 & CUB200 \\
\hline
\multirow{4}{*}{AT} & Coverage &90.550 (0.511) & 90.453 (0.589) & 91.347 (0.849) &90.327 (0.889)  \\
& Set Size &3.104 (0.072) &23.790 (0.803) &43.198 (2.106) & 37.372 (2.112)\\
& Clean Acc. &89.764 (0.153) & 68.917 (0.381) & 75.276 (0.511) & 65.364 (0.267)\\
& Rob. Acc. & 50.168 (0.912) & 28.489 (1.142) & 47.527 (0.671) & 26.287 (0.447) \\
\hline
\multirow{4}{*}{TRADES} & Coverage & 90.721 (0.619) &90.349 (0.567) &  90.818 (0.807) & 90.385 (0.758)  \\
& Set Size &3.311 (0.086) & 27.598 (0.968) &44.798 (3.416) & 52.177 (2.599) \\
& Clean Acc. &87.310 (0.269) & 62.833 (0.330) & 69.566 (0.251) & 58.165 (0.385) \\
& Rob. Acc. & 53.073 (0.235)& 32.066 (0.201) & 47.067 (0.375) & 27.819 (0.230)\\
\hline
\multirow{4}{*}{MART} & Coverage & 91.605 (0.484) & 90.669 (0.826) &91.922 (0.836) & 90.218 (0.532)\\
& Set Size &3.807 (0.068) &28.374 (1.290) &46.793 (2.727) & 45.308 (1.781)\\
& Clean Acc. &85.432 (0.245) & 59.664 (0.262) & 69.684 (0.312) & 58.720 (0.182)\\
& Rob. Acc. &54.478 (0.292) & 34.041 (0.465) & 49.821 (0.325)  & 28.995 (0.246)\\
\hline
\multirow{4}{*}{MAIL} & Coverage &90.557(0.512) &  90.013 (0.452) & 89.903 (1.032) &  90.231 (0.724) \\
& Set Size &3.656(0.076) &25.141 (0.704) & 37.094 (3.197)&42.373 (2.191)  \\
& Clean Acc. & 89.612 (0.205) & 68.525 (0.230)& 73.964 (0.333) & 63.236 (0.326)\\
& Rob. Acc. & 51.283 (0.282) & 28.836 (0.132) & 44.634 (0.714) &24.401 (0.341)  \\
\hline
\end{tabular}
\caption{Result of four popular adversarial defense methods under PGD100 adversarial attack.}
\label{tab:popular_AT}
\end{table}
}
\section{Uncertainty-Reducing Adversarial Training}
\label{sec:at_ur}
This section investigates two factors highly correlated with PSS and introduces our uncertainty-reducing adversarial training method.

\begin{figure}[t]
     \centering
     \includegraphics[width=\linewidth]{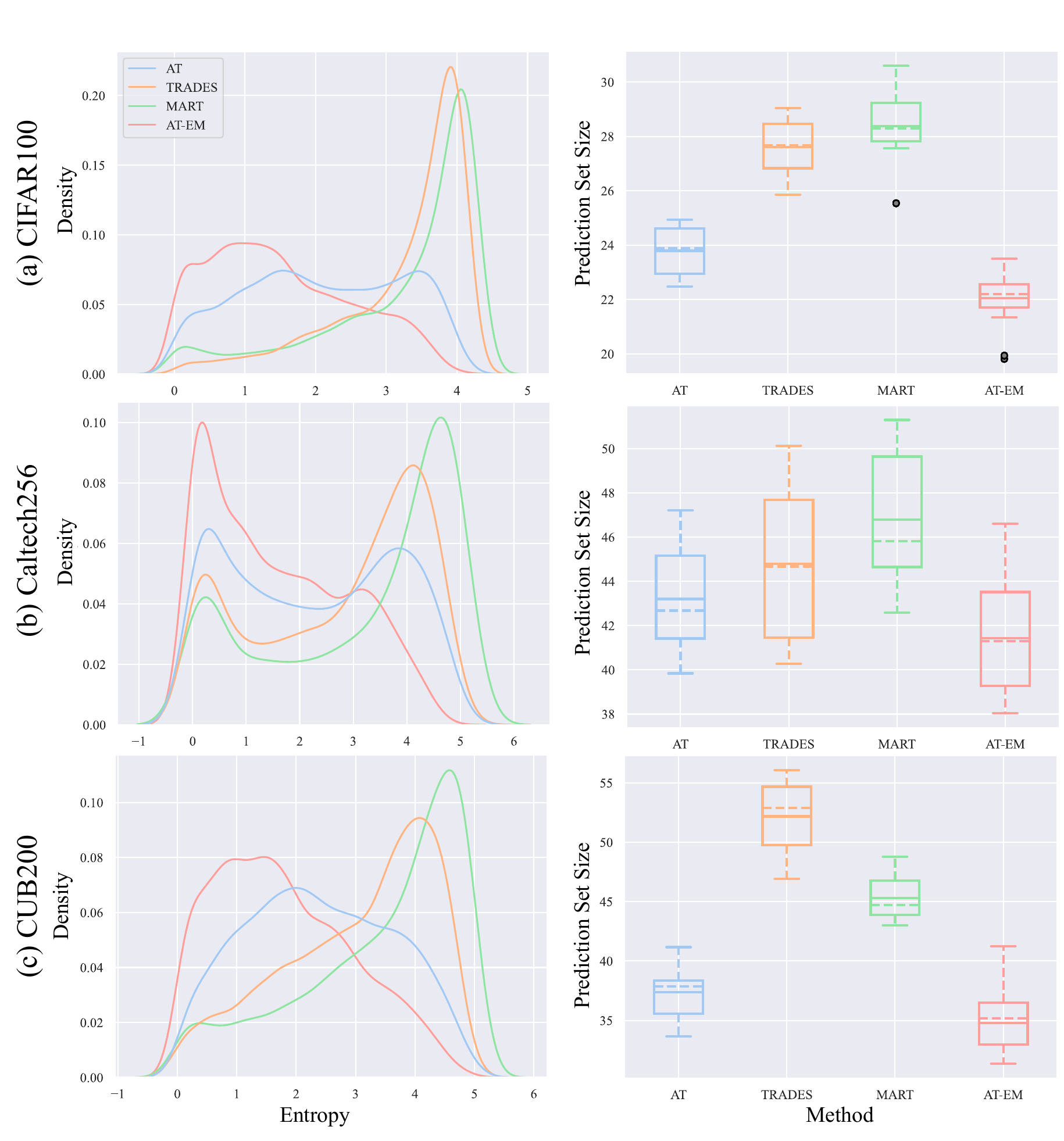}
    \caption{\textbf{(Left)}: The kernel density estimation for predictive distribution's entropy on adversarial test sets. \textbf{(Right)}: Box plot of PSS of three AT baselines and AT-EM. AT-EM effectively controls prediction entropy and improves CP-efficiency. See Tab.~\ref{tab:popular_AT} and Tab.~\ref{tab:main} for their coverages.} 
    \label{fig:all_entropy}
\end{figure}

\subsection{Entropy Minimization for CP-Efficiency}
The PSS is closely related to the entropy of prediction distribution, as both quantities reflect the prediction uncertainty of a model. A more uniform categorical distribution has higher uncertainty, which is reflected in its higher entropy. Fig.~\ref{fig:all_entropy} visualizes the kernel density estimation (KDE) \citep{rosenblatt1956remarks,parzen1962estimation} of entropy values calculated with adversarial test samples on three datasets. It is evident that TRADES and MART learn models with predictive distributions that have higher entropy values than AT, thus increasing the PSS comparatively. 

To decrease the PSS of AT, we add an entropy minimization term to the loss function, 
\begin{align}
    \small\ell_{\text{EM}}(f_{\theta}(x_i), y_i)=-\sum_j^Ky_{ij}\log(f_{\theta}(x_i)_j)+\lambda_{\text{EM}} H(f_{\theta}(x_i)),
    \label{eqn:at_em}
\end{align}
where the regularization is the entropy function $H(f_{\theta}(x_i))=-\sum_j^Kf_{\theta}(x_i)_j\log(f_{\theta}(x_i)_j)$. We set $\lambda_{\text{EM}}$=0.3 in all of our experiments based on a hyperparameter search experiment on CIFAR100, where $\lambda_{\text{EM}}\in\{0.1,0.3,1.0,3.0\}$. The AT scheme with entropy minimization (EM) is denoted as AT-EM. This entropy term is the same as the entropy minimization in semi-supervised learning \citep{grandvalet2004semi}. However, note that our work is the first to use entropy minimization in adversarial training for improving CP-efficiency. Fig.~\ref{fig:all_entropy} also shows the KDE of entropy values on adversarial test sets using AT-EM. The reduction in predictive entropy effectively leads to a substantial decrease in the PSS of AT-EM.

The second factor that affects PSS is the distribution of True Class Probability Ranking (TCPR) on the test dataset. The TCPR is defined as the ranking of a sample $x$'s ground-truth class probability among the whole predictive probability. In equation, we sort $\pi$ with the descending order into $\hat\pi$,
\begin{align}
    \hat\pi = \{\pi_{(1)},\cdots,\pi_{(K)}\},
\end{align}
where $\pi_{(j)}\geq\pi_{(j+1)},\forall j=1,\cdots,K-1$, and $(j)$ is the sorted index. TCPR is the index $j$ in $\hat\pi$ corresponding to the ground-truth label $y$, i.e., $Sort(y)=j$. 

The TCPR matters to the PSS as we observe that a model with higher robust accuracy does not necessarily have a smaller PSS as shown in Tab.~\ref{tab:popular_AT}. This discovery indicates that improving Top-1 accuracy, i.e., the percentage of samples with TCPR=1, is not enough to learn a CP-efficient model. In particular, the model capacity might be not strong enough to fit all the adversarial training data or achieve 100\% adversarial training accuracy as a result of a strong adversary and high task complexity, e.g., a large number of classes. For instance, on CIFAR100, the robust accuracy on training data of a pre-trained ResNet50 is only around 45\% after 60 epochs of fine-tuning. 

\CUT{
\begin{figure*}[t]
     \centering
     \includegraphics[width=\textwidth]{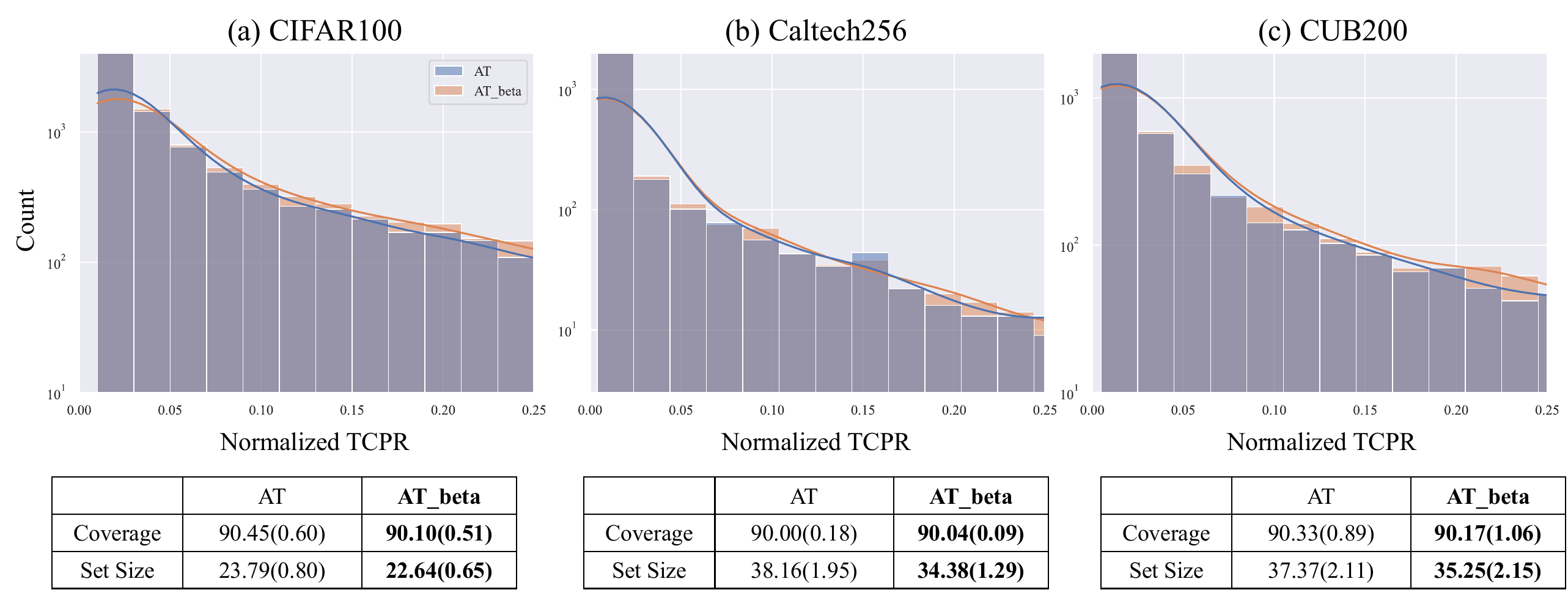}
    \caption{\textbf{Top}: The histogram and kernel density estimation of normalized TCPR on adversarial test sets. \textbf{Bottom}: The coverage and PSS of AT and AT-Beta. AT-Beta pushes the TCPR distribution towards the promising region and improves CP-efficiency.}
    \label{fig:TCPR_hist}
\end{figure*}
}

Motivated by this observation, we propose to use a Beta distribution density function (Fig.~\ref{fig:beta_function}) to weight the loss samples so that the TCPR distribution shifts towards the lower TCPR region. This design embodies our intuition that the training should focus on samples with \emph{promising} TCPR's, whose TCPR's are neither 1 nor too large, because TCPR=1 means the sample is correctly classified and a large TCPR means the sample is an outlier and probably hopeless to learn. Those samples with promising TCPR's are important to control PSS as they are the \emph{majority} of the dataset and thus largely affect the averaged PSS, see Fig.~\ref{fig:tcpr_percentage} for the percentage of promising samples throughout AT training on CIFAR100.

With the previous intuition, we propose an importance weighting scheme based on the Beta distribution density function of TCPR to learn a CP-efficient model. Let the TCPR of sample $\tilde x_i$ be $r_i\in[K]$ and the normalized TCPR be $\hat r_i\in(0,1]$. Note that in our implementation we use the index starting from 0 instead of 1, so $\hat r_i\in[0,1)$ in practice. We use the Beta distribution density function, e.g., Fig.~\ref{fig:beta_function}, to give an importance weight to sample $\tilde x_i$. We use the Beta distribution density up-shifted by 1
\begin{align}
\small
\tilde p_\Beta(z; a, b) &= 1+p_\Beta(z; a, b)\nonumber\\
\small&=1+\frac{ \Gamma(a+b) }{ \Gamma(a) \Gamma(b) } \cdot (z)^{a-1} \cdot (1-z)^{b-1},
\label{eqn:beta}
\end{align}
where $\Gamma(a)$ is the Gamma function. We use the add-1 Beta function $\tilde p_{\text{Beta}}$ for stable optimization and better performance based on our pilot study. To enforce the model to focus on samples with promising TCPR's, we use the Beta distribution with $a=1.1$ and $b\in\{3.0,4.0,5.0\}$. When $a=1.1$ and $b=5.0$, we have the Beta weighting function shown in Fig.~\ref{fig:beta_function}. The objective function of Beta-weighting AT is
\begin{align}
    \ell_{\text{Beta}}(f_{\theta}(x_i), y_i)=-\tilde 
 p_\Beta(\hat r_i; a, b)\cdot \sum_j^Ky_{ij}\log(f_{\theta}(x_i)_j)
\end{align}
We name this Beta distribution based importance weighting scheme in AT as AT-Beta. 

In summary, the proposed AT-UR consists of two methods, AT-Beta 
 and AT-EM. It also contains the combination of the two methods, i.e., 
 \small
\begin{align}
    \ell_{\text{Beta-EM}}(f_{\theta}(x_i), y_i)&=-\tilde p_\Beta(\hat r_i; a, b)\cdot \sum_j^Ky_{ij}\log(f_{\theta}(x_i)_j)\nonumber\\
    &+\lambda_{\text{EM}} H(f_{\theta}(x_i)),
\end{align}
\normalsize
denoted as AT-Beta-EM. We test the three variants of AT-UR in our experiment and observe that different image classification tasks need different versions of AT-UR. 
\subsection{Theoretical Analysis on Beta Weighting}
The previous subsection introduces the intuition behind the proposed AT-UR. This section gives the theoretical analysis on the Beta weighting, which shows a theoretical connection between Beta weighting and the PSS. We drop the subscript $\theta$ for $f_{\theta}$ to lighten the notation. Note that we leave the full proof in Appendix~\ref{app_sec:proof_thm1}.

\noindent {\bf Importance Weighting (IW) Algorithm.} 
IW assigns importance weight $\omega(x, y)$ to each sample $(x, y) \in \calD_\tr$ such that $\omega(x, y)$ is directly determined by TCPR $\hat r$.
Analogous to the empirical risk $\widehat R(f)$, we define the {\it IW empirical risk} with weights $\omega(x, y)$ for $f$ as follows
\begin{align}\label{eq:empirical_IW_risk}
\widehat R_\omega (f)
=
\frac{1}{m} \sum_{i=1}^m \omega(x_i, y_i) \cdot \ell(f(x_i), y_i) .
\end{align}
It is worth noting that restricting $\omega(x_i, y_i) = 1$ as a special case for all data samples reduces $\widehat R_\omega(f)$ to $\widehat R(f)$.

\begin{figure}
    \begin{center} 
    \includegraphics[width=0.43\textwidth]{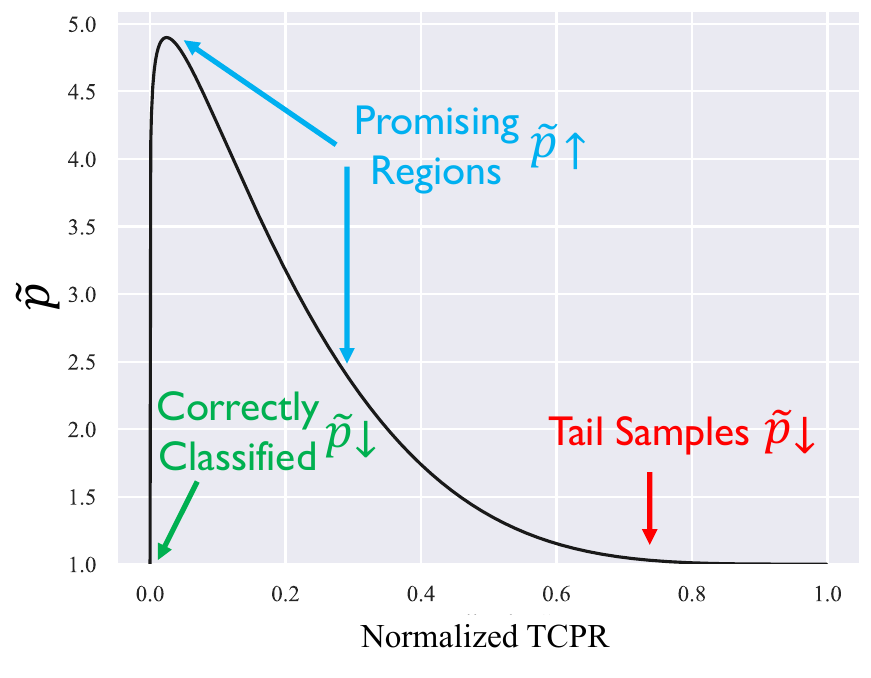}
    \caption{The Beta distribution density function $\tilde p_{\text{Beta}}$ used in our experiment. This weighting scheme increases the importance of samples in the promising region. }
    \label{fig:beta_function}
    \end{center}
\end{figure}
\subsection{Beta Weighting for CP-Efficiency}
\label{sec:tcpr}




\begin{table*}[t]
\centering
\resizebox{\linewidth}{!}{
\begin{tabular}{|c|cc|cc|cc|cc|}
\hline
     Dataset   & \multicolumn{2}{c|}{CIFAR10}     & \multicolumn{2}{c|}{CIFAR100}    & \multicolumn{2}{c|}{Caltech256}  & \multicolumn{2}{c|}{CUB200}      \\ \hline 
     Metric   & \multicolumn{1}{c|}{Cvg} & PSS & \multicolumn{1}{c|}{Cvg} & PSS & \multicolumn{1}{c|}{Cvg} & PSS & \multicolumn{1}{c|}{Cvg} & PSS \\ \hline
AT      & \multicolumn{1}{c|}{93.25(0.45)}     &   2.54(0.04)   & \multicolumn{1}{c|}{91.99 (0.61) }     &  14.29(0.59)    & \multicolumn{1}{c|}{94.35(0.81) }     &  23.73(1.68)   & \multicolumn{1}{c|}{91.87(0.90) }     &   17.75(0.71)  \\ 
AT-EM$^*$   & \multicolumn{1}{c|}{92.36(0.53) }     &  \textbf{2.45(0.04)}    & \multicolumn{1}{c|}{91.87(0.61) }     & 13.29(0.49)     & \multicolumn{1}{c|}{93.41(0.58) }     &  21.19(1.46)    & \multicolumn{1}{c|}{91.26(0.57) }     &   \textbf{16.47(0.61)}  \\ 
AT-Beta$^*$ & \multicolumn{1}{c|}{91.96(0.39)  }     &   2.50(0.04)   & \multicolumn{1}{c|}{ 91.24(0.69)  }     &  \textbf{11.61(0.40)}    & \multicolumn{1}{c|}{93.52(0.73) }     &   \textbf{18.54(1.32)}   & \multicolumn{1}{c|}{91.37(0.75)}     &   16.56(0.76)  \\ 
AT-Beta-EM$^*$   & \multicolumn{1}{c|}{92.06(0.44)}     &  2.50(0.04)    & \multicolumn{1}{c|}{ 91.13(0.63) }     & 11.78(0.47)    & \multicolumn{1}{c|}{93.50(0.69)}     &   18.56(1.32)   & \multicolumn{1}{c|}{91.93(0.68)}     & 16.67(0.58)   \\ 
\hline
FAT      & \multicolumn{1}{c|}{93.01(0.53)}     &   2.55(0.05)   & \multicolumn{1}{c|}{92.04(0.60)  }     &  13.60(0.37)    & \multicolumn{1}{c|}{93.88(0.45) }     &  22.85(0.97)   & \multicolumn{1}{c|}{91.37(0.80) }     &   17.21(0.81) \\ 
FAT-EM$^*$   & \multicolumn{1}{c|}{92.71(0.66)  }     &  2.49(0.05)    & \multicolumn{1}{c|}{91.63(0.91) }     & 14.43(2.12)    & \multicolumn{1}{c|}{93.52(0.67)  }     &  21.81(1.28)   & \multicolumn{1}{c|}{91.58(1.25) }     &   16.60(1.05)  \\ 
FAT-Beta$^*$ & \multicolumn{1}{c|}{92.08(0.54) }     &   2.55(0.04)   & \multicolumn{1}{c|}{ 90.82(0.49)  }     &  11.10(0.25)    & \multicolumn{1}{c|}{93.49(0.82) }     &   \textbf{18.07(1.30)}   & \multicolumn{1}{c|}{91.10(0.61)}     &   \textbf{16.14(0.54)}  \\ 
FAT-Beta-EM$^*$   & \multicolumn{1}{c|}{92.28(0.30) }     &  \textbf{2.43(0.02)}    & \multicolumn{1}{c|}{ 91.18(0.55) }     & \textbf{11.09(0.25)}    & \multicolumn{1}{c|}{93.55(0.56) }     &   18.16(1.04)  & \multicolumn{1}{c|}{91.40(0.62) }     & 16.39(0.49)  \\ 
\hline
TRADES      & \multicolumn{1}{c|}{93.01(0.46) }     &   2.49(0.03)   & \multicolumn{1}{c|}{91.75(0.72)  }     &  12.22(0.50)   & \multicolumn{1}{c|}{94.33(0.49)  }     &  22.82(1.37)   & \multicolumn{1}{c|}{92.03(0.59) }     &   22.29(0.96) \\ 
TRADES-EM$^*$   & \multicolumn{1}{c|}{92.22(0.19)  }     &  \textbf{2.33(0.01)}    & \multicolumn{1}{c|}{91.43(0.77) }     & 12.39(0.54)   & \multicolumn{1}{c|}{93.49(0.65) }     &   \textbf{15.03(1.32)}   & \multicolumn{1}{c|}{91.52(0.82)}     &   \textbf{17.80(2.37)}  \\ 
TRADES-Beta$^*$ & \multicolumn{1}{c|}{92.43(0.47) }     &   2.44(0.04)  & \multicolumn{1}{c|}{ 91.20(0.72) }     &  \textbf{10.76(0.44)}    & \multicolumn{1}{c|}{93.55(0.48)  }     &  17.02(1.04)   & \multicolumn{1}{c|}{90.91(0.89)}     &   18.19(0.99) \\ 
TRADES-Beta-EM$^*$   & \multicolumn{1}{c|}{92.12(0.36)  }     &  2.42(0.02)  & \multicolumn{1}{c|}{  91.22(0.62) }     & 11.09(0.38)    & \multicolumn{1}{c|}{93.13(0.33) }     &   17.27(0.72)  & \multicolumn{1}{c|}{90.96(0.93)}     & 18.28(1.22)   \\ 
\hline
\end{tabular}
}
\caption{Comparison of AT baselines and the proposed AT-UR variants denoted with $^*$, under the AutoAttack \cite{croce2020reliable}. The average coverage (Cvg) and Prediction Set Size (PSS) are presented, along with the standard deviation in parentheses. The most CP-efficient method is highlighted in bold. The result of using PGD100 attacks is in Tab.~\ref{tab:main}. }
\label{tab:autoattack}
\end{table*}

We design a group-wise IW approach that groups data into $K$ disjoint subsets according to their TCPR's, and assigns the same weight to a group of data.
For a sample $(x, y)$, the importance weight is $\omega(x, y) = \tilde p_\Beta(\hat r(x, y); a, b)$. The following theorem proves that the expectation of $\ell_{\text{Beta}}$ is an upper bound for the expectation of PSS, which indicates that optimizing $\ell_{\text{Beta}}$ is theoretically beneficial to reducing PSS and CP-efficiency. 
\begin{theorem}
\label{theorem:cost_sensitive_learning_bound_for_CP_PSS_main}
(Learning bound for the expected size of CP prediction sets)
Let
$L_\Beta(f) := \sum_{k=1}^K \sigma_k \cdot \E[ \ell(f(X), Y) | r_f(X,Y) = k ]$, where $\sigma_k \sim p_\Beta(k/(K+1); a, b)$ with $a=1.1, b=5$.
We have the following inequality
\begin{align*}
\E_X [ | \calC_f(X) | ]
\leq L_\Beta(f),
\end{align*}
where $| \calC_f(X) |$ is the cardinality of the prediction set $\calC_f(X)$ for a classifier $f$ with input $X$ and $r_f(X,Y)$ is TCPR of $(X,Y)$ in the classifier $f$.
\end{theorem}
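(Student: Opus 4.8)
The plan is to use the explicit combinatorial description of the prediction set from Appendix~\ref{app_sec:aps} (the adaptive/APS set), decompose the expected set size according to the true-class rank $r_f(X,Y)$, and dominate the contribution of each rank group by the corresponding conditional cross-entropy; the Beta coefficients $\sigma_k$ should emerge as the slack in these term-by-term estimates. Throughout I would fix the population-optimal threshold $\tau=\tau_{1-\alpha}$ that exactly attains coverage $1-\alpha$, so that $\calC_f(x)$ is (up to the randomization term) the block of top-ranked classes $(1),\dots,(L)$ with $L=|\calC_f(x)|$ the smallest index for which the sorted cumulative mass first reaches $\tau$.

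First I would establish a pointwise bound of the set size by a sum of ``order-statistic losses.'' Writing $\pi=f(x)$, sorting $\pi_{(1)}\ge\cdots\ge\pi_{(K)}$ and setting $\ell^{(j)}(x):=-\log\pi_{(j)}(x)$, the definition of $L$ gives $\sum_{j=1}^{L-1}\pi_{(j)}(x)<\tau\le 1$, so convexity of $-\log$ (Jensen) forces the average of $\ell^{(1)}(x),\dots,\ell^{(L-1)}(x)$ to be at least $\log\frac{L-1}{\tau}$. Hence either $L$ is bounded by an absolute constant $c_0=c_0(\alpha)$ (when $L-1\le e\tau$), or $|\calC_f(x)|-1\le\sum_{j=1}^{L-1}\ell^{(j)}(x)$, giving in all cases
\[
|\calC_f(x)|\ \le\ c_0+\sum_{j=1}^{K}\ell^{(j)}(x)\,\I\!\left[(j)_x\in\calC_f(x)\right].
\]
Moreover, since the $j-1$ classes ranked above $(j)_x$ each carry mass at least $\pi_{(j)}(x)$, membership $(j)_x\in\calC_f(x)$ forces $\pi_{(j)}(x)\le\tau/(j-1)$, i.e.\ $\ell^{(j)}(x)\ge\log\frac{j-1}{\tau}$; this is what converts each indicator into a $1/\log$-type weight through Markov's inequality.

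Next I would take expectations, insert the label (as $\calC_f(X)$ depends on $X$ only, $\E_X[|\calC_f(X)|]=\E_{X,Y}[|\calC_f(X)|]$), and reindex the resulting double sum $\sum_j\E\!\left[\ell^{(j)}(X)\,\I[(j)_X\in\calC_f(X)]\right]$ by the true-class rank, using the order-statistic monotonicity $\ell^{(j)}(X)\le\ell^{(k)}(X)$ for $k\ge j$ together with the identity $\ell^{(k)}(X)=\ell(f(X),Y)$ on $\{r_f(X,Y)=k\}$ to charge each term to a group-conditional loss $\E[\ell(f(X),Y)\mid r_f(X,Y)=k]$. Combining this with the Markov bound from the previous step collapses the estimate to $c_0+\sum_{k=1}^K w_k\,\E[\ell(f(X),Y)\mid r_f(X,Y)=k]$ for explicit coefficients $w_k$ built from the factors $1/\log\frac{j-1}{\tau}$ and the count of rank positions charged to group $k$. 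The proof then closes by verifying $w_k\le\sigma_k=p_{\Beta}(k/(K+1);1.1,5)$ uniformly in $k$, $K$ and $\tau\in(0,1)$, which is precisely where the shape of the chosen Beta density—heavy near the left endpoint ($z^{a-1}=z^{0.1}$) and decaying like $(1-z)^4$—is used, and where $c_0$ is absorbed into the first (largest) weights.

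I expect the matching step to be the main obstacle: the APS geometry naturally produces \emph{other classes'} log-probabilities $\ell^{(j)}(X)$ for the ranks present in a large prediction set, whereas $L_\Beta(f)$ only involves the \emph{true-class} conditional losses, so the bookkeeping that reassigns each rank position $j$ to the data sub-population whose true label has rank $j$ (or higher), while keeping the reassigned coefficients under the Beta profile, is delicate—in particular controlling the randomization $u$ uniformly and ruling out the regime where $\tau$ is so close to $1$ that $\log\frac{j-1}{\tau}$ fails to dominate. Verifying the uniform inequality $w_k\le\sigma_k$, which is what pins down the specific choice $a=1.1,\ b=5$, is the other place where genuine computation rather than routine manipulation is required.
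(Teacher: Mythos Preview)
Your approach is genuinely different from the paper's, and the gap you flag at the end is real and, as written, fatal. The problematic step is the ``reassignment'' of the order-statistic losses $\ell^{(j)}(X)$ to the group-conditional losses $\E[\ell(f(X),Y)\mid r_f(X,Y)=k]$. On the event $\{r_f(X,Y)=k\}$ you have $\ell(f(X),Y)=\ell^{(k)}(X)$, so monotonicity $\ell^{(j)}\le \ell^{(k)}$ does let you charge the terms with $j\le k$ to $\bar\ell_k$. But your pointwise bound $|\calC_f(x)|\le c_0+\sum_{j\le L}\ell^{(j)}(x)$ also contains all ranks $j$ with $k<j\le L=|\calC_f(x)|$, and for those $\ell^{(j)}(X)\ge \ell^{(k)}(X)$; there is no a~priori way to dominate them by the true-class loss. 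The Markov device $\I[(j)\in\calC_f(X)]\le \ell^{(j)}(X)/\log\frac{j-1}{\tau}$ converts an indicator into a loss, but it is still the \emph{wrong} loss (that of a non-true class), so it does not help you reach $L_\Beta(f)$. Without a distributional hypothesis linking the tail order statistics to the true-class behaviour, this reassignment cannot be closed, and no amount of tuning the Beta profile $(a,b)$ fixes it.

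The paper sidesteps exactly this difficulty by never introducing the non-true-class losses. Its route is: (i) decompose $\E_X[|\calC_f(X)|]=\sum_y \P[y\in\calC_f(X)]$ by comparing $r_f(X,Y)$ with $r_f(X,y)$ and use the population coverage bound $\P[V(X,Y)\le\tau_{1-\alpha}]\le 1-\alpha+1/n$ to obtain, up to an additive constant, $\E_X[|\calC_f(X)|]\lesssim \sum_{k\le K^*} k\cdot \P[r_f(X,Y)=k]$ --- the partial expected rank; (ii) convert rank to loss \emph{termwise} via two explicit assumptions that are only checked empirically, namely $\bar\ell_k:=\E[\ell(f(X),Y)\mid r_f(X,Y)=k]\ge k/\gamma$ and $p_k:=\P[r_f(X,Y)=k]\le\xi\,(1-\tfrac{k}{K+1})^{b-1}$, which immediately give $k\,p_k\le C\,p_\Beta(\tfrac{k}{K+1};a,b)\,\bar\ell_k$. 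So the Beta shape does not ``emerge'' from the APS geometry as you hope; it is imposed as an assumption on the rank distribution $p_k$, and the bound is really $\sigma_k\propto p_\Beta(k/(K+1);a,b)$ up to the constants $\gamma,\xi$ (and with the additive $K(1-\alpha)$ dropped). If you want to repair your argument, you will need hypotheses of comparable strength; a purely analytic route from the APS definition to the Beta-weighted loss, with no distributional input, is not available.
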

{\bf Remark.} This theorem corroborates our intuition in the previous subsection that optimizing samples with moderate PSS with high importance may lead to the improvement of CP-efficiency. As far as we know, the main theorem is one of the first to build a connection between importance weighting and PSS in conformal prediction. The next section presents our empirical results on various datasets, which further confirm the effectiveness of the proposed AT-UR. See Appendix~\ref{app_sec:proof_thm1} for the full proof. Note that this bound also holds for other $(a,b)$'s with a different constant.

\CUT{
\begin{lemma}
\label{lemma:generalization_bound_iw}
(Generalization error bound of IW empirical risk, Theorem 1 in \cite{cortes2010learning})
Let $M = \sup_{(x, y) \in \calX \times \calY} \omega(x, y)$ denote the infinity norm of $\omega$ on the domain.
For given $f \in \calF$ and $\delta > 0$,
with probability at least $1-\delta$, the following bound holds:
\begin{equation}\label{eq:learning_bound_IW}
R(f) - \widehat R_\omega(f)
\leq
\frac{ 2M \log(1/\delta) }{ 3m } + \sqrt{ \frac{ 2 d_2(\calP || \frac{\calP}{\omega}) \log(1/\delta) }{ m } } ,
\end{equation}
where $d_{2}(\calP \left| \right|  \mathcal{Q})$ = $\int_x \calP(x) \cdot \frac{ \calP(x) }{ \mathcal{Q}(x) } dx$ is the base-$2$ exponential for R\'enyi divergence of order $2$ between two distributions $\calP$ and $\mathcal{Q}$ and $m$ is the number of training samples.
\end{lemma}
\begin{theorem}
\label{theorem:improved_generalization_bound_IW}
(Beta weighting preserves generalization error bound.)
\NOTE{Probably need to revise this?} Suppose $\P_{(x, y) \sim \calP}\{ \hat r(x, y) = k \} = \frac{ k^{-c} }{ \sum_{k'=1}^K (k')^{-c}}$ is a polynomially decaying function with $c = \max\{ K^{-\alpha}, \frac{b \ln(a) + 1 }{ \ln(K) } + 2 - \alpha \}$ for $\alpha \geq 0$.
Beta weighting improves generalization error bound compared with ERM.
\end{theorem}
{\bf Remark.}
Theorem \ref{theorem:improved_generalization_bound_IW} shows that the Beta weighting approach guarantees improved generalization error bound, which is beneficial to ensure the desirable accuracy for prediction.
Meanwhile, the Beta-based IW strategy focuses on penalizing the data samples whose PSS is moderately large (e.g., 10-20 labels included out of 100+ class labels, see experiments).
}




\section{Experiment}
\label{sec:experiment}
We first give the details of our experimental setting and then present the main empirical result. 
\subsection{Experimental Setting}
\label{sec:experiment:setting}
\textbf{Model.} We use the adversarially pre-trained ResNet50 \citep{he2016deep,salman2020adversarially} with $l_{\infty}$ norm and an attack budget $\epsilon_{pt}=4/255$  in all experiment of our paper. The reason is that, besides testing on CIFAR10/100, we also test on more challenging datasets such as Caltech256 and CUB200, on which an adversarially pre-trained model is shown to be much more robust than random initialized weights \citep{liu2023twins}.

\textbf{Dataset.} Four datasets are used to evaluate our method, i.e., CIFAR10, CIFAR100 \citep{krizhevsky2009learning}, Caltech-256 \citep{griffin2007caltech} and Caltech-UCSD Birds-200-2011 (CUB200) \citep{wah2011caltech}. CIFAR10 and CIFAR100 contain low-resolution images of 10 and 100 classes, where the training and validation sets have 50,000 and 10,000 images respectively. Caltech-256 has 30,607 high-resolution images and 257 classes, which are split into a training and a validation set using a 9:1 ratio. CUB200 also contains high-resolution bird images for fine-grained image classification, with 200 classes, 5,994 training images and 5,794 validation images.

\textbf{Training and Adversarial Attack.} In all adversarial training of this paper, we generate adversarial perturbations using PGD attack. The PGD attack has 10 steps, with stepsize $\lambda=2/255$ and attack budget  $\epsilon=8/255$. The batch size is set as 128 and the training epoch is 60. We divide the learning rate by 0.1 at the 30th and 50th epoch. We use the strong AutoAttack \citep{croce2020reliable} with $\epsilon=8/255$ in Tab.~\ref{tab:autoattack}. We use the PGD attack with 100 steps for all other results in this paper. The stepsize and attack budget in PGD100 is the same as in adversarial training, i.e., $\lambda=2/255$ and $\epsilon=8/255$. See more training details in Appendix~\ref{app_sec:exp_details}.

\textbf{Conformal Prediction Setting.} We fix the training set in our experiment and randomly split the original test set into calibration and test set with a ratio of 1:4 for conformal prediction. For each AT method, we repeat the training for three trials with three different seeds and repeat the calibration-test splits five times, which produces 15 trials for our evaluation. The mean and standard deviation of coverage and PSS of 15 trials are reported. If not specified, we use APS \citep{romano2020classification} as the CP method in our paper as the performance of APS is more stable than RAPS, as shown in Fig.~\ref{fig:pitfalls_std_model}. The target coverage is set as 90\% following existing literature in CP \citep{romano2020classification,angelopoulos2020uncertainty,ghosh2023probabilistically}. We use the same adversarial attack setting as in \cite{gendler2021adversarially}, i.e., both calibration and test samples are attacked with the same adversary. We discuss the limitations of this setting in the conclusion section.

\textbf{Baselines.} We use AT \citep{madry2018towards}, Fair-AT (FAT) \citep{xu2021robust} and TRADES \citep{zhang2019theoretically} as the baseline and test the performance of the two proposed uncertainty-reducing methods with the three baselines. AT and TRADES are the most popular adversarial training methods and FAT reduces the robustness variance among classes, which could reduce the PSS, which is validated by our experiment. Note that we only report the performance of CP, i.e., coverage and PSS, in the main paper as the main target of our paper is to improve CP efficiency. 

\begin{figure*}[t]
     \centering
     \includegraphics[width=1.0\textwidth]{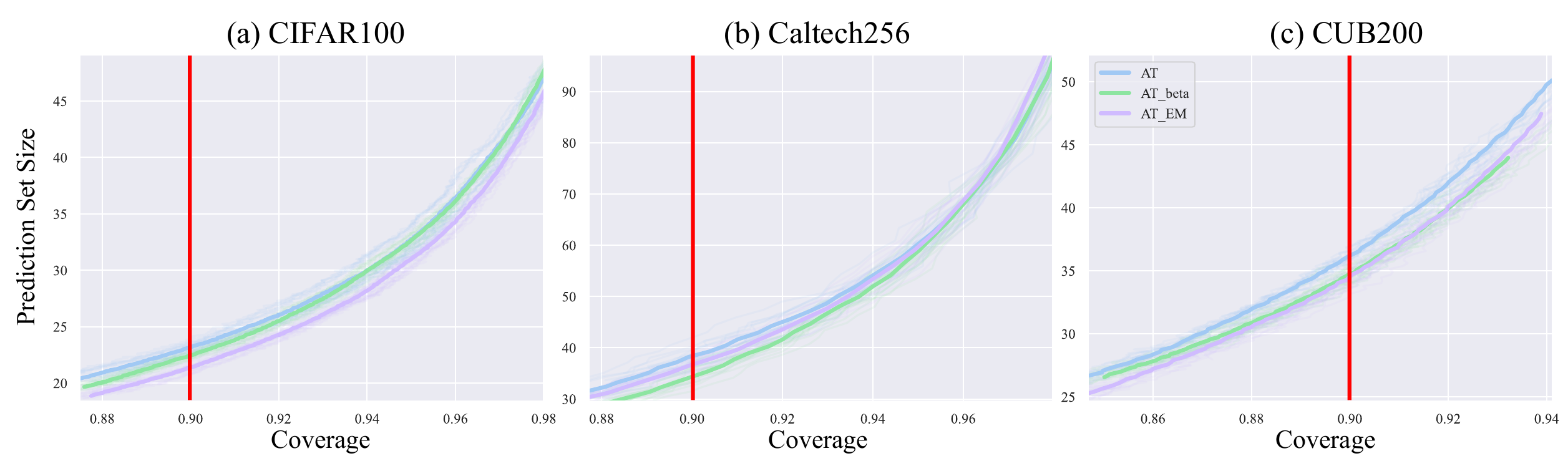}
    \caption{The CP curve of coverage versus PSS. Each point on the curve is obtained by adjusting the threshold $\hat\tau_{\text{cal}}$. We plot 15 CP curves (opaque line) and their average (solid line) for each method. The red vertical line indicates the operating point for 90\% coverage. We visualize the curve with the appropriate y-scale so that the difference is better visualized. }
    \label{fig:cp_curve}
\end{figure*}

\begin{table*}[]
\centering
\centering (1) AT\\
\begin{tabular}{c|c|c|c|c}
\hline
    & AT           & AT-EM        & AT-Beta      & AT-Beta-EM  \\ \hline
Cvg & 90.49(0.65) & 90.19(0.80) & 90.26(0.38) & 90.44(0.61)  \\ \hline
PSS &    23.82(0.94)          &     21.79(1.00)        &       22.85(0.52)      &   22.55(0.88)  \\         \hline           
\end{tabular}
\\
\centering (2) FAT\\
\begin{tabular}{c|c|c|c|c}
\hline
    & FAT          & FAT-EM       & FAT-Beta                                                    & FAT-Beta-EM \\ \hline
Cvg & 90.68(0.56) & 90.13(0.84) & 90.28(0.29) & 90.39(0.72) \\ \hline
PSS &     23.87(0.75)        &       23.72(3.03)   &      23.27(0.40)      &   22.27(0.95)  \\         \hline           
\end{tabular}
\\
\centering (3) TRADES\\
\begin{tabular}{c|c|c|c|c}
\hline
    & TRADES       & TRADES-EM    & TRADES-Beta  & TRADES-Beta-EM \\ \hline
Cvg & 90.42(0.67) & 90.36(0.70) & 90.36(0.68) & 90.13(0.77) \\ \hline
PSS &     27.67(1.08)         &     26.75(1.00)         &     27.56(1.17)      &  26.14(1.16)   \\         \hline           
\end{tabular}
\caption{Coverage and PSS under the uncertainty-aware attack on CIFAR100.}
\label{tab:uncertainty_attack}
\end{table*}

\subsection{Experimental Results}
\textbf{Efficacy of AT-UR in reducing PSS.} The coverage and PSS of all tested methods under the AutoAttack are shown in Tab.~\ref{tab:autoattack}. The proposed AT-UR methods effectively reduce the PSS when combined with the three AT baselines on four datasets, validating our intuition on the connection between the two factors, i.e., predictive entropy and TCPR, and PSS. More importantly, the result is also consistent with our finding in Theorem \ref{theorem:cost_sensitive_learning_bound_for_CP_PSS_main}. There are two phenomena worth noting. First, the Beta weighting generally works better than EM when using AT and FAT, with Beta+EM potentially improving the CP-efficiency in some cases. Second, when using TRADES, EM is more promising than Beta weighting (e.g., EM is better than the other two on three out of four datasets). Thus, we recommend that for AT and FAT, using Beta or Beta-EM is the first choice if one needs to train an adversarially robust and also CP-efficient model, while for TRADES, it is more reasonable to first try EM.
Note that although the Top-1 accuracy of our method (Appendix~\ref{app_sec:exp_results}) is decreased compared to baselines, the main target of our method is to improve CP efficiency as we use the conformal prediction instead of the Top-1 prediction. Tab.~\ref{tab:main_normalized} shows the normalized PSS result to mitigate the influence of different K's on the comparison. The coverage and PSS on clean images are reported in the Appendix~\ref{app_sec:exp_results}. Our AT-UR is effective at improving the CP-efficiency on clean images as well.

\textbf{Coverage-PSS curve visualization.} To visualize the effect of AT-UR more comprehensively, we plot the CP curve by adjusting the threshold $\hat \tau_{\text{cal}}$ to get different points on the curve of coverage versus PSS. Fig.~\ref{fig:cp_curve} shows the CP curve of AT, AT-Beta and AT-EM on three datasets. It demonstrates that AT-UR achieves a reduced PSS compared to the AT baseline, not only at 90\% coverage, but also over a wide range of coverage values.
\subsection{Detailed Empirical Analysis}

\textbf{(a) Sensitivity to hyperparameters and performance under different attack budgets. }We use different Beta-weighting hyperparameters on Caltech256. The performance is stable within a range of 
 b=(3.0, 4.0, 5.0) as shown in Tab.~\ref{tab:hyper_beta}. In addition to the $\epsilon$=8.0, we test different attack budgets $\epsilon$=4.0, 12.0, 16.0 and report the result on Caltech256 in Tab.~\ref{tab:diff_eps}. The result shows that across the attack budgets, our method is consistently better than the AT baseline.

\textbf{(b) Compare with Uncertainty-Aware training \citep{einbinder2022training}.} We train three models with vanilla AT, Conformal AT in \citep{einbinder2022training} and our AT-Beta on CIFAR100 respectively. The experiment follows the setting in the Conformal Training (See more details in Appendix~\ref{app_sec:exp_details}). AT, Conformal AT and AT-Beta have the averaged coverage and PSS of (89.82, 33.43), (90.36, 35.32) and (89.78, \textbf{30.18}) respectively, demonstrating the effectiveness of our Beta-weighting scheme over Conformal AT in the adversarial environment.

\textbf{(c) Does Focal loss improve CP-efficiency?} We consider using a power function $\hat r_i^{\eta}$ as in focal loss \citep{lin2017focal} to generate loss weights and test the CP performance of AT-Focal. We set $\eta=0.5$ based on a hyperparameter search from $\{0.1,0.5,1.0,2.0\}$. AT-Focal forces the model to focus on hard samples, contrary to our AT-Beta which focuses on promising samples. The averaged coverage and PSS of AT-Focal on CIFAR100 and Caltech256 are (90.50, 27.24) and (91.38, 48.35) respectively, which is far worse than the AT baseline of (90.45, 23.79) and (91.35, 43.20). This result corroborates that promising samples are crucial for improving CP-efficiency instead of hard samples. 

\textbf{(d) What is the difference between label smoothing and AT-EM?} The formulation of AT-EM is similar to the formulation of label smoothing \citep{muller2019does}, if we combine the log term in (\ref{eqn:at_em}). However, label smoothing and AT-EM train the model towards two different directions: the former increases the prediction entropy (by smoothing the label probabilities to be more uniform), while the latter decreases the prediction entropy. We validate this argument on Caltech256 and find that label smoothing makes the CP-efficiency much worse than the AT baseline, with an averaged coverage and PSS of (90.22, 46.39), compared to (91.35, 43.20) of AT.

\textbf{(e) Is AT-UR robust to uncertainty-aware adversarial attacks?} PGD attack and AutoAttack are both designed for reducing Top-1 accuracy instead of CP-efficiency. We design an uncertainty adversarial attack that maximizes the entropy of the predictive distribution to test the performance of our AT-UR under the uncertainty-aware adversarial attack. Tab.~\ref{tab:uncertainty_attack} shows the result of our method when combined with three AT methods on CIFAR100 using the uncertainty-aware attack in the inference stage (using the same well-trained models with the Top-1 attack). The attacker is PGD100 and all settings here are the same as in Sec.~\ref{sec:experiment:setting}. Our method is still competitive under this uncertainty-aware adversarial attack.

\CUT{
\begin{table}[t]
\centering
\resizebox{\linewidth}{!}{
\begin{tabular}{|c|cc|cc|cc|cc|}
\hline
     Dataset   & \multicolumn{2}{c|}{CIFAR10}     & \multicolumn{2}{c|}{CIFAR100}    & \multicolumn{2}{c|}{Caltech256}  & \multicolumn{2}{c|}{CUB200}      \\ \hline 
     Metric   & \multicolumn{1}{c|}{Cvg.} & P.S.S. & \multicolumn{1}{c|}{Cvg.} & P.S.S. & \multicolumn{1}{c|}{Cvg.} & P.S.S. & \multicolumn{1}{c|}{Cvg.} & P.S.S. \\ \hline
AT      & \multicolumn{1}{c|}{90.550 (0.511)}     &   3.104 (0.072)   & \multicolumn{1}{c|}{90.453 (0.589) }     &  23.790 (0.803)    & \multicolumn{1}{c|}{91.347 (0.849)}     &   43.198 (2.106)   & \multicolumn{1}{c|}{90.327 (0.889) }     &   37.372 (2.112)   \\ 
AT-EM   & \multicolumn{1}{c|}{90.388 (0.480) }     &  3.055 (0.052)    & \multicolumn{1}{c|}{90.347 (0.825) }     & 22.047 (1.024)     & \multicolumn{1}{c|}{91.092 (0.790) }     &  41.425 (2.523)    & \multicolumn{1}{c|}{90.084 (1.099) }     &   34.772 (2.603)   \\ 
AT-Beta & \multicolumn{1}{c|}{90.456 (0.515) }     &   3.112 (0.073)   & \multicolumn{1}{c|}{90.105 (0.514)  }     &  22.644 (0.649)    & \multicolumn{1}{c|}{90.201 (0.845) }     &   35.391 (2.661)   & \multicolumn{1}{c|}{90.166 (1.056)}     &   35.252 (2.149)   \\ 
AT-Beta-EM   & \multicolumn{1}{c|}{90.646 (0.620) }     &  3.100 (0.078)    & \multicolumn{1}{c|}{90.399 (0.603)}     & 22.527 (0.907)     & \multicolumn{1}{c|}{90.812 (1.001)}     &   36.174 (3.730)   & \multicolumn{1}{c|}{90.311 (0.837) }     &  33.103 (1.743)    \\ \hline
FAT   & \multicolumn{1}{c|}{90.691 (0.607)}     &   3.165 (0.072)  & \multicolumn{1}{c|}{90.412 (0.666) }     & 23.537 (0.813)     & \multicolumn{1}{c|}{90.703 (0.772)}     & 41.521 (2.426)      & \multicolumn{1}{c|}{ 90.497 (1.166)}     &  39.435 (2.881)    \\
FAT-EM  & \multicolumn{1}{c|}{ 90.543 (0.677) }     &   3.057 (0.063)  & \multicolumn{1}{c|}{89.998 (0.816)}     & 23.466 (2.708)     & \multicolumn{1}{c|}{ 90.547 (0.786) }     & 39.721 (2.493)     & \multicolumn{1}{c|}{89.889 (0.919) }     &  35.513 (2.063)     \\
FAT-Beta   & \multicolumn{1}{c|}{90.473 (0.514)}     &   3.157 (0.080)   & \multicolumn{1}{c|}{90.218 (0.472)}     &  23.155 (0.707)     & \multicolumn{1}{c|}{89.897 (0.701)}     &  34.725 (2.253)    & \multicolumn{1}{c|}{ 89.923 (0.838) }     &  35.462 (1.710)     \\
FAT-Beta-EM   & \multicolumn{1}{c|}{90.710 (0.610)}     &  3.041 (0.074)   & \multicolumn{1}{c|}{90.357 (0.502) }     & 22.281 (0.635)     & \multicolumn{1}{c|}{90.407 (0.614) }     &   33.588 (2.755)    & \multicolumn{1}{c|}{ 89.877 (0.908) }     &  34.354 (1.677)    \\
\hline
TRADES   & \multicolumn{1}{c|}{90.721 (0.619) }     & 3.311 (0.086)     & \multicolumn{1}{c|}{90.349 (0.567) }     &  27.598 (0.968)    & \multicolumn{1}{c|}{90.818 (0.807) }     &  44.798 (3.416)    & \multicolumn{1}{c|}{ 90.385 (0.758) }     &  52.177 (2.599)     \\ 
TRADES-EM   & \multicolumn{1}{c|}{-}     &  -    & \multicolumn{1}{c|}{90.362 (0.713) }     & 26.759 (1.000)     & \multicolumn{1}{c|}{ 90.684 (0.871) }     & 38.829 (3.778)     & \multicolumn{1}{c|}{ 90.048 (0.762)}     & 44.965 (2.754)      \\
TRADES-Beta   & \multicolumn{1}{c|}{90.414 (0.558) }     &   3.296 (0.093)  & \multicolumn{1}{c|}{90.138 (0.849)}     &  27.218 (1.421)    & \multicolumn{1}{c|}{90.481 (0.699) }     &  38.941 (2.743)    & \multicolumn{1}{c|}{89.834 (0.842) }     &  49.629 (2.587)    \\
TRADES-Beta-EM   & \multicolumn{1}{c|}{}     &     & \multicolumn{1}{c|}{}     &      & \multicolumn{1}{c|}{}     &      & \multicolumn{1}{c|}{ }     &  \\
\hline
\end{tabular}
}
\caption{Comparison of AT baselines and three AT-UR variants. \NOTE{FAT-Beta-EM and TRADES-Beta-EM are running.} }
\label{tab:main}
\end{table}
}

\CUT{
\begin{table}[]
\centering
\begin{tabular}{|c|c|c|c|c|c|}
\hline
\multicolumn{1}{|c|}{Dataset}    &    Method    & Coverage & Pred. Set Size & Clean Acc & Robust Acc \\ \hline
\multirow{9}{*}{CIFAR10}  & AT & 89.98 (0.06) & \textbf{2.99 (0.03)} & 90.00 (0.22) & 52.38 (0.85)         \\ 
                          & TRADES & 90.02 (0.07) & 3.15 (0.03) & 87.47 (0.33) & 54.69 (0.22) \\ 
                          & MART   & 90.00 (0.01) & 3.47 (0.02) & 85.70 (0.26) & 55.93 (0.22)         \\ 
                          & MAIL   & 90.00 (0.02) & 3.47 (0.02) & 89.83 (0.25) & 52.85 (0.27)          \\ & Fair AT   & 89.98 (0.05) & 3.03 (0.04) & 90.17 (0.26) & 51.38 (0.50)        \\ &
                          Entropy Rew.   & 89.99 (0.05) & 3.04 (0.02) & 89.78 (0.21) & 51.27 (0.48)   \\ &
                          Entropy Reg.   & 90.02 (0.03) & 2.96 (0.03) & 90.17 (0.18) & 51.15 (0.39) \\ &
                          Beta-AT   & - & - & - & -        \\&
                          Beta-TRADES   & - & - & - & - \\
                          \hline
\multirow{9}{*}{CIFAR100} & AT     & 89.99 (0.06) & 23.27 (0.47) & 69.16 (0.41) & 30.18 (0.96)            \\ 
                          & TRADES & 90.00 (0.05) & 26.45 (0.26) & 62.91 (0.25) & 32.51 (0.20)  \\ 
                          & MART   & 90.01 (0.02) & 26.66 (0.28) & 59.85 (0.28) & 34.46 (0.39)          \\  
                          & MAIL   & 90.00 (0.04) & 24.47 (0.26) & 68.70 (0.26) & 29.57 (0.28)         \\ &
                          Fair AT     & 89.99 (0.02) & 23.01 (0.38) & 69.02 (0.37) & 30.52 (0.51)\\
                          &Entropy Rew.     & 90.01 (0.04) & 22.76 (0.52) & 67.95 (0.18) & 30.34 (0.76)\\
                          &Entropy Reg.     & 89.99 (0.04) & 21.69 (0.35) & 68.64 (0.50) & 29.90 (0.63)\\
                          & Beta-AT &89.99 (0.02) & 21.97 (0.22) & 68.66 (0.29) & 29.21 (0.47)\\
                          &Beta-TRADES &- & -& - & -\\
                          \hline
\multirow{9}{*}{Caltech256} & AT     &          90.01 (0.04) & 35.83 (1.78) & 75.25 (0.67) & 48.65 (0.59) \\ 
                          & TRADES & 90.01 (0.07) & 38.48 (1.23) & 69.57 (0.46) & 47.71 (0.17) \\ 
                          & MART   &  90.02 (0.06) & 35.18 (1.38) & 72.74 (0.68) & 50.30 (0.28)           \\ 
                          & MAIL   & 89.99 (0.04) & 34.10 (1.47) & 73.97 (0.51) & 46.01 (0.41)          \\
                          &Fair AT    & 90.01 (0.03) & 35.74 (1.58) & 75.24 (0.53) & 48.55 (0.37)\\
                          &Entropy Rew.    & 90.00 (0.03) & 36.02 (1.56) & 74.14 (0.61) & 48.14 (0.57)\\
                          &Entropy Reg.   & 90.00 (0.05) & 35.05 (1.75) & 74.61 (0.49) & 47.60 (0.39)\\
                          &Beta-AT    & 90.01 (0.03) & \textbf{31.29 (1.11)} & 74.74 (0.66) & 46.91 (0.52)\\
                          &Beta-TRADES   & - & - & - & -\\
                          \hline
\multirow{9}{*}{CUB200} & AT     &          90.00 (0.04) & 35.77 (0.45) & 65.49 (0.31) & 27.54 (0.44) \\ 
                          & TRADES & - & - & - & - \\ 
                          & MART   & - & - & - & -           \\ 
                          & MAIL   & - & - & - & -          \\
                          &Fair AT   & - & - & - & -\\
                          &Entropy Rew.    & - & - & - & -\\
                          &Entropy Reg.   &90.00 (0.07) & 34.14 (0.23) & 65.05 (0.15) & 27.37 (0.27)\\
                          &Beta-AT    & 90.01 (0.07) & 33.02 (0.26) & 65.01 (0.16) & 26.73 (0.27)\\
                          &Beta-TRADES   & - & - & - & -\\
                          \hline
\end{tabular}
\caption{Result of four popular adversarial defense methods starting with pre-trained weights. }
\end{table}
}

\CUT{
\begin{table}[]
\centering
\begin{tabular}{|c|l|l|l|}
\hline
\multicolumn{1}{|l|}{}    &        & Coverage & Pred. Set Size \\ \hline
\multirow{4}{*}{CIFAR10}  & AT     &          &                                \\ 
                          & TRADES &                   &            \\ 
                          & MART  &         &            \\ 
                          & MAIL   &          &              \\ \hline
\multirow{4}{*}{CIFAR100}  & AT     &          &                                \\ 
                          & TRADES &                   &            \\ 
                          & MART  &         &            \\ 
                          & MAIL   &          &              \\ \hline
\multirow{4}{*}{Caltech256}  & AT     &          &                                \\ 
                          & TRADES &                   &            \\ 
                          & MART  &         &            \\ 
                          & MAIL   &          &              \\ \hline
\multirow{4}{*}{CUB200}  & AT     &          &                                \\ 
                          & TRADES &                   &            \\ 
                          & MART  &         &            \\ 
                          & MAIL   &          &              \\ \hline
\end{tabular}
\caption{Result of four popular adversarial defense methods. }
\end{table}
}

\CUT{
\begin{table}[]
\centering
\begin{tabular}{|c|c|c|c|c|c|}
\hline
\multicolumn{1}{|l|}{}    &        & Coverage & Pred. Set Size & Clean Acc & Robust Acc \\ \hline
\multirow{8}{*}{CIFAR10}  
& AT & Finish & - & - & - \\
& AT+Beta Weight & Searching & - & - & - \\
& TRADES & Finish & - & - & - \\
& TRADES+Beta Weight & Searching & - & - & - \\
& Fair AT & Finish & - & - & -\\ 
                          & Fair AT+Beta Weight & Searching & - & - & -\\
                          & Entropy Reg. & Finish & - & - & -\\ 
                          & Entropy Reg.+Beta Weight & Searching & - & - & -\\\hline
\multirow{8}{*}{CIFAR100} 
& AT & Finish & - & - & - \\
& AT+Beta Weight & Finish & - & - & - \\
& TRADES & Finish & - & - & - \\
& TRADES+Beta Weight & Finish & - & - & - \\
& Fair AT & Finish & - & - & -\\ 
                          & Fair AT+Beta Weight & Finish & - & - & -\\
                          & Entropy Reg. & Finish & - & - & -\\ 
                          & Entropy Reg.+Beta Weight & Finish & - & - & -\\\hline
\multirow{8}{*}{Caltech256} 
& AT & Finish & - & - & - \\
& AT+Beta Weight & Finish & - & - & - \\
& TRADES & Finish & - & - & - \\
& TRADES+Beta Weight & Finish & - & - & - \\
& Fair AT & Finish & - & - & -\\ 
                          & Fair AT+Beta Weight & Finish & - & - & -\\
                          & Entropy Reg. & Finish & - & - & -\\ 
                          & Entropy Reg.+Beta Weight & Finish & - & - & -\\\hline
\multirow{8}{*}{CUB} 
& AT & 90.00 (0.04) & 35.77 (0.45) & 65.49 (0.31) & 27.54 (0.44) \\
& AT+Beta Weight & Finish & - & - & - \\
& TRADES & Finish & - & - & - \\
& TRADES+Beta Weight & More Trials & - & - & - \\
& Fair AT & Finish & - & - & -\\ 
                          & Fair AT+Beta Weight & Finish & - & - & -\\
                          & Entropy Reg. & Finish & - & - & -\\ 
                          & Entropy Reg.+Beta Weight & Finish & - & - & -\\\hline
\end{tabular}
\caption{Result of four popular adversarial defense methods starting with pre-trained weights. }
\end{table}
}

\section{Conclusion}
This paper first studies the pitfalls of CP under adversarial attacks and thus underscores the importance of AT when using CP in an adversarial environment. Then we unveil the compromised CP-efficiency of popular AT methods and propose to design uncertainty-reducing AT for CP-efficiency based on our empirical observation on two factors affecting the PSS. Our theoretical results establish the connection between PSS and Beta weighting. Our experiment validates the effectiveness of the proposed AT-UR on four datasets when combined with three AT baselines. A common limitation shared by this study and \cite{gendler2021adversarially} is the assumption that the adversarial attack is known, enabling the calibration set to be targeted by the same adversary as the test set. In future research, we will alleviate this constraint by exploring CP within an adversary-agnostic context. We will also explore the robustness of conformal prediction in large language models.

\section*{Impact Statement}
This paper investigating and improving CP-efficiency for deep learning models under adversarial attacks makes an important contribution to the reliability and safety of artificial intelligence (AI) systems. The theoretical and empirical results in this paper hold immense societal implications, particularly in high-stakes applications such as self-driving cars and medical diagnosis, advancing the positive impact of AI on society by promoting secure and reliable AI-driven advancements.

\section*{Acknowledgement}
This work was supported by a grant from the Research Grants Council of the Hong Kong Special Administrative Region, China (Project No. CityU 11215820).

\bibliography{ref}

\begin{thebibliography}{46}
\providecommand{\natexlab}[1]{#1}
\providecommand{\url}[1]{\texttt{#1}}
\expandafter\ifx\csname urlstyle\endcsname\relax
  \providecommand{\doi}[1]{doi: #1}\else
  \providecommand{\doi}{doi: \begingroup \urlstyle{rm}\Url}\fi

\bibitem[Angelopoulos et~al.(2020)Angelopoulos, Bates, Jordan, and
  Malik]{angelopoulos2020uncertainty}
Angelopoulos, A.~N., Bates, S., Jordan, M., and Malik, J.
\newblock Uncertainty sets for image classifiers using conformal prediction.
\newblock In \emph{International Conference on Learning Representations}, 2020.

\bibitem[Croce \& Hein(2020)Croce and Hein]{croce2020reliable}
Croce, F. and Hein, M.
\newblock Reliable evaluation of adversarial robustness with an ensemble of
  diverse parameter-free attacks.
\newblock In \emph{International conference on machine learning}, pp.\
  2206--2216. PMLR, 2020.

\bibitem[Cui et~al.(2021{\natexlab{a}})Cui, Liu, Wang, and
  Jia]{cui2021learnable}
Cui, J., Liu, S., Wang, L., and Jia, J.
\newblock Learnable boundary guided adversarial training.
\newblock In \emph{Proceedings of the IEEE/CVF international conference on
  computer vision}, pp.\  15721--15730, 2021{\natexlab{a}}.

\bibitem[Cui et~al.(2020)Cui, Yao, Li, Chan, and Xue]{cui2020accelerating}
Cui, Y., Yao, W., Li, Q., Chan, A.~B., and Xue, C.~J.
\newblock Accelerating monte carlo bayesian prediction via approximating
  predictive uncertainty over the simplex.
\newblock \emph{IEEE transactions on neural networks and learning systems},
  33\penalty0 (4):\penalty0 1492--1506, 2020.

\bibitem[Cui et~al.(2021{\natexlab{b}})Cui, Liu, Li, Chan, and
  Xue]{cui2021bayesian}
Cui, Y., Liu, Z., Li, Q., Chan, A.~B., and Xue, C.~J.
\newblock Bayesian nested neural networks for uncertainty calibration and
  adaptive compression.
\newblock In \emph{Proceedings of the IEEE/CVF Conference on Computer Vision
  and Pattern Recognition}, pp.\  2392--2401, 2021{\natexlab{b}}.

\bibitem[CUI et~al.(2023)CUI, Liu, Liu, Liu, Wang, Kuo, Xue, and
  Chan]{cui2023bayesmil}
CUI, Y., Liu, Z., Liu, X., Liu, X., Wang, C., Kuo, T.-W., Xue, C.~J., and Chan,
  A.~B.
\newblock Bayes-{MIL}: A new probabilistic perspective on attention-based
  multiple instance learning for whole slide images.
\newblock In \emph{The Eleventh International Conference on Learning
  Representations}, 2023.

\bibitem[Cui et~al.(2023)Cui, Mao, Liu, Li, Chan, Liu, Kuo, and
  Xue]{cui2023variational}
Cui, Y., Mao, Y., Liu, Z., Li, Q., Chan, A.~B., Liu, X., Kuo, T.-W., and Xue,
  C.~J.
\newblock Variational nested dropout.
\newblock \emph{IEEE Transactions on Pattern Analysis and Machine
  Intelligence}, 2023.

\bibitem[Einbinder et~al.(2022)Einbinder, Romano, Sesia, and
  Zhou]{einbinder2022training}
Einbinder, B.-S., Romano, Y., Sesia, M., and Zhou, Y.
\newblock Training uncertainty-aware classifiers with conformalized deep
  learning.
\newblock \emph{Advances in Neural Information Processing Systems},
  35:\penalty0 22380--22395, 2022.

\bibitem[Gal \& Ghahramani(2016)Gal and Ghahramani]{gal2016dropout}
Gal, Y. and Ghahramani, Z.
\newblock Dropout as a bayesian approximation: Representing model uncertainty
  in deep learning.
\newblock In \emph{international conference on machine learning}, pp.\
  1050--1059. PMLR, 2016.

\bibitem[Gendler et~al.(2021)Gendler, Weng, Daniel, and
  Romano]{gendler2021adversarially}
Gendler, A., Weng, T.-W., Daniel, L., and Romano, Y.
\newblock Adversarially robust conformal prediction.
\newblock In \emph{International Conference on Learning Representations}, 2021.

\bibitem[Ghosh et~al.(2023)Ghosh, Shi, Belkhouja, Yan, Doppa, and
  Jones]{ghosh2023probabilistically}
Ghosh, S., Shi, Y., Belkhouja, T., Yan, Y., Doppa, J., and Jones, B.
\newblock Probabilistically robust conformal prediction.
\newblock In \emph{Uncertainty in Artificial Intelligence}, pp.\  681--690.
  PMLR, 2023.

\bibitem[Gibbs \& Candes(2021)Gibbs and Candes]{gibbs2021adaptive}
Gibbs, I. and Candes, E.
\newblock Adaptive conformal inference under distribution shift.
\newblock \emph{Advances in Neural Information Processing Systems},
  34:\penalty0 1660--1672, 2021.

\bibitem[Grandvalet \& Bengio(2004)Grandvalet and Bengio]{grandvalet2004semi}
Grandvalet, Y. and Bengio, Y.
\newblock Semi-supervised learning by entropy minimization.
\newblock \emph{Advances in neural information processing systems}, 17, 2004.

\bibitem[Griffin et~al.(2007)Griffin, Holub, and Perona]{griffin2007caltech}
Griffin, G., Holub, A., and Perona, P.
\newblock Caltech-256 object category dataset.
\newblock 2007.

\bibitem[Guo et~al.(2017)Guo, Pleiss, Sun, and Weinberger]{guo2017calibration}
Guo, C., Pleiss, G., Sun, Y., and Weinberger, K.~Q.
\newblock On calibration of modern neural networks.
\newblock In \emph{International conference on machine learning}, pp.\
  1321--1330. PMLR, 2017.

\bibitem[Harris et~al.(2020)Harris, Millman, Van Der~Walt, Gommers, Virtanen,
  Cournapeau, Wieser, Taylor, Berg, Smith, et~al.]{harris2020array}
Harris, C.~R., Millman, K.~J., Van Der~Walt, S.~J., Gommers, R., Virtanen, P.,
  Cournapeau, D., Wieser, E., Taylor, J., Berg, S., Smith, N.~J., et~al.
\newblock Array programming with numpy.
\newblock \emph{Nature}, 585\penalty0 (7825):\penalty0 357--362, 2020.

\bibitem[He et~al.(2016)He, Zhang, Ren, and Sun]{he2016deep}
He, K., Zhang, X., Ren, S., and Sun, J.
\newblock Deep residual learning for image recognition.
\newblock In \emph{Proceedings of the IEEE conference on computer vision and
  pattern recognition}, pp.\  770--778, 2016.

\bibitem[Kendall \& Gal(2017)Kendall and Gal]{kendall2017uncertainties}
Kendall, A. and Gal, Y.
\newblock What uncertainties do we need in bayesian deep learning for computer
  vision?
\newblock \emph{Advances in neural information processing systems}, 30, 2017.

\bibitem[Kingma et~al.(2015)Kingma, Salimans, and
  Welling]{kingma2015variational}
Kingma, D.~P., Salimans, T., and Welling, M.
\newblock Variational dropout and the local reparameterization trick.
\newblock \emph{Advances in neural information processing systems}, 28, 2015.

\bibitem[Kireev et~al.(2022)Kireev, Andriushchenko, and
  Flammarion]{kireev2022effectiveness}
Kireev, K., Andriushchenko, M., and Flammarion, N.
\newblock On the effectiveness of adversarial training against common
  corruptions.
\newblock In \emph{Uncertainty in Artificial Intelligence}, pp.\  1012--1021.
  PMLR, 2022.

\bibitem[Krizhevsky et~al.(2009)Krizhevsky, Hinton,
  et~al.]{krizhevsky2009learning}
Krizhevsky, A., Hinton, G., et~al.
\newblock Learning multiple layers of features from tiny images.
\newblock 2009.

\bibitem[Lei et~al.(2018)Lei, G’Sell, Rinaldo, Tibshirani, and
  Wasserman]{lei2018distribution}
Lei, J., G’Sell, M., Rinaldo, A., Tibshirani, R.~J., and Wasserman, L.
\newblock Distribution-free predictive inference for regression.
\newblock \emph{Journal of the American Statistical Association}, 113\penalty0
  (523):\penalty0 1094--1111, 2018.

\bibitem[Lin et~al.(2017)Lin, Goyal, Girshick, He, and
  Doll{\'a}r]{lin2017focal}
Lin, T.-Y., Goyal, P., Girshick, R., He, K., and Doll{\'a}r, P.
\newblock Focal loss for dense object detection.
\newblock In \emph{Proceedings of the IEEE international conference on computer
  vision}, pp.\  2980--2988, 2017.

\bibitem[Liu et~al.(2021{\natexlab{a}})Liu, Han, Liu, Gong, Niu, Zhou,
  Sugiyama, et~al.]{liu2021probabilistic}
Liu, F., Han, B., Liu, T., Gong, C., Niu, G., Zhou, M., Sugiyama, M., et~al.
\newblock Probabilistic margins for instance reweighting in adversarial
  training.
\newblock \emph{Advances in Neural Information Processing Systems},
  34:\penalty0 23258--23269, 2021{\natexlab{a}}.

\bibitem[Liu \& Chan(2022)Liu and Chan]{liu2022boosting}
Liu, Z. and Chan, A.~B.
\newblock Boosting adversarial robustness from the perspective of effective
  margin regularization.
\newblock In \emph{British Machine Vision Conference (BMVC)}, 2022.

\bibitem[Liu et~al.(2021{\natexlab{b}})Liu, Yufei, and Chan]{liu2021improve}
Liu, Z., Yufei, C., and Chan, A.~B.
\newblock Improve generalization and robustness of neural networks via weight
  scale shifting invariant regularizations.
\newblock In \emph{ICML 2021 Workshop on Adversarial Machine Learning},
  2021{\natexlab{b}}.

\bibitem[Liu et~al.(2023)Liu, Xu, Ji, and Chan]{liu2023twins}
Liu, Z., Xu, Y., Ji, X., and Chan, A.~B.
\newblock Twins: A fine-tuning framework for improved transferability of
  adversarial robustness and generalization.
\newblock In \emph{Proceedings of the IEEE/CVF Conference on Computer Vision
  and Pattern Recognition}, pp.\  16436--16446, 2023.

\bibitem[Madry et~al.(2018)Madry, Makelov, Schmidt, Tsipras, and
  Vladu]{madry2018towards}
Madry, A., Makelov, A., Schmidt, L., Tsipras, D., and Vladu, A.
\newblock Towards deep learning models resistant to adversarial attacks.
\newblock In \emph{International Conference on Learning Representations}, 2018.

\bibitem[M{\"u}ller et~al.(2019)M{\"u}ller, Kornblith, and
  Hinton]{muller2019does}
M{\"u}ller, R., Kornblith, S., and Hinton, G.~E.
\newblock When does label smoothing help?
\newblock \emph{Advances in neural information processing systems}, 32, 2019.

\bibitem[Papadopoulos et~al.(2002)Papadopoulos, Proedrou, Vovk, and
  Gammerman]{papadopoulos2002inductive}
Papadopoulos, H., Proedrou, K., Vovk, V., and Gammerman, A.
\newblock Inductive confidence machines for regression.
\newblock In \emph{Machine Learning: ECML 2002: 13th European Conference on
  Machine Learning Helsinki, Finland, August 19--23, 2002 Proceedings 13}, pp.\
   345--356. Springer, 2002.

\bibitem[Parzen(1962)]{parzen1962estimation}
Parzen, E.
\newblock On estimation of a probability density function and mode.
\newblock \emph{The annals of mathematical statistics}, 33\penalty0
  (3):\penalty0 1065--1076, 1962.

\bibitem[Platt et~al.(1999)]{platt1999probabilistic}
Platt, J. et~al.
\newblock Probabilistic outputs for support vector machines and comparisons to
  regularized likelihood methods.
\newblock \emph{Advances in large margin classifiers}, 10\penalty0
  (3):\penalty0 61--74, 1999.

\bibitem[Qin et~al.(2019)Qin, Martens, Gowal, Krishnan, Dvijotham, Fawzi, De,
  Stanforth, and Kohli]{qin2019adversarial}
Qin, C., Martens, J., Gowal, S., Krishnan, D., Dvijotham, K., Fawzi, A., De,
  S., Stanforth, R., and Kohli, P.
\newblock Adversarial robustness through local linearization.
\newblock \emph{Advances in Neural Information Processing Systems}, 32, 2019.

\bibitem[Qin et~al.(2021)Qin, Wang, Beutel, and Chi]{qin2021improving}
Qin, Y., Wang, X., Beutel, A., and Chi, E.
\newblock Improving calibration through the relationship with adversarial
  robustness.
\newblock \emph{Advances in Neural Information Processing Systems},
  34:\penalty0 14358--14369, 2021.

\bibitem[Razzak et~al.(2018)Razzak, Naz, and Zaib]{razzak2018deep}
Razzak, M.~I., Naz, S., and Zaib, A.
\newblock Deep learning for medical image processing: Overview, challenges and
  the future.
\newblock \emph{Classification in BioApps: Automation of Decision Making}, pp.\
   323--350, 2018.

\bibitem[Romano et~al.(2020)Romano, Sesia, and
  Candes]{romano2020classification}
Romano, Y., Sesia, M., and Candes, E.
\newblock Classification with valid and adaptive coverage.
\newblock \emph{Advances in Neural Information Processing Systems},
  33:\penalty0 3581--3591, 2020.

\bibitem[Rosenblatt(1956)]{rosenblatt1956remarks}
Rosenblatt, M.
\newblock Remarks on some nonparametric estimates of a density function.
\newblock \emph{The annals of mathematical statistics}, pp.\  832--837, 1956.

\bibitem[Salman et~al.(2020)Salman, Ilyas, Engstrom, Kapoor, and
  Madry]{salman2020adversarially}
Salman, H., Ilyas, A., Engstrom, L., Kapoor, A., and Madry, A.
\newblock Do adversarially robust imagenet models transfer better?
\newblock \emph{Advances in Neural Information Processing Systems},
  33:\penalty0 3533--3545, 2020.

\bibitem[Shafer \& Vovk(2008)Shafer and Vovk]{shafer2008tutorial}
Shafer, G. and Vovk, V.
\newblock A tutorial on conformal prediction.
\newblock \emph{Journal of Machine Learning Research}, 9\penalty0 (3), 2008.

\bibitem[Stutz et~al.(2020)Stutz, Hein, and Schiele]{stutz2020confidence}
Stutz, D., Hein, M., and Schiele, B.
\newblock Confidence-calibrated adversarial training: Generalizing to unseen
  attacks.
\newblock In \emph{International Conference on Machine Learning}, pp.\
  9155--9166. PMLR, 2020.

\bibitem[Vovk et~al.(1999)Vovk, Gammerman, and Saunders]{vovk1999machine}
Vovk, V., Gammerman, A., and Saunders, C.
\newblock Machine-learning applications of algorithmic randomness.
\newblock 1999.

\bibitem[Vovk et~al.(2005)Vovk, Gammerman, and Shafer]{vovk2005algorithmic}
Vovk, V., Gammerman, A., and Shafer, G.
\newblock \emph{Algorithmic learning in a random world}, volume~29.
\newblock Springer, 2005.

\bibitem[Wah et~al.(2011)Wah, Branson, Welinder, Perona, and
  Belongie]{wah2011caltech}
Wah, C., Branson, S., Welinder, P., Perona, P., and Belongie, S.
\newblock The caltech-ucsd birds-200-2011 dataset.
\newblock 2011.

\bibitem[Wang et~al.(2019)Wang, Zou, Yi, Bailey, Ma, and Gu]{wang2019improving}
Wang, Y., Zou, D., Yi, J., Bailey, J., Ma, X., and Gu, Q.
\newblock Improving adversarial robustness requires revisiting misclassified
  examples.
\newblock In \emph{International conference on learning representations}, 2019.

\bibitem[Xu et~al.(2021)Xu, Liu, Li, Jain, and Tang]{xu2021robust}
Xu, H., Liu, X., Li, Y., Jain, A., and Tang, J.
\newblock To be robust or to be fair: Towards fairness in adversarial training.
\newblock In \emph{International conference on machine learning}, pp.\
  11492--11501. PMLR, 2021.

\bibitem[Zhang et~al.(2019)Zhang, Yu, Jiao, Xing, El~Ghaoui, and
  Jordan]{zhang2019theoretically}
Zhang, H., Yu, Y., Jiao, J., Xing, E., El~Ghaoui, L., and Jordan, M.
\newblock Theoretically principled trade-off between robustness and accuracy.
\newblock In \emph{International conference on machine learning}, pp.\
  7472--7482. PMLR, 2019.

\end{thebibliography}
\bibliographystyle{icml2024}

\newpage
\appendix
\onecolumn
\clearpage
\appendix

\thispagestyle{empty}

%
\onecolumn 
\section{Adaptive Prediction Sets \citep{romano2020classification}}
\label{app_sec:aps}
We introduce one example of prediction set function, i.e., APS conformal prediction used in our experiment. Assume we have the prediction distribution $\pi(x)=f_{\theta}(x)$ and order this probability vector with the descending order $\pi_{(1)}(x) \ge \pi_{(2)}(x) \geq \ldots \geq \pi_{(K)}(x)$. We first define the following generalized conditional quantile function,
\begin{align} \label{eqn:gcq}
  Q(x; \pi, \tau) & = \min \{ k \in \{1,\ldots,K\} \ : \ \pi_{(1)}(x) + \pi_{(2)}(x) + \ldots + \pi_{(k)}(x) \geq \tau \},
\end{align}
which returns the class index with the generalized quantile $\tau\in[0,1]$. The function $\gS$ can be defined as
\begin{align} \label{eqn:S}
    \mathcal{S}(x, u ; \pi, \tau) & = 
    \begin{cases}
    \text{ `$y$' indices of the $Q(x ; \pi,\tau)-1$ largest $\pi_{y}(x)$},
    & \text{ if } u \leq U(x ; \pi,\tau) , \\
    \text{ `$y$' indices of the $Q(x ; \pi,\tau)$ largest $\pi_{y}(x)$},
    & \text{ otherwise},
    \end{cases}
\end{align}
where
\begin{align*}
    U(x; \pi, \tau) & =  \frac{1}{\pi_{(Q(x ; \pi, \tau))}(x)} \left[\sum_{k=1}^{Q(x ; \pi, \tau)} \pi_{(k)}(x) - \tau \right].
\end{align*}
It has input $x$, $u \in [0,1]$, $\pi$, and $\tau$ and can be seen as a generalized inverse of Equation~\ref{eqn:gcq}. 

On the calibration set, we compute a generalized inverse quantile conformity score with the following function,
\begin{align}
    E(x,y,u;\pi) & = \min \left\{ \tau \in [0,1] : y \in \gS(x, u ; \pi, \tau) \right\},
\end{align}
which is the smallest quantile to ensure that the ground-truth class is contained in the prediction set $\gS(x, u ; \pi, \tau)$. With the conformity scores on calibration set $\{E_i\}_{i=1}^{n_c}$, we compute the $\ceil{(1-\alpha)(1+n_c)}$th largest value in the score set as $\hat\tau_{\text{cal}}$. During inference, the prediction set is generated with $\gS(x^*, u ; \pi^*, \hat\tau_{\text{cal}})$ for a novel test sample $x^*$.

\section{More Experimental Details}
\label{app_sec:exp_details}
\paragraph{APS Setting.} We use the default setting of APS specified in the official code of \cite{angelopoulos2020uncertainty}, i.e., first use temperature scaling \citep{platt1999probabilistic,guo2017calibration} to calibrate the prediction distribution then compute the generalized inverse quantile conformity score to perform the calibration and conformal prediction. 
\paragraph{Hyperparameter and Baseline Setting.} As mentioned in the main paper, we use $a=1.1$ and search $b$ from the discrete set $\{2.0,3.0,4.0,5.0\}$ in Beta distribution since the parameter combinations perform well in our pilot study and satisfy the goal of focusing on promising samples. The learning rate and weight decay of AT, FAT and TRADES are determined by grid search from \{1e-4,3e-4,1e-3,3e-3,1e-2\} and \{1e-3,1e-4,1e-5\} respectively. We compute the class weight for FAT using the output of a softmax function with error rate of each class as input. The temperature in the softmax function is set as 1.0. For TRADES, we follow the default setting $\beta=6.0$ for the KL divergence term \citep{zhang2019theoretically}. Our AT-UR method also determines the learning rate and weight decay using the grid search with the same mentioned grid. For TRADES, we weight both the cross-entropy loss and KL divergence loss with the Beta density function based on TCPR. 

\paragraph{CP Curve.} The CP curve in Fig.~\ref{fig:cp_curve} is obtained by using different threshold values, for instance, using the linspace function in numpy \citep{harris2020array} with \code{np.linspace(0.9,1.1,200)}$\times\hat\tau_{\text{cal}}$ generates 200 different (coverage, PSS) points.

\paragraph{Compare with Conformal AT \citep{einbinder2022training}.} We use the experimental setting in the original paper, where they train a randomly initialized ResNet50 using SGDM with batch size=128, learning rate=0.1, weight decay=0.0005, for 120 epochs, where the learning rate is divided by 10 at 100th epoch. 45000 original training samples are used for training and the remaining 5000 samples are used as a held-out set for computing the conformal loss. We use the same attack parameters in this setting as in other experiments during both training and inference. In the conformal inference based on APS scores, we split the original test set with a ratio of 1:1 into a calibration and a test set and only test the final-epoch model. We run three trials for each approach and report the average coverage and PSS in the main paper. The experiment result that the effectiveness of our AT-Beta is generalizable to the randomly intialized model training.

\begin{table*}[t]
\centering
\resizebox{\linewidth}{!}{
\begin{tabular}{|c|cc|cc|cc|cc|}
\hline
     Dataset   & \multicolumn{2}{c|}{CIFAR10}     & \multicolumn{2}{c|}{CIFAR100}    & \multicolumn{2}{c|}{Caltech256}  & \multicolumn{2}{c|}{CUB200}      \\ \hline 
     Metric   & \multicolumn{1}{c|}{Cvg} & PSS & \multicolumn{1}{c|}{Cvg} & PSS & \multicolumn{1}{c|}{Cvg} & PSS & \multicolumn{1}{c|}{Cvg} & PSS \\ \hline
AT      & \multicolumn{1}{c|}{90.55(0.51)}     &   3.10(0.07)   & \multicolumn{1}{c|}{90.45(0.59) }     &  23.79(0.80)    & \multicolumn{1}{c|}{91.35(0.85)}     &   43.20(2.11)   & \multicolumn{1}{c|}{90.33(0.89) }     &   37.37(2.11)   \\ 
AT-EM$^*$   & \multicolumn{1}{c|}{90.39(0.48) }     &  \textbf{3.05(0.05)}    & \multicolumn{1}{c|}{90.35(0.82) }     & \textbf{22.05(1.02)}     & \multicolumn{1}{c|}{91.09(0.79) }     &  41.42(2.52)    & \multicolumn{1}{c|}{90.08(1.10) }     &   34.77(2.60)   \\ 
AT-Beta$^*$ & \multicolumn{1}{c|}{90.46(0.51) }     &   3.11(0.07)   & \multicolumn{1}{c|}{90.10(0.51)  }     &  22.64(0.65)    & \multicolumn{1}{c|}{90.20(0.84) }     &   \textbf{35.39(2.66)}   & \multicolumn{1}{c|}{90.17(1.06)}     &   35.25(2.15)   \\ 
AT-Beta-EM$^*$   & \multicolumn{1}{c|}{90.65(0.62) }     &  3.10(0.08)    & \multicolumn{1}{c|}{90.40(0.60)}     & 22.53(0.91)     & \multicolumn{1}{c|}{90.81(1.00)}     &   36.17(3.73)   & \multicolumn{1}{c|}{90.31(0.84) }     &  \textbf{33.10(1.74)}    \\ \hline
FAT   & \multicolumn{1}{c|}{90.69(0.61)}     &   3.16(0.07)  & \multicolumn{1}{c|}{90.41(0.67) }     & 23.54(0.81)     & \multicolumn{1}{c|}{90.70(0.77)}     & 41.52(2.43)      & \multicolumn{1}{c|}{ 90.50(1.17)}     &  39.43(2.88)    \\
FAT-EM$^*$  & \multicolumn{1}{c|}{ 90.54(0.68) }     &   3.06(0.06)  & \multicolumn{1}{c|}{90.00(0.82)}     & 23.47(2.71)     & \multicolumn{1}{c|}{ 90.55(0.79) }     & 39.72(2.49)     & \multicolumn{1}{c|}{89.89(0.92) }     &  35.51(2.06)     \\
FAT-Beta$^*$   & \multicolumn{1}{c|}{90.47(0.51)}     &   3.16(0.08)   & \multicolumn{1}{c|}{90.22(0.47)}     &  23.15(0.71)     & \multicolumn{1}{c|}{89.90(0.70)}     &  34.72(2.25)    & \multicolumn{1}{c|}{ 89.92(0.84) }     &  35.46(1.71)     \\
FAT-Beta-EM$^*$   & \multicolumn{1}{c|}{90.71(0.61)}     &  \textbf{3.04(0.07)}   & \multicolumn{1}{c|}{90.36(0.50) }     & \textbf{22.28(0.63)}     & \multicolumn{1}{c|}{90.41(0.61) }     &   \textbf{33.59(2.75)}    & \multicolumn{1}{c|}{ 89.88(0.91) }     &  \textbf{34.35(1.68)}    \\
\hline
TRADES   & \multicolumn{1}{c|}{90.72(0.62) }     & 3.31(0.09)     & \multicolumn{1}{c|}{90.35(0.57) }     &  27.60(0.97)    & \multicolumn{1}{c|}{90.82(0.81) }     &  44.80(3.42)    & \multicolumn{1}{c|}{ 90.38(0.76) }     &  52.18(2.60)     \\ 
TRADES-EM$^*$   & \multicolumn{1}{c|}{90.54(0.40) }     &  \textbf{3.16(0.05)}   & \multicolumn{1}{c|}{90.36(0.71) }     & \textbf{26.76(1.00)}     & \multicolumn{1}{c|}{ 90.68(0.87) }     & \textbf{38.83(3.78)}     & \multicolumn{1}{c|}{ 90.05(0.76)}     & \textbf{44.96(2.75)}      \\
TRADES-Beta$^*$  & \multicolumn{1}{c|}{90.41(0.56) }     &   3.30(0.09)  & \multicolumn{1}{c|}{90.14(0.85)}     &  27.22(1.42)    & \multicolumn{1}{c|}{90.48(0.70) }     &  38.94(2.74)    & \multicolumn{1}{c|}{89.83(0.84) }     &  49.63(2.59)    \\
TRADES-Beta-EM$^*$   & \multicolumn{1}{c|}{90.01(0.40)}     &  3.21(0.06)   & \multicolumn{1}{c|}{90.54(0.24)}     &   26.55(0.35)   & \multicolumn{1}{c|}{90.52(0.74) }     &  39.83(3.02)    & \multicolumn{1}{c|}{90.16(1.12)}     & 48.54(2.58) \\
\hline
\end{tabular}
}
\vspace{-0.3cm}
\caption{Comparison of AT baselines and the proposed AT-UR variants denoted with $^*$, under the PGD100 attack. }
\vspace{-0.5cm}
\label{tab:main}
\end{table*}

\section{More Experimental Results}
\label{app_sec:exp_results}
Note that this paper uses CP as the inference method to achieve a coverage guarantee, which is orthogonal to the Top-1 inference method. Thus, Top-1 accuracy is not a relevant metric in the context of CP inference. Nevertheless, we show the Top-1 accuracy of tested methods in Tab.~\ref{tab:top_1_acc}. Using AT-UR generally worsens the Top-1 accuracy, especially for TRADES. However, note that using TRADES-Beta-EM can improve the Top-1 robust accuracy of TRADES-Beta on CIFAR10 and TRADES-EM on Caltech256. This result again confirms the observation that Top-1 accuracy is not necessarily correlated with CP-efficiency. \revise{When we compare the result of using PGD100 and AA, the robust accuracy under AA drops while the prediction set size reduced (CP efficiency is improved), indicating that a stronger attack can lead to reduced PSS. }To reduce the effect of number of classes (K) on the PSS, Tab.~\ref{tab:main_normalized} shows the normalized PSS using K when using the PGD100 attack.

\begin{figure}[t]
     \centering
     \includegraphics[width=\textwidth]{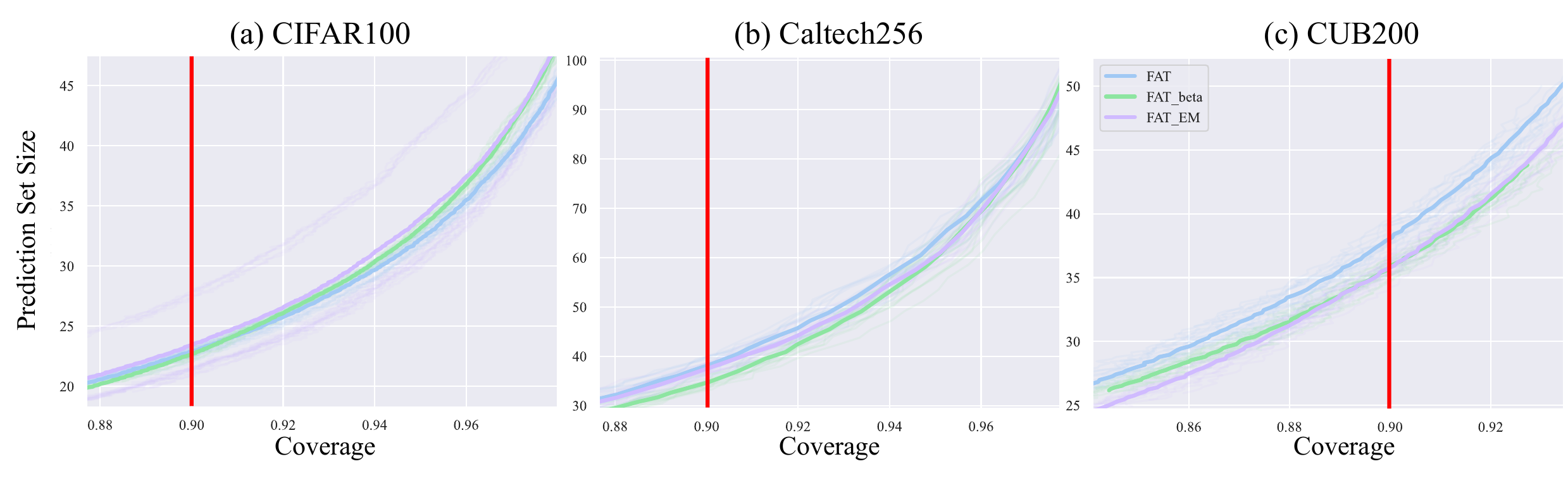}
    \caption{The CP curve of coverage versus prediction set size using FAT and PGD100 attack.}
    \label{fig:cp_curve_fat}
\end{figure}

\begin{figure}[h]
     \centering
     \includegraphics[width=\textwidth]{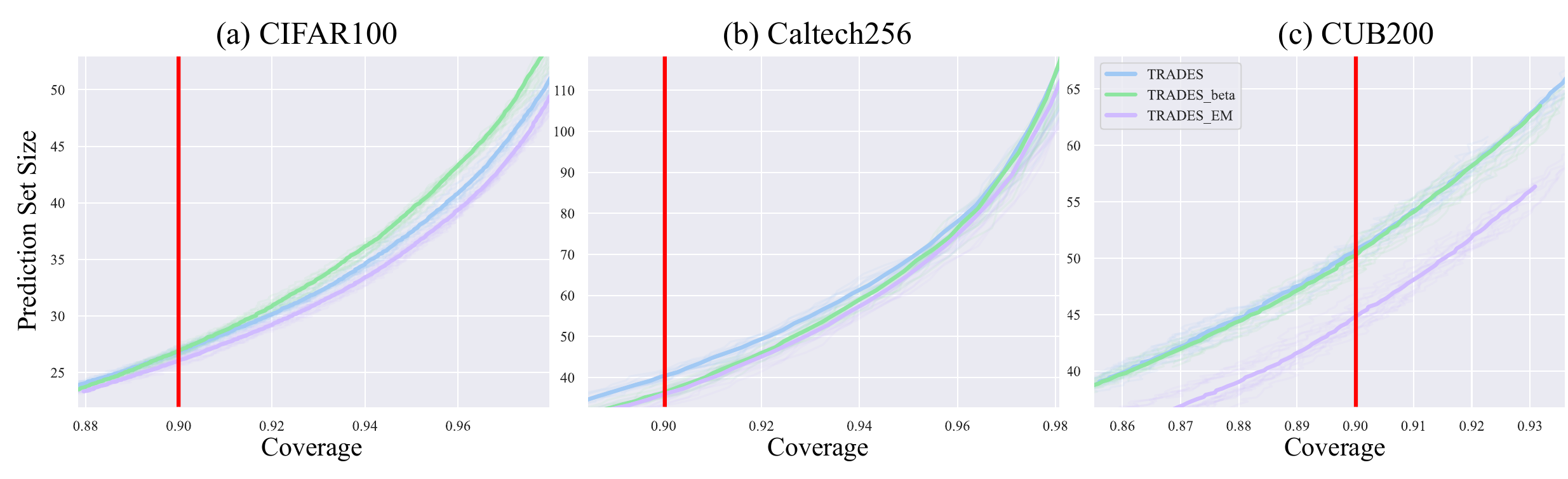}
    \caption{The CP curve of coverage versus prediction set size using TRADES and PGD100 attack.}
    \label{fig:cp_curve_trades}
\end{figure}

Fig.~\ref{fig:cp_curve_fat} and Fig.~\ref{fig:cp_curve_trades} shows the CP curve of FAT and TRADES when they are combined with EM and Beta on three datasets. It demonstrates that the CP-efficiency is also improved when using FAT and TRADES as in the experiment using AT. In most cases (5 out of 6), AT-UR (either EM or Beta) has a lower PSS than the corresponding baseline within a large range of coverage. 

Fig.~\ref{fig:tcpr_percentage} shows the percentage of samples TCPR$=$1, 1$<$TCPR$<$20 and TCPR$\geq$20 during AT on CIFAR100, demonstrating that the promising samples are the majority in most time training, especially for the early 30 epochs.

We include the Coverage and PSS on clean images in Tab.~\ref{tab:main_clean_image} as a reference. It shows that our AT-UR improves the CP-efficiency even on clean images across various datasets and adversarial training methods.

\CUT{
\begin{table}[]
\centering
\centering (1) AT\\
\begin{tabular}{c|c|c|c|c}
\hline
    & AT           & AT-EM        & AT-Beta      & AT-Beta-EM  \\ \hline
Cvg & 90.49(0.65) & 90.19(0.80) & 90.26(0.38) & 90.44(0.61)  \\ \hline
PSS &    23.82(0.94)          &     21.79(1.00)        &       22.85(0.52)      &   22.55(0.88)  \\         \hline           
\end{tabular}
\\
\centering (2) FAT\\
\begin{tabular}{c|c|c|c|c}
\hline
    & FAT          & FAT-EM       & FAT-Beta                                                    & FAT-Beta-EM \\ \hline
Cvg & 90.68(0.56) & 90.13(0.84) & 90.28(0.29) & 90.39(0.72) \\ \hline
PSS &     23.87(0.75)        &       23.72(3.03)   &      23.27(0.40)      &   22.27(0.95)  \\         \hline           
\end{tabular}
\\
\centering (3) TRADES\\
\begin{tabular}{c|c|c|c|c}
\hline
    & TRADES       & TRADES-EM    & TRADES-Beta  & TRADES-Beta-EM \\ \hline
Cvg & 90.42(0.67) & 90.36(0.70) & 90.36(0.68) & 90.13(0.77) \\ \hline
PSS &     27.67(1.08)         &     26.75(1.00)         &     27.56(1.17)      &  26.14(1.16)   \\         \hline           
\end{tabular}
\caption{Coverage and PSS under uncertainty-aware attack on CIFAR100.}
\label{tab:uncertainty_attack}
\end{table}
}


\begin{table*}[t]
\centering
\resizebox{\linewidth}{!}{
\begin{tabular}{|c|cc|cc|cc|cc|}
\hline
     Dataset   & \multicolumn{2}{c|}{CIFAR10}     & \multicolumn{2}{c|}{CIFAR100}    & \multicolumn{2}{c|}{Caltech256}  & \multicolumn{2}{c|}{CUB200}      \\ \hline 
     Metric   & \multicolumn{1}{c|}{Cvg} & PSS & \multicolumn{1}{c|}{Cvg} & PSS & \multicolumn{1}{c|}{Cvg} & PSS & \multicolumn{1}{c|}{Cvg} & PSS \\ \hline
AT      & \multicolumn{1}{c|}{97.42(0.16) }     &   1.62(0.03)   & \multicolumn{1}{c|}{93.32(0.74) }     &  6.19(0.42)  & \multicolumn{1}{c|}{95.64(0.68)}     &   13.89(1.75)  & \multicolumn{1}{c|}{92.87(0.53) }     &   8.45(0.49) \\ 
AT-EM$^*$   & \multicolumn{1}{c|}{97.10(0.13)} & 1.52(0.02)& \multicolumn{1}{c|}{93.23(0.71)} & 6.21(0.43)& \multicolumn{1}{c|}{95.19(0.55)} & 11.76(1.22)& \multicolumn{1}{c|}{92.72(0.58)} & 8.44(0.43)  \\ 
AT-Beta$^*$ & \multicolumn{1}{c|}{96.83(0.13)} & 1.55(0.01)&\multicolumn{1}{c|}{92.71(0.74)} & 5.21(0.31)&\multicolumn{1}{c|}{95.01(0.57)} & 9.84(0.88)	& \multicolumn{1}{c|}{92.34(0.64)} & 7.40(0.41)  \\ 
AT-Beta-EM$^*$   & \multicolumn{1}{c|}{96.80(0.17)} & 1.53(0.02)&\multicolumn{1}{c|}{92.87(0.72)} & 5.34(0.35)&\multicolumn{1}{c|}{94.87(0.42)} & 10.21(0.80)&\multicolumn{1}{c|}{92.59(0.62)} & 7.83(0.42)
   \\ \hline
FAT   & \multicolumn{1}{c|}{97.38(0.15)} & 1.60(0.03)&	\multicolumn{1}{c|}{93.10(0.73)} & 5.98(0.41)&	\multicolumn{1}{c|}{95.41(0.62)} & 12.80(1.13)&	\multicolumn{1}{c|}{92.70(0.31)} & 8.38(0.26)   \\
FAT-EM$^*$  & \multicolumn{1}{c|}{97.09(0.16)} & 1.51(0.02)&\multicolumn{1}{c|}{93.19(0.72)} & 5.94(0.44)&\multicolumn{1}{c|}{95.03(0.62)} & 11.75(1.16)&	\multicolumn{1}{c|}{92.62(0.48)} & 8.16(0.38)    \\
FAT-Beta$^*$   &\multicolumn{1}{c|}{97.00(0.10)} & 1.55(0.01)&	\multicolumn{1}{c|}{92.50(0.70)} & 5.02(0.30)&\multicolumn{1}{c|}{94.89(0.48)} & 9.79(0.88)&\multicolumn{1}{c|}{92.14(0.49)} & 7.11(0.32)  \\
FAT-Beta-EM$^*$   & \multicolumn{1}{c|}{96.68(0.13)} & 1.52(0.01)&\multicolumn{1}{c|}{92.72(0.70)} & 5.27(0.30)&\multicolumn{1}{c|}{94.87(0.44)} & 9.66(0.72)&\multicolumn{1}{c|}{92.50(0.44)} & 7.65(0.29)
  \\
\hline
TRADES   & \multicolumn{1}{c|}{96.43(0.16)} & 1.72(0.02)&\multicolumn{1}{c|}{91.84(0.81)} & 8.17(0.58)&\multicolumn{1}{c|}{94.18(0.47)} & 17.39(1.28)&\multicolumn{1}{c|}{91.56(0.66)} & 14.22(0.79)    \\ 
TRADES-EM$^*$   & \multicolumn{1}{c|}{96.04(0.34)} & 1.62(0.03)&\multicolumn{1}{c|}{91.25(0.71)} & 10.04(0.55)&\multicolumn{1}{c|}{93.84(0.69)} & 13.39(1.49)&\multicolumn{1}{c|}{91.63(0.94)} & 14.43(1.99)   \\
TRADES-Beta$^*$  & \multicolumn{1}{c|}{96.05(0.24)} & 1.67(0.03)&\multicolumn{1}{c|}{91.41(0.83)} & 7.37(0.48)&\multicolumn{1}{c|}{93.35(0.60)} & 11.68(0.92)&\multicolumn{1}{c|}{90.83(0.74)} & 10.72(0.74)  \\
TRADES-Beta-EM$^*$   & \multicolumn{1}{c|}{95.66(0.10)} & 1.66(0.01)&\multicolumn{1}{c|}{91.30(0.81)} & 7.89(0.53)&\multicolumn{1}{c|}{93.29(0.55)} & 13.13(0.91)&\multicolumn{1}{c|}{90.79(1.03)} & 11.40(0.96) \\
\hline
\end{tabular}
}
\vspace{-0.3cm}
\caption{Comparison of AT baselines and the proposed AT-UR variants denoted with $^*$ on clean images. }
\vspace{-0.5cm}
\label{tab:main_clean_image}
\end{table*}

\begin{figure}[h]
     \centering
     \includegraphics[width=0.6\textwidth]{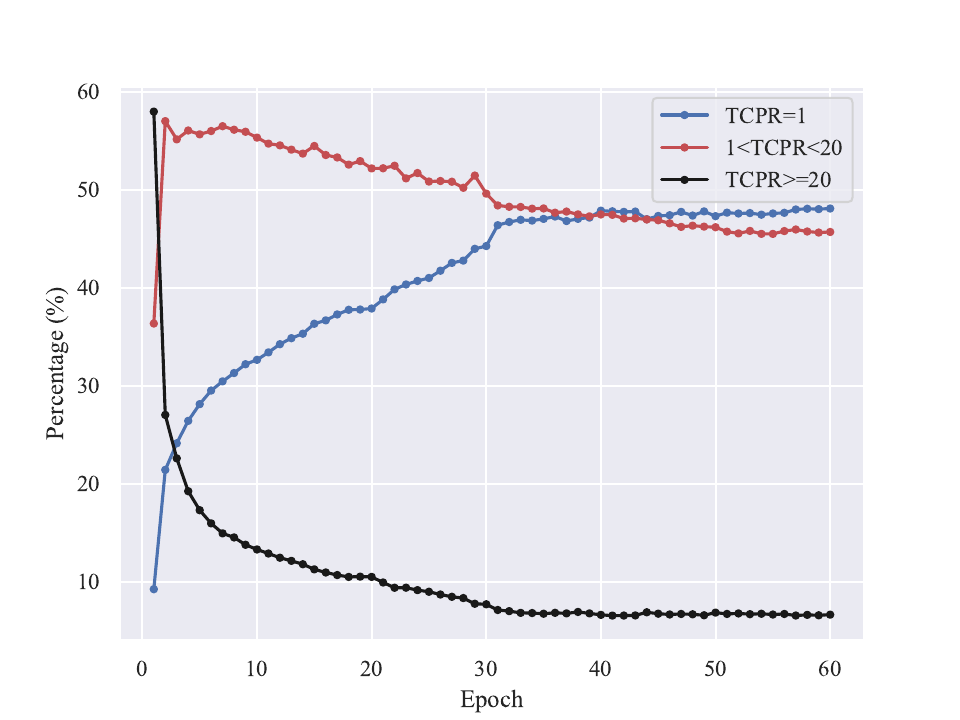}
    \caption{The percentage of samples with different TCPR's in CIFAR100 training.}
    \label{fig:tcpr_percentage}
\end{figure}

\begin{table}[t]
\centering
\resizebox{\linewidth}{!}{
\begin{tabular}{|c|cc|cc|cc|cc|}
\hline
     Dataset   & \multicolumn{2}{c|}{CIFAR10}     & \multicolumn{2}{c|}{CIFAR100}    & \multicolumn{2}{c|}{Caltech256}  & \multicolumn{2}{c|}{CUB200}      \\ \hline 
     Metric   & \multicolumn{1}{c|}{Cvg} & NPSS & \multicolumn{1}{c|}{Cvg} & NPSS & \multicolumn{1}{c|}{Cvg} & NPSS & \multicolumn{1}{c|}{Cvg} & NPSS \\ \hline
AT      & \multicolumn{1}{c|}{90.55(0.51)}     &   31.0(0.7)   & \multicolumn{1}{c|}{90.45(0.59) }     &  23.79(0.80)    & \multicolumn{1}{c|}{91.35(0.85)}     &   16.8(0.8)   & \multicolumn{1}{c|}{90.33(0.89) }     &   18.7(1.1)   \\ 
AT-EM$^*$   & \multicolumn{1}{c|}{90.39(0.48) }     &  \textbf{30.5(0.5)}    & \multicolumn{1}{c|}{90.35(0.82) }     & \textbf{22.05(1.02)}     & \multicolumn{1}{c|}{91.09(0.79) }     &  16.1(1.0)    & \multicolumn{1}{c|}{90.08(1.10) }     &   17.4(1.3)   \\ 
AT-Beta$^*$ & \multicolumn{1}{c|}{90.46(0.51) }     &   31.1(0.7)   & \multicolumn{1}{c|}{90.10(0.51)  }     &  22.64(0.65)    & \multicolumn{1}{c|}{90.20(0.84) }     &   \textbf{13.8(1.0)}   & \multicolumn{1}{c|}{90.17 (1.06)}     &   17.6(1.1)   \\ 
AT-Beta-EM$^*$   & \multicolumn{1}{c|}{90.65(0.62) }     &  31.0(0.8)    & \multicolumn{1}{c|}{90.40(0.60)}     & 22.53(0.91)     & \multicolumn{1}{c|}{90.81(1.00)}     &   14.1(1.4)   & \multicolumn{1}{c|}{90.31(0.84) }     &  \textbf{16.5(0.9)}    \\ \hline
FAT   & \multicolumn{1}{c|}{90.69(0.61)}     &   31.6(0.7)  & \multicolumn{1}{c|}{90.41(0.67) }     & 23.54(0.81)     & \multicolumn{1}{c|}{90.70(0.77)}     & 16.2(0.9)      & \multicolumn{1}{c|}{ 90.50(1.17)}     &  19.7(1.4)    \\
FAT-EM$^*$  & \multicolumn{1}{c|}{ 90.54(0.68) }     &   30.6(0.6)  & \multicolumn{1}{c|}{90.00(0.82)}     & 23.47(2.71)     & \multicolumn{1}{c|}{ 90.55(0.79) }     & 15.5(1.0)     & \multicolumn{1}{c|}{89.89(0.919) }     &  17.8(1.0)     \\
FAT-Beta$^*$   & \multicolumn{1}{c|}{90.47(0.51)}     &   31.6(0.8)   & \multicolumn{1}{c|}{90.22(0.47)}     &  23.15(0.71)     & \multicolumn{1}{c|}{89.90(0.70)}     &  13.5(0.9)    & \multicolumn{1}{c|}{ 89.92(0.84) }     &  17.7(0.9)     \\
FAT-Beta-EM$^*$   & \multicolumn{1}{c|}{90.71(0.61)}     &  \textbf{30.4(0.7)}   & \multicolumn{1}{c|}{90.36(0.50) }     & \textbf{22.28(0.63)}     & \multicolumn{1}{c|}{90.41(0.61) }     &   \textbf{13.1(1.1)}    & \multicolumn{1}{c|}{ 89.88(0.91) }     &  \textbf{17.2(0.8)}    \\
\hline
TRADES   & \multicolumn{1}{c|}{90.72(0.62) }     & 33.1(0.9)     & \multicolumn{1}{c|}{90.35(0.57) }     &  27.60(0.97)    & \multicolumn{1}{c|}{90.82(0.81) }     &  17.4(1.3)    & \multicolumn{1}{c|}{ 90.38(0.76) }     &  26.1(1.3)     \\ 
TRADES-EM$^*$   & \multicolumn{1}{c|}{90.54(0.40) }     &  \textbf{31.6(0.5)}   & \multicolumn{1}{c|}{90.36(0.71) }     & \textbf{26.76(1.00)}     & \multicolumn{1}{c|}{ 90.68(0.87) }     & \textbf{15.1(1.5)}     & \multicolumn{1}{c|}{ 90.05(0.76)}     & \textbf{22.5(1.4)}      \\
TRADES-Beta$^*$  & \multicolumn{1}{c|}{90.41(0.56) }     &   33.0(0.9)  & \multicolumn{1}{c|}{90.14(0.85)}     &  27.22(1.42)    & \multicolumn{1}{c|}{90.48(0.70) }     &  15.1(1.1)    & \multicolumn{1}{c|}{89.83(0.84) }     &  24.8(1.3)    \\
TRADES-Beta-EM$^*$   & \multicolumn{1}{c|}{90.01(0.40)}     &  32.1(0.6)   & \multicolumn{1}{c|}{90.54(0.24)}     &   26.55(0.35)   & \multicolumn{1}{c|}{90.52(0.74) }     &  15.5(1.2)    & \multicolumn{1}{c|}{90.16(1.12)}     & 24.3(1.3) \\
\hline
\end{tabular}
}
\vspace{-0.3cm}
\caption{\revise{Comparison of AT baselines and the proposed AT-UR variants denoted with $^*$, under the PGD100 attack. The average coverage (Cvg) and prediction set size normalized by the class number (NPSS, \%) are presented. } }
\label{tab:main_normalized}
\end{table}

\begin{table}[t]
\centering
\vspace{-0.3cm}
\resizebox{\linewidth}{!}{
\begin{tabular}{|c|cc|cc|cc|cc|}
\hline
     Dataset   & \multicolumn{2}{c|}{CIFAR10}     & \multicolumn{2}{c|}{CIFAR100}    & \multicolumn{2}{c|}{Caltech256}  & \multicolumn{2}{c|}{CUB200}      \\ \hline 
     Metric   & \multicolumn{1}{c|}{Std. Acc.} & Rob. Acc. & \multicolumn{1}{c|}{Std. Acc.} & Rob. Acc. & \multicolumn{1}{c|}{Std. Acc.} & Rob. Acc. & \multicolumn{1}{c|}{Std. Acc.} & Rob. Acc. \\ \hline
AT      & \multicolumn{1}{c|}{ 89.76(0.15)  }     &  50.17(0.91)    & \multicolumn{1}{c|}{68.92(0.38) }     &  28.49(1.14)    & \multicolumn{1}{c|}{ 75.28(0.51) }     &  47.53(0.67)  & \multicolumn{1}{c|}{65.36(0.27) }     &  26.29(0.45)   \\ 
AT-EM$^*$   & \multicolumn{1}{c|}{ 90.02(0.10) }     &  48.92(0.39)   & \multicolumn{1}{c|}{68.39(0.51) }     &  28.33(0.73)     & \multicolumn{1}{c|}{74.62(0.22)   }     &  46.23(0.44)   & \multicolumn{1}{c|}{ 64.75(0.33) }     &    25.60(0.25)  \\ 
AT-Beta$^*$ & \multicolumn{1}{c|}{ 89.81(0.22)}     &   47.50(0.78)  & \multicolumn{1}{c|}{68.50(0.28) }     &  28.04(0.57)   & \multicolumn{1}{c|}{ 74.66(0.54)  }     &   45.40(0.59)  & \multicolumn{1}{c|}{ 64.62(0.17)  }     &  25.57(0.41)   \\ 
AT-Beta-EM$^*$   & \multicolumn{1}{c|}{ 90.00(0.06)}     &  46.69(0.71)   & \multicolumn{1}{c|}{ 68.45(0.35)}     &    27.20(1.08)   & \multicolumn{1}{c|}{74.71(0.36)}     &  44.88(0.60)     & \multicolumn{1}{c|}{ 64.44(0.22) }     &  25.32(0.38) \\ \hline
FAT      & \multicolumn{1}{c|}{ 89.96(0.25)}     &   49.12(0.70)   & \multicolumn{1}{c|}{68.80(0.38)}     &    28.97(0.53)   & \multicolumn{1}{c|}{75.20(0.33) }     &  47.09(0.70)  & \multicolumn{1}{c|}{ 65.01(0.19) }     &  25.21(0.57)   \\ 
FAT-EM$^*$   & \multicolumn{1}{c|}{ 90.19(0.07)}     &  48.31(0.80)    & \multicolumn{1}{c|}{68.76(0.39)  }     & 25.84(3.54)   & \multicolumn{1}{c|}{74.59(0.20) }     &  45.53(0.70)  & \multicolumn{1}{c|}{ 65.13(0.25)}     & 24.92(0.27)    \\ 
FAT-Beta$^*$ & \multicolumn{1}{c|}{90.07(0.16) }     &   47.09(0.50)  & \multicolumn{1}{c|}{ 68.58(0.33) }     &  27.90(0.56)   & \multicolumn{1}{c|}{74.00(0.39)  }     &  45.12(0.90) & \multicolumn{1}{c|}{ 64.38(0.25)}     &  24.76(0.26)    \\ 
FAT-Beta-EM$^*$   & \multicolumn{1}{c|}{89.86(0.06) }     &  48.61(0.13)   & \multicolumn{1}{c|}{67.95(0.24)}     &   28.06(0.24)    & \multicolumn{1}{c|}{ 74.78(0.10) }     &  45.75(0.48)     & \multicolumn{1}{c|}{ 64.32(0.21)}     &  23.57(0.14)   \\ \hline
TRADES      & \multicolumn{1}{c|}{ 87.31(0.27)}     &   53.07(0.23)   & \multicolumn{1}{c|}{62.83(0.33)  }     &  32.07(0.20)     & \multicolumn{1}{c|}{69.57(0.25)  }     &  47.07(0.37)  & \multicolumn{1}{c|}{ 58.16(0.38)}     &  27.82(0.23)   \\ 
TRADES-EM$^*$   & \multicolumn{1}{c|}{  86.68(0.06)}     &  52.71(0.26)   & \multicolumn{1}{c|}{ 57.03(0.31) }     & 30.29(0.25)     & \multicolumn{1}{c|}{ 57.17(0.39)}     &  39.56(0.59)    & \multicolumn{1}{c|}{45.50(5.81)}     &   22.26(2.26)  \\ 
TRADES-Beta$^*$ & \multicolumn{1}{c|}{ 89.81(0.22)}     &  47.50(0.78)   & \multicolumn{1}{c|}{62.61(0.36)  }     &  30.20(0.31)   & \multicolumn{1}{c|}{ 70.96(0.25)  }     & 46.74(0.23)   & \multicolumn{1}{c|}{ 57.72(0.24)}     &  23.49(0.21)   \\ 
TRADES-Beta-EM$^*$   & \multicolumn{1}{c|}{86.99(0.10) }     &   51.85(0.23)   & \multicolumn{1}{c|}{62.13(0.34)}     &   30.52(0.20)  & \multicolumn{1}{c|}{69.44(0.25)}     &   46.24(0.39)    & \multicolumn{1}{c|}{ 56.03(0.15) }     &  22.90(0.32)  \\ \hline
\end{tabular}
}
\caption{Top-1 clean and robust accuracy comparison of AT baselines and the proposed AT-UR variants under the PGD100 attack.}
\vspace{-0.5cm}
\label{tab:top_1_acc}
\end{table}

\begin{table}[t]
\resizebox{\linewidth}{!}{
\begin{tabular}{|c|c|c|c|c|c|c|}
\hline
    Method & (a,b)=(1.1, 2.0) & (a,b)=(1.1, 3.0) & (a,b)=(1.1, 4.0) & (a,b)=(1.1, 5.0) & (a,b)=(1.1, 6.0) & AT, (a,b)=(1.0, 1.0) \\ \hline
Cvg & 90.59(0.56)   & 90.18(0.85)   & 90.25(0.76)   & 90.20(0.84)   & 90.97(1.08)   & 91.35(0.85)          \\ \hline
PSS & 37.09(2.24)   & 35.22(2.76)   & 35.55(2.28)   & 35.39(2.66)   & 38.54(3.47)   & 43.20(2.11)          \\ \hline
\end{tabular}
}
\caption{Comparison of using different (a,b) in AT-Beta on Caltech256.}
\label{tab:hyper_beta}
\end{table}

\CUT{
\begin{table}[]
\begin{tabular}{llllll}
    & (a,b)=(1.1, 2.0) & (a,b)=(1.1, 3.0) & (a,b)=(1.1, 4.0) & (a,b)=(1.1, 5.0) & (a,b)=(1.1, 6.0) \\
Cvg & 90.588 (0.560)   & 90.185 (0.850)   & 90.251 (0.759)   & 90.201 (0.845)   & 90.966 (1.076)   \\
PSS & 37.091 (2.245)   & 35.225 (2.758)   & 35.554 (2.281)   & 35.391 (2.661)   & 38.541 (3.470)  
\end{tabular}
\caption{Comparison of using different (a,b)).}
\label{tab:hyper_beta}
\end{table}
}

\begin{table}[t]
\centering
\resizebox{\linewidth}{!}{
\begin{tabular}{|l|cc|cc|cc|cc|}
\hline
Attack Budget & \multicolumn{2}{c|}{$\epsilon$=4/255}    & \multicolumn{2}{c|}{$\epsilon$=8/255}    & \multicolumn{2}{c|}{$\epsilon$=12/255}  & \multicolumn{2}{c|}{$\epsilon$=16/255}  \\ \hline
Metric        & \multicolumn{1}{c|}{Cvg} & PSS & \multicolumn{1}{c|}{Cvg} & PSS & \multicolumn{1}{c|}{Cvg} & PSS & \multicolumn{1}{c|}{Cvg} & PSS \\ \hline
AT            & \multicolumn{1}{c|}{92.73(0.81)	}     &   21.91(1.59)  & \multicolumn{1}{c|}{91.35(0.85)}     &   43.20(2.11)  & \multicolumn{1}{c|}{89.68(0.89)} & 78.80(5.43) &\multicolumn{1}{c|}{90.22(1.10)	}     &  136.86(7.20)   \\ \hline
AT-EM         & \multicolumn{1}{c|}{92.93(0.67)}     &  21.64(1.89)   & \multicolumn{1}{c|}{91.09(0.79)}     &   41.42(2.52)  & \multicolumn{1}{c|}{89.48(0.75)} & 76.39(4.21) &\multicolumn{1}{c|}{89.97(0.67)}     &  132.15(4.08)   \\ \hline
AT-Beta       & \multicolumn{1}{c|}{91.85(1.06)}     &   \textbf{17.17(2.05)}  & \multicolumn{1}{c|}{90.20(0.84)}     &  35.39(2.66)   & \multicolumn{1}{c|}{89.78(1.22)} & 79.79(7.09) &\multicolumn{1}{c|}{90.12(0.87)}     &  142.17(6.89)  \\ \hline
AT-EM-Beta    & \multicolumn{1}{c|}{91.83(0.76)}     &  17.59(1.21)   & \multicolumn{1}{c|}{90.81(1.00)}     &  36.17(3.73)    & \multicolumn{1}{c|}{89.97(0.92)} & 74.38(4.51) &\multicolumn{1}{c|}{90.05(0.98)}     &  131.64(5.59)   \\ \hline
\end{tabular}
}
\caption{Comparison of using different attack budgets in PGD100 with Caltech256.}
\label{tab:diff_eps}
\end{table}

\CUT{
\begin{table}[]
\centering
(a)
\begin{tabular}{|l|l|l|}
\hline
Caltech256 & Cvg. & PSS \\ \hline
AT         &      &     \\ 
AT-EM      &      &     \\ 
AT-Beta    &      &     \\ 
AT-EM-Beta &      &     \\ \hline
\end{tabular}
(b)\\
\begin{tabular}{|l|l|l|}
\hline
Caltech256 & Cvg. & PSS \\ \hline
AT         &      &     \\ 
AT-EM      &      &     \\ 
AT-Beta    &      &     \\ 
AT-EM-Beta &      &     \\ \hline
\end{tabular}
(c)\\
\begin{tabular}{|l|l|l|}
\hline
Caltech256 & Cvg. & PSS \\ \hline
AT         &      &     \\ 
AT-EM      &      &     \\ 
AT-Beta    &      &     \\ 
AT-EM-Beta &      &     \\ \hline
\end{tabular}
\caption{Comparison when using different attack budgets.}
\label{tab:diff_eps}
\end{table}
}

\CUT{
\begin{table}[t]
\centering
\resizebox{\linewidth}{!}{
\begin{tabular}{|c|cc|cc|cc|cc|}
\hline
     Dataset   & \multicolumn{2}{c|}{CIFAR10}     & \multicolumn{2}{c|}{CIFAR100}    & \multicolumn{2}{c|}{Caltech256}  & \multicolumn{2}{c|}{CUB200}      \\ \hline 
     Metric   & \multicolumn{1}{c|}{Cvg} & PSS & \multicolumn{1}{c|}{Cvg} & PSS & \multicolumn{1}{c|}{Cvg} & PSS & \multicolumn{1}{c|}{Cvg} & PSS \\ \hline
AT      & \multicolumn{1}{c|}{93.25(0.45)}     &   2.54(0.04)   & \multicolumn{1}{c|}{91.99 (0.61) }     &  14.29(0.59)    & \multicolumn{1}{c|}{94.35(0.81) }     &  23.73(1.68)   & \multicolumn{1}{c|}{91.87(0.90) }     &   17.75(0.71)  \\ 
AT-EM$^*$   & \multicolumn{1}{c|}{92.36(0.53) }     &  \textbf{2.45(0.04)}    & \multicolumn{1}{c|}{91.87(0.61) }     & 13.29(0.49)     & \multicolumn{1}{c|}{93.41(0.58) }     &  21.19(1.46)    & \multicolumn{1}{c|}{91.26(0.57) }     &   \textbf{16.47(0.61)}  \\ 
AT-Beta$^*$ & \multicolumn{1}{c|}{91.96(0.39)  }     &   2.50(0.04)   & \multicolumn{1}{c|}{ 91.24(0.69)  }     &  \textbf{11.61(0.40)}    & \multicolumn{1}{c|}{93.52(0.73) }     &   \textbf{18.54(1.32)}   & \multicolumn{1}{c|}{91.37(0.75)}     &   16.56(0.76)  \\ 
AT-Beta-EM$^*$   & \multicolumn{1}{c|}{92.06(0.44)}     &  2.50(0.04)    & \multicolumn{1}{c|}{ 91.13(0.63) }     & 11.78(0.47)    & \multicolumn{1}{c|}{93.50(0.69)}     &   18.56(1.32)   & \multicolumn{1}{c|}{91.93(0.68)}     & 16.67(0.58)   \\ 
\hline
\end{tabular}
}
\vspace{-0.3cm}
\caption{\textcolor{blue}{Comparison of AT and the proposed AT-UR variants denoted with $^*$ under AutoAttack \cite{croce2020reliable}. } }
\vspace{-0.5cm}
\label{tab:autoattack}
\end{table}
}

\section{ Proof of Theorem}
\label{app_sec:proof_thm1}

\CUT{
\begin{theorem}
\label{theorem:appendix:improved_generalization_bound_IW}
(Theorem \ref{theorem:improved_generalization_bound_IW} restated, Beta weighting preserves generalization error bound.)
Suppose $\P_{(x, y) \sim \calP}\{ \hat r(x, y) = k \} = \frac{ k^{-c} }{ \sum_{k'=1}^K (k')^{-c}}$ is a polynomially decaying function with $c = \max\{ K^{-\alpha}, \frac{b \ln(a) + 1 }{ \ln(K) } + 2 - \alpha \}$ for $\alpha \geq 0$.
Beta weighting improves generalization error bound compared with ERM.
\end{theorem}
}
\CUT{
\begin{proof}
(of Theorem \ref{theorem:improved_generalization_bound_IW})

The key idea to prove Theorem \ref{theorem:improved_generalization_bound_IW} is to show $d_2(\calP || \calP / \omega) \leq d_2(\calP || \calP ) = 1$ (recall $d_2$ is the base-2 exponential for Ren\'yi divergence of order $2$, as in Lemma \ref{lemma:generalization_bound_iw}), which implies that Beta weighting gives tighter generalization error bound than ERM.

First, we derive the following equivalent formulations.
\begin{align*}
d_2(\calP || \calP / \omega)
= &
\int_{(x, y)} \calP(x, y) \cdot \omega(x, y) d(x, y)
=
\int_{(x, y)} \calP(x, y) \cdot p_\Beta(\hat r(x, y)/K; a, b) d(x, y)
\\
= &
\int_{(x, y)} \calP(x, y) \Bigg( \sum_{k=1}^K \I[\hat r(x, y) = 1] \Bigg) \cdot p_\Beta(\hat r(x, y)/K; a, b) d(x, y)
\\
= &
\sum_{k=1}^K \int_{(x, y)} \calP(x, y) \cdot \I[\hat r(x, y) = 1] \cdot p_\Beta(\hat r(x, y)/K; a, b) d(x, y)
\\
= &
\sum_{k=1}^K \int_{(x, y)} \calP(x, y) \cdot \I[\hat r(x, y) = 1] \cdot p_\Beta(k/K; a, b) d(x, y)
\\
= &
\sum_{k=1}^K \underbrace{ 
\P_{(x, y) \sim \calP}\{ \hat r(x, y) = k \} 
}_{ = p_k }
\cdot p_\Beta(k/K; a, b) .
\end{align*}

Suppose $p_k = \frac{ k^{-c} }{ \sum_{k'=1}^K (k')^{-c}}$ a polynomially decaying function of $k$ for $c \geq 0$.
\begin{align*}
&
p_k \cdot p_\Beta(k/K)
\\
= &
\frac{ k^{-c} }{ \sum_{k'=1}^K (k')^{-c} } \cdot \frac{ \Gamma(a+b) }{ \Gamma(a) \Gamma(b) } \cdot ( \frac{k}{K} )^{a-1} \cdot ( 1 - \frac{k}{K} )^{b-1}
\\
= &
\frac{ K^{-c} }{ K^{-c} } \cdot \frac{ k^{-c} }{ \sum_{k'=1}^K (k')^{-c} } \cdot \frac{ (a+b-1)! }{ (a-1)! (b-1)! } \cdot ( \frac{k}{K} )^{a-1} \cdot ( 1 - \frac{k}{K} )^{b-1}
\\
= &
\frac{ K^{-c} }{ \sum_{k'=1}^K (k')^{-c} } \cdot \frac{ k^{-c} }{ K^{-c} } \cdot \frac{ (a-c+b-1)! \cdot \prod_{i=a-c+b}^{a+b-1} i }{ (a-c-1)! (b-1)! \cdot \prod_{i=a-c}^{a-1} i } \cdot ( \frac{k}{K} )^{a-1} \cdot ( 1 - \frac{k}{K} )^{b-1}
\\
= &
\frac{ K^{-c} }{ \sum_{k'=1}^K (k')^{-c} } \cdot \frac{ \Gamma(a-c+b) }{ \Gamma(a-c) \Gamma(b) } \cdot \prod_{i=1}^c \frac{ a+b-c-1+i }{ a-c-1+i } \cdot ( \frac{k}{K} )^{a-c-1} \cdot ( 1 - \frac{k}{K} )^{b-1}
\\
= &
\underbrace{ 
\frac{ K^{-c} }{ \sum_{k'=1}^K (k')^{-c} }
}_{ = A}
\cdot 
\underbrace{ \prod_{i=1}^c ( 1 + \frac{ b }{ a-c-1+i } )
}_{ = B }\cdot 
\underbrace{
\frac{ \Gamma(a-c+b) }{ \Gamma(a-c) \Gamma(b) } \cdot ( \frac{k}{K} )^{a-c-1} \cdot ( 1 - \frac{k}{K} )^{b-1}
}_{ = p_\Beta(k/K; a-c, b) }
\end{align*}
where term $A$ can be bounded as follows
\begin{align*}
\frac{ K^{-c} }{ \sum_{k=1}^K k^{-c} }
\leq 
\frac{ K^{-c} ( c - 1 ) }{ 1 - ( K + 1 )^{-(c-1)} }
\leq 
c K^{-c}
, ~~~ c > 0 ,
\end{align*}
and
term $B$ can be bounded as follows
\begin{align*}
B
= &
\prod_{i=1}^c ( 1 + \frac{b}{a-c-1+i} )
= 
\exp( \log( \prod_{i=1}^c ( 1 + \frac{b}{a-c-1+i} ) ))
\\
= &
\exp( \sum_{i=1}^c \log( 1 + \frac{b}{a-c-1+i} ) )
\leq 
\exp( \sum_{i=1}^c \frac{b}{a-c-1+i} )
\\
\leq &
\exp( \sum_{i=1}^{a-1} \frac{b}{i} )
\leq 
\exp( b ( \ln(a) + 1 ) ) .
\end{align*}

Then, combining term $A$ and $B$ together:
\begin{align*}
&~~
K^{-2} \cdot K^{-c+2} \cdot \exp( b \ln(a) + 1 ) \cdot c 
\leq 
K^{-2}
\\
\Leftrightarrow ~~&~~
\exp( \ln( K^{c-2} / c ) ) \geq \exp( b \ln(a) + 1 )
\\
\stackrel{(a)}{ \Leftarrow } ~~&~~
\exp( \ln( K^{c-2+\alpha} ) ) \geq \exp( b \ln(a) + 1 )
\\
\Leftrightarrow ~~&~~
c-2 + \alpha \geq \frac{b \ln(a) + 1 }{ \ln(K) }
\\
\Leftrightarrow ~~&~~
c \geq \frac{b \ln(a) + 1 }{ \ln(K) } + 2 - \alpha,
\end{align*}
where the development $(a)$ is due to $c = \max\{ K^{-\alpha}, \frac{b \ln(a) + 1 }{ \ln(K) } + 2 - \alpha \}$ for $\alpha \geq 0$.

As a result, we have 
\begin{align*}
&
p_k \cdot p_\Beta(k/K)
= 
K^{-2} \cdot p_\Beta(k/K; a-c; b)
\\
\Rightarrow &
\sum_{k=1}^K p_k \cdot p_\Beta(k/K; a-c; b)
\leq 
\sum_{k=1}^K p_\Beta(k/K; a-c; b) / K^2 
\leq
1 .
\end{align*}
\end{proof}

\yanblue{
(1)
\begin{align*}
\sum_{k=1}^K k^{-c}
\geq 
\int_{1}^{K+1} k^{-c} dk
=
\frac{ k^{-(c-1)} }{ -(c-1) } \bigg|_{k=1}^{K+1}
=
\frac{ (K+1)^{-(c-1)} }{ -(c-1) } - \frac{ 1^{-(c-1)} }{ -(c-1) }
=
\frac{ (K+1)^{-(c-1)} - 1 }{ - (c-1) } ,
\end{align*}
where the inequality is due to the left Riemann sum for the monotonically decreasing function $k^{-c}$ with $c > 0$.
}

\begin{align*}
\Gamma(z)
=
\int_0^\infty t^{z-1} \exp(-t) dt
=
\frac{1}{z \exp(\gamma z)} \prod_{i=1}^\infty (1+\frac{z}{i}) \exp(-\frac{z}{i}) ,
\end{align*}
where the second equality is due to Weierstrass's definition for Gamma function.

\begin{align*}
\frac{ \Gamma(a+b) }{ \Gamma(a) \Gamma(b) }
=
\frac{ \Gamma(a+b+c) }{ \Gamma(a) \Gamma(b+c) } \cdot 
\underbrace{ \frac{\Gamma(a+b) \Gamma(b+c)}{\Gamma(a+b+c) \Gamma(b)} }_{ = A }
\end{align*}

\begin{align*}
A 
= &
\frac{\Gamma(a+b) \Gamma(b+c)}{\Gamma(a+b+c) \Gamma(b)}
\\
= &
\frac{ (a+b+c) \exp(\gamma (a+b+c)) b \exp(\gamma(b)) }{ (a+b) \exp(\gamma (a+b)) (b+c) \exp(\gamma(b+c)) } 
\prod_{i=1}^\infty \frac{ (1+\frac{a+b}{i}) (1+\frac{b+c}{i}) }{ (1+\frac{a+b+c}{i}) (1+\frac{b}{i}) } \exp( - \frac{ 0 }{ i } )
\\
= &
\underbrace{ 
\frac{ b(a+b+c) }{(a+b)(b+c)}
}_{ \leq 1 }
\underbrace{ 
\prod_{i=1}^\infty \frac{ (i+a+b) (i+b+c) }{ (i+a+b+c) (i+b) }
}_{ = B }
\end{align*}

\begin{align*}
B 
= &
\prod_{i=1}^\infty \frac{ (i+a+b) (i+b+c) }{ (i+a+b+c) (i+b) }
=
\exp( \sum_{i=1}^\infty \ln( \frac{ (i+a+b) (i+b+c) }{ (i+a+b+c) (i+b) } ) )
\\
= &
\exp( \sum_{i=1}^\infty \ln( (1+\frac{a}{i+b}) (1-\frac{a}{i+a+b+c}) ) )
\\
= &
\exp( \sum_{i=1}^\infty \ln( 1 + \frac{ac}{(i+b)(i+a+b+c)} ) )
\\
\leq &
\exp( \sum_{i=1}^\infty \ln( 1 + \frac{ac}{i^2} ) )
\leq 
\exp( \ln( 4 \cdot ( 1 + ac )^{ 2( \lceil \sqrt{ac} \rceil - 1 ) } ) )
=
4 \cdot ( 1 + ac )^{ 2( \lceil \sqrt{ac} \rceil - 1 ) } 
,
\end{align*}
where the first inequality is due to $b, c \geq 0$, and the second inequality is due to partial sum 2 below.
\yanred{
Note that we need $ac \leq 2$, so that $4 \cdot ( 1 + ac )^{ 2( \lceil \sqrt{ac} \rceil - 1 ) } \leq 36$, otherwise it will go to 1024, 62500, ...}

Partial sum 1:
\begin{align*}
&
\sum_{i=\lceil \sqrt{a} \rceil +1}^n \ln(1+\frac{a}{i^2})
\\
= &
- \sum_{i=\lceil \sqrt{a} \rceil +1}^n \ln(\frac{i^2}{i^2+a})
=
- \sum_{i=\lceil \sqrt{a} \rceil +1}^n \ln(\frac{i^2}{i^2+a})
=
- \sum_{i=\lceil \sqrt{a} \rceil +1}^n \ln( 1 - \frac{a}{i^2+a})
\\
\leq &
- \sum_{i=\lceil \sqrt{a} \rceil +1}^n \ln( 1 - \frac{a}{i^2})
=
- \sum_{i=\lceil \sqrt{a} \rceil +1}^n \ln( \frac{i^2 - a}{i^2})
= 
- \sum_{i=\lceil \sqrt{a} \rceil +1}^n \ln( \frac{ (i + \sqrt{a}) }{i} \cdot \frac{ (i-\sqrt{a}) }{i} )
\\
= &
- \sum_{i=\lceil \sqrt{a} \rceil +1}^n \Bigg( \ln( \frac{ i + \sqrt{a} }{i} ) - \ln( \frac{i}{ i-\sqrt{a} } ) \Bigg)
\\
= &
- \sum_{i=\lceil \sqrt{a} \rceil +1}^n \ln(\frac{i+\sqrt{a}}{i}) 
+ \sum_{i=\lceil \sqrt{a} \rceil +1}^n \ln(\frac{i}{i-\sqrt{a}})
\\
= &
- \sum_{i=\lceil \sqrt{a} \rceil +1}^{n-\lceil \sqrt{a} \rceil} \ln(\frac{i+\sqrt{a}}{i}) 
- \sum_{i=n-\lceil \sqrt{a} \rceil+1}^{n} \ln(\frac{i+\sqrt{a}}{i}) 
\\
&
+ \sum_{i=2\lceil \sqrt{a} \rceil +1}^n \ln(\frac{i}{i-\sqrt{a}})
+ \sum_{i=\lceil \sqrt{a} \rceil +1}^{2\lceil \sqrt{a} \rceil} \ln(\frac{i}{i-\sqrt{a}})
\\
= &
- \sum_{i=\lceil \sqrt{a} \rceil +1}^{n-\lceil \sqrt{a} \rceil} \ln(\frac{i+\sqrt{a}}{i}) 
+ \sum_{i=\lceil \sqrt{a} \rceil +1}^{n-\lceil \sqrt{a} \rceil} \ln(\frac{i+\lceil \sqrt{a} \rceil}{i+\lceil \sqrt{a} \rceil-\sqrt{a}})
\\
&
- \sum_{i=n-\lceil \sqrt{a} \rceil+1}^{n} \ln(\frac{i+\sqrt{a}}{i}) 
+ \sum_{i=\lceil \sqrt{a} \rceil +1}^{2\lceil \sqrt{a} \rceil} \ln(\frac{i}{i-\sqrt{a}})
\\
\leq &
\sum_{i=\lceil \sqrt{a} \rceil +1}^{2\lceil \sqrt{a} \rceil} \ln(\frac{i}{i-\sqrt{a}})
=
\sum_{i=1}^{\lceil \sqrt{a} \rceil} \ln(\frac{i+\lceil \sqrt{a} \rceil}{i+\lceil \sqrt{a} \rceil-\sqrt{a}}) 
,
\end{align*}
where the first inequality is due to $0 \leq a$ and the last inequality is due to $\frac{i+\sqrt{a}}{i+\lceil \sqrt{a} \rceil-\sqrt{a}} \leq \frac{i+\sqrt{a}}{i}$.

Partial sum 2:
\begin{align*}
&
\sum_{i=1}^{\lceil \sqrt{a} \rceil} \ln(1+\frac{a}{i^2})
+ \sum_{i=1}^{\lceil \sqrt{a} \rceil} \ln(\frac{i+\lceil \sqrt{a} \rceil}{i+\lceil \sqrt{a} \rceil-\sqrt{a}}) 
\\
= &
\sum_{i=1}^{\lceil \sqrt{a} \rceil} \ln( (1+\frac{a}{i^2}) \cdot ( 1 + \frac{ \sqrt{a} }{i+\lceil \sqrt{a} \rceil-\sqrt{a}} ) )
\\
\leq &
\sum_{i=1}^{\lceil \sqrt{a} \rceil} \ln( (1+\frac{a}{i^2}) \cdot ( 1 + \frac{ \sqrt{a} }{i} ) )
\\
= &
\ln( (1+\frac{a}{( \lceil \sqrt{a} \rceil )^2}) \cdot ( 1 + \frac{ \sqrt{a} }{ \lceil \sqrt{a} \rceil } ) )
+ \sum_{i=1}^{\lceil \sqrt{a} \rceil-1} \ln( (1+\frac{a}{i^2}) \cdot ( 1 + \frac{ \sqrt{a} }{i} ) )
\\
\leq &
\ln( 4 )
+ \sum_{i=1}^{\lceil \sqrt{a} \rceil-1} \ln( (1+\frac{a}{i^2})^2 )
\\
\leq &
\ln(4)
+ ( \lceil \sqrt{a} \rceil-1 ) \cdot \ln( (1+a)^2 )
=
\ln( 4 \cdot ( 1 + a )^{ 2( \lceil \sqrt{a} \rceil - 1 ) } )
\end{align*}

\begin{align*}
&
p_k \cdot p_\Beta(k/K; a, b)
\\
= &
\frac{ k^{-c} }{ \sum_{k'=1}^K (k')^{-c} } \cdot \frac{ \Gamma(a+b) }{ \Gamma(a) \Gamma(b) } \cdot ( \frac{k}{K} )^{a-1} \cdot ( 1 - \frac{k}{K} )^{b-1}
\\
\leq &
\underbrace{
k^{-c+1} \cdot \frac{ c-1 }{ 1 - (K+1)^{-c+1} }
}_{ \leq 1 }
\cdot \frac{1}{K} \cdot 
\underbrace{ 
(\frac{k}{K})^{a-2} \cdot (1-\frac{k}{K})^{b-1} \cdot \frac{ \Gamma(a+b-1) }{ \Gamma(a-1) \Gamma(b) } 
}_{ = p_\Beta(k/K; a-1, b) }
\cdot \frac{ \Gamma(a-1) \Gamma(a+b) }{ \Gamma(a) \Gamma(a+b-1) }
\\
\leq &
\frac{ 1 }{ K } \cdot b_\Beta(k/K; a-1, b) \cdot ( 1 + \frac{b}{a(a-1)} ) ,
\end{align*}
where the last inequality is due to technical lemmas below.

Lemma.
\begin{align*}
k^{-c} \cdot (k/K)^{a-1}
=
k^{-c} \cdot (k/K)^{a-2} \cdot k/K
=
k^{-c+1} \cdot (k/K)^{a-2} / K
\end{align*}

Lemma.
If $c \geq 1$:
\begin{align*}
\frac{ k^{-c+1} (c-1) }{ 1 - (K+1)^{-c+1} }
\leq 
k^{-c+1} (c-1)
\leq 
1
\end{align*}
If $c < 1$:
\begin{align*}
\frac{ k^{-c+1} (c-1) }{ 1 - (K+1)^{-c+1} }
=
\frac{ k^{-c+1} (1-c) }{ (K+1)^{-c+1} - 1 }
\leq 1
\end{align*}

Lemma. 
\begin{align*}
&
\frac{ \Gamma(a-1) \Gamma(a+b) }{ \Gamma(a) \Gamma(a+b-1) }
\\
= &
\frac{ a ( a+b-1 ) \exp( \gamma a ) \exp( \gamma (a+b-1) ) }{ (a-1) (a+b) \exp( \gamma(a-1) ) \exp( \gamma(a+b) ) } \cdot 
\\
& 
\prod_{i=1}^{\infty} \frac{ ( 1 + \frac{a-1}{i} ) \cdot ( 1 + \frac{a+b}{i} ) }{ ( 1 + \frac{a+b-1}{i} ) \cdot ( 1 + \frac{ a+b-1 }{i}) } \cdot 
\frac{ \exp(- \frac{a-1}{i} \cdot \exp(- \frac{ a+b }{i}) ) }{ \exp( - \frac{a}{i} ) \cdot \exp(- \frac{a+b-1}{i} ) }
\\
= &
\frac{ a (a+b-1) }{ ( a-1 ) ( a+b ) } \cdot 
\prod_{i=1}^\infty \frac{ ( i + a - 1 ) \cdot ( i + a + b ) }{ ( i + a ) \cdot ( i + a + b - 1 ) }
\\
\leq &
1 + \frac{ b }{ a(a-1) }
\end{align*}

Now, we can compose generalization error bound.
For $p_k \sim k^{-c}$:
\begin{align*}
\sum_{k=1}^K p_k \cdot p_\Beta(k/K; a, b)
\leq 
\sum_{k=1}^K \frac{ p_\Beta(k/K; a-1, b) }{ K } \cdot ( 1 + \frac{b}{a(a-1)} )
\leq 
2 + \frac{ 2b }{ a(a-1) }
\end{align*}

\yanred{Note that the final inequality still requires finer proof for the gap between sum and integral.}

\yanred{This problem is resolved:}
\begin{align*}
&
\sum_{k=1}^K \frac{ p_\Beta(k/K; a, b) }{ K }
=
\sum_{k=1}^{ \lfloor \frac{ ( a-1 ) K }{a+b-1} \rfloor } \frac{ p_\Beta(k/K; a, b) }{ K }
+ \sum_{k=\lceil \frac{ ( a-1 ) K }{a+b-1} \rceil }^K \frac{ p_\Beta(k/K; a, b) }{ K }
\\
\leq &
\int_{1/K}^{ \lceil \frac{ ( a-1 ) K }{a+b-1} \rceil / K } p_\Beta(z; a, b) dz
+ \int_{ \lfloor \frac{ ( a-1 ) K }{a+b-1} \rfloor / K }^{1} p_\Beta(k/K; a, b) dz
\\
\leq &
2 \int_{ 0 }^{ 1 } p_\Beta(z; a, b) dz
= 
2 .
\end{align*}
This completes the proof of the generalization error bound.

For exponential $p_k = \frac{ \exp(-k) }{ \sum_{k'=1}^K \exp(-k') }$:

Lemma.
\begin{align*}
\exp(-k) 
=
\frac{1}{\exp(k)}
\leq 
\frac{1}{1+k}
\leq 
\frac{1}{k}
=
k^{-1} ,
\end{align*}
which means that it reduces to poly type for $c=1$, so we can re-use the techniques developed for poly $c=1$.

Lemma. The additional constant when transferring from exp type to poly $c=1$:
\begin{align*}
\sum_{k=1}^K \exp(-k)
\geq 
\exp(-1)
\end{align*}

For Gaussian-like $p_k = \frac{ \exp(-k^2) }{ \sum_{k'=1}^K \exp(-(k')^2) }$:

Lemma.
\begin{align*}
\exp(-k^2) 
=
\frac{1}{\exp(k^2)}
\leq 
\frac{1}{1+k^2}
\leq 
\frac{1}{k^2}
=
k^{-2} ,
\end{align*}
which means that it reduces to poly type for $c=2$, so we can re-use the techniques developed for poly $c=2$.

Lemma. The additional constant when transferring from exp type to poly $c=2$:
\begin{align*}
\sum_{k=1}^K \exp(-k^2)
\geq 
\exp(-1^2)
\end{align*}

On the other hand, for cost-sensitive-type weighting in Beta distribution:
\begin{align*}
&
( 1 - \lambda ) R(f) + \lambda \rank(f)
\\
= &
( 1 - \lambda ) \sum_{k=1}^K \P\{ r_f(x, y) = k \} \cdot \E[ \I[ h(x) \neq y ] | r_f(x, y) = k ] + \lambda \rank(f)
\\
&
+ \lambda \sum_{k=1}^K \P\{ r_f(x, y) = k \} \cdot k
\\
= &
\sum_{k=1}^K \P\{ r_f(x, y) = k \} \cdot ( (1-\lambda) \E[ \I[ h(x) \neq y ] | r_f(x, y) = k ] + \lambda k )
\\
\leq &
\sum_{k=1}^K \P\{ r_f(x, y) = k \} \cdot ( (1-\lambda) \E[ \ell(f(x), y)) | r_f(x, y) = k ] + \lambda k )
\\
\leq &
\sum_{k=1}^K \P\{ r_f(x, y) = k \} \cdot ( (1-\lambda) \E[ \ell(f(x), y)) | r_f(x, y) = k ] + \lambda \sigma_k \E[ \ell(f(x), y) | r_f(x, y) = k ] )
\\
= &
\sum_{k=1}^K 
\underbrace{
\P\{ r_f(x, y) = k \}
}_{ = p_k^\exponential }
\cdot \Big( (1 - \lambda + \lambda \sigma_k) \E\big[ \ell(f(x), y) \big| r_f(x, y) = k \big] \Big)
,
\end{align*}
where the first inequality is due to surrogate loss,
the second inequality is due to the assumption $k \leq \sigma_k \cdot \E[ \ell(f(x), y) | r_f(x, y) = k ]$.

\yanred{Missing: we can simply focus only on $\lambda \sigma_k$ part, instead of the overall weights, right?}

Task: show $\{ \frac{ p_k \cdot k }{ - \log(f(x)_k) } \}_{k=1}^K$ are Beta-like.
Assume: $p_k = \frac{ \exp(-k) }{ \sum_{k'=1}^K \exp(-k') }$, and
$f(x)_k \leq M - (k/K)^a$.

Now we start:
\begin{align*}
\frac{ p_k \cdot k }{ - \log(f(x)_k) }
\leq 
\frac{ \exp(-k) \cdot k  }{ - ( f(x)_k - 1 ) }
\leq 
\frac{ \exp( -k + \ln(k) ) }{ 1 - M + (k/K)^a }
\leq 
\Bigg( \frac{K-k}{K} \Bigg)^b
\cdot \frac{ K }{ 1 - M } \cdot \bigg( \frac{ k }{ K } \bigg)^a
,
\end{align*}
where the first inequality is due to $1+\ln(x) \leq x$,
the second inequality is due to two lemmas.
Therefore, we show that the above rank-minimization can be regarded as a cost-sensitive learning problem with weights following Beta distribution up to a costant, which can be merged to $\lambda$ in practice.

Lemma:
\begin{align*}
\Bigg( \frac{K}{K-k} \Bigg)^b
\leq 
\exp(k - \ln(k))
\end{align*}

Lemma:
\begin{align*}
\frac{1}{b+c} 
\leq 
\frac{ d c }{b}
\Leftrightarrow 
b 
\leq 
\frac{d c^2}{1-dc}
\end{align*}

Lemma:
\begin{align*}
1 - M 
\leq 
\frac{ K (k/K)^{2a} }{ 1 - K (k/K)^{a} }
\Leftrightarrow
\frac{1}{1-M + (k/K)^{a}} 
\leq 
\frac{K(k/K)^{a}}{1-M}
\end{align*}

\yanred{Still missing: lower bound with Beta PDF? No need to get lower bound, since we only need to minimize the upper bound of the cost-sensitive objective.}

\yanred{Still missing: $p_k$ can be exp or poly, so generalization and cost-sensitive analysis should be consistent (now they are not)}

Task option 2: show $\{ \frac{ p_k \cdot k }{ - \log(f(x)_k) } \}_{k=1}^K$ are Beta-like.
Assume: $p_k = \frac{ k^{-c} }{ \sum_{k'=1}^K (k')^{-c} ) }$, and
$f(x)_k \leq M - (k/K)^\alpha \leq  1 - (k/K)^\beta$, for $\beta < 1$.

Now we start:
\begin{align*}
\frac{ p_k \cdot k }{ - \log(f(x)_k) }
\leq &
\frac{ k^{-c} \cdot k  }{ - ( f(x)_k - 1 ) \cdot \sum_{k'=1}^K (k')^{-c} }
\leq 
\frac{ k^{-c} \cdot k }{ 1 - M + (k/K)^\alpha } \cdot 
\frac{c-1}{1-(K+1)^{-c+1}}
\\
\leq &
( \frac{k}{K} )^{-\beta} \cdot k^{-c} \cdot k \cdot
\frac{c-1}{1-(K+1)^{-c+1}}
\\
\leq &
( \frac{k}{K} )^{1-\beta} \cdot ( 1 - \frac{ k }{ K + 1 } )^{b-1} \cdot K \cdot
\frac{c-1}{1-(K+1)^{-c+1}}
\sim 
p_\Beta(k/K; 2-\beta, b)
,
\end{align*}
where the first inequality is due to $1+\ln(x) \leq x$,
the second inequality is due to two lemmas.
Therefore, we show that the above rank-minimization can be regarded as a cost-sensitive learning problem with weights following Beta distribution up to a constant, which can be merged to $\lambda$ in practice.

Lemma.
\begin{align*}
M - (k/K)^\alpha 
\leq 
1 - (k/K)^\beta, 0 < \beta < 1.
\end{align*}

\begin{lemma}\label{lemma:technical_lemma1}
\begin{align*}
k^{-c}
\leq 
( 1 - k/(K+1) )^{b-1}
\end{align*}
\end{lemma}

\begin{proof}
(of Lemma \ref{lemma:technical_lemma1})

\begin{align*}
k^{-c}
\leq ( 1 - \frac{k}{K+1} )^{b-1}
\Leftrightarrow
\end{align*}
\end{proof}

Define the Beta-weighted loss function as $L_\Beta(f) = p_\Beta(\frac{k}{K+1}; a, b) \cdot \E[ \ell(f(x), y) | r_f(x, y) = k ]$.

Define $p_k = \P\{ r_f(x, y) = k \}$.

Define $\bar \ell_k(f) = \E[ \ell(f(x), y) | r_f(x, y) = k ]$.

Define $\sigma_k = \frac{ k \cdot \P\{ r_f(x, y) = k \} }{ \bar \ell_k(f) }$.
}

\begin{theorem}
\label{theorem:cost_sensitive_learning_bound_for_CP_PSS}
(Learning bound for the expected size of CP prediction sets)
Let
$$
L_\Beta(f) := \sum_{k=1}^K \sigma_k \cdot \E[ \ell(f(X), Y) | r_f(X,Y) = k ],
$$ 
where $\sigma_k \sim p_\Beta(k/(K+1); a, b)$ with $a = 1.1, b = 5$.
\begin{align*}
\E_X [ | \calC_f(X) | ]
\leq 
L_\Beta(f) .
\end{align*}
\end{theorem}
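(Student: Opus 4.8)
The plan is to establish the inequality in two stages: a conformal-prediction stage bounding $\E_X[|\calC_f(X)|]$ by a constant multiple of the expected TCPR $\E_{X,Y}[r_f(X,Y)]$, and a surrogate stage bounding $\E_{X,Y}[r_f(X,Y)]$ by the Beta-weighted conditional loss $L_\Beta(f)$. Throughout I treat $\pi(X)=f(X)$ as the conditional law of $Y$ given $X$, so that $r_f(X,Y)$ is the rank of $Y$ under $\pi(X)$ and, conditionally on $X$, $\P(r_f(X,Y)\ge k\mid X)=\sum_{j\ge k}\pi_{(j)}(X)$.

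For the conformal stage I would start from the tail-sum identity $\E_X[|\calC_f(X)|]=\sum_{k=1}^K\P_X(|\calC_f(X)|\ge k)$, valid since $|\calC_f(X)|$ is a nonnegative integer. The APS construction of Appendix~\ref{app_sec:aps} builds $\calC_f(X)$ from the top-ranked classes of $\pi(X)$ up to the first index $Q(X;\pi,\hat\tau_{\mathrm{cal}})$ whose running cumulative probability reaches the calibrated threshold $\hat\tau_{\mathrm{cal}}$ (modulo the boundary randomization $U(x;\pi,\tau)$), so $|\calC_f(X)|\ge k$ forces $\sum_{j\le k-1}\pi_{(j)}(X)<\hat\tau_{\mathrm{cal}}$, i.e. $\sum_{j\ge k}\pi_{(j)}(X)>1-\hat\tau_{\mathrm{cal}}$. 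Markov's inequality applied to the nonnegative random variable $\sum_{j\ge k}\pi_{(j)}(X)$ then gives $\P_X(|\calC_f(X)|\ge k)\le(1-\hat\tau_{\mathrm{cal}})^{-1}\,\E_X[\sum_{j\ge k}\pi_{(j)}(X)]=(1-\hat\tau_{\mathrm{cal}})^{-1}\,\P_{X,Y}(r_f(X,Y)\ge k)$, using the conditional identity above. Summing over $k$ and using the tail-sum identity again on $r_f(X,Y)$ yields $\E_X[|\calC_f(X)|]\le(1-\hat\tau_{\mathrm{cal}})^{-1}\sum_{k=1}^K k\,p_k$ with $p_k:=\P_{X,Y}(r_f(X,Y)=k)$; since calibration targets coverage $1-\alpha$, the factor $(1-\hat\tau_{\mathrm{cal}})^{-1}$ functions as an absolute constant of order $1/\alpha$.

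For the surrogate stage, set $\bar\ell_k(f):=\E[\ell(f(X),Y)\mid r_f(X,Y)=k]$; because $r_f(X,Y)=k$ forces $\pi_Y(X)=\pi_{(k)}(X)\le 1/k$, the cross-entropy loss obeys $\bar\ell_k(f)\ge\log k$, and with a mild envelope $\pi_{(k)}(X)\le 1-(k/K)^\beta$ ($\beta<1$) one gets $\bar\ell_k(f)\ge-\log\bigl(1-(k/K)^\beta\bigr)$ across the whole range of $k$. I then pick $\sigma_k\propto p_\Beta(k/(K+1);a,b)$ and verify $c\,k\le\sigma_k\,\bar\ell_k(f)$ for the constant $c$ carried over from the conformal stage, i.e. $\bar\ell_k(f)\ge c\,k/p_\Beta(k/(K+1);a,b)$: with $p_\Beta(z;1.1,5)\propto z^{0.1}(1-z)^4$ the right-hand side grows only polynomially in $k$, and with the envelope bound the two sides match up to a multiplicative constant over $k=1,\dots,K$ — precisely the computation behind the remark that the bound also holds for other $(a,b)$ with a different constant. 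Granting this, $\sum_k k\,p_k\le\sum_k p_k\,\sigma_k\,\bar\ell_k(f)\le\sum_k\sigma_k\,\bar\ell_k(f)=L_\Beta(f)$, the last step just dropping the factors $p_k\le 1$; chaining with the conformal stage gives $\E_X[|\calC_f(X)|]\le L_\Beta(f)$.

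The main obstacle is the conformal stage, and inside it the step turning ``$|\calC_f(X)|\ge k$'' into an exact lower bound on the residual mass $\sum_{j\ge k}\pi_{(j)}(X)$: one must account for the boundary randomization $U(x;\pi,\tau)$ so the event inclusion is exact rather than approximate, and — more seriously — control the calibrated threshold, since the Markov step only yields a usable bound once $1-\hat\tau_{\mathrm{cal}}$ is bounded away from $0$ (equivalently, once the finite-sample quantile $\hat\tau_{\mathrm{cal}}$, with its $(n_c+1)/n_c$ correction, is shown to concentrate below $1$). A secondary point is the standing identification of $\pi(X)$ with the true conditional law of $Y\mid X$, which is what makes $\E_X[\sum_{j\ge k}\pi_{(j)}(X)]$ equal $\P_{X,Y}(r_f(X,Y)\ge k)$; under only approximate calibration this introduces an additive slack that must be absorbed into the constant. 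The surrogate stage, by contrast, is a deterministic estimate once the lower bound on $\bar\ell_k(f)$ is in place, so it is routine up to constant bookkeeping.
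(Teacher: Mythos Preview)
Your two-stage decomposition (conformal step $\Rightarrow$ partial average rank, then surrogate step $\Rightarrow$ $L_\Beta$) matches the paper's architecture (Lemmas~\ref{lemma:CP_PSS_ub_by_partial_average_rank} and~\ref{lemma:partial_average_rank_ub_by_L_Beta}). But both stages, as you have written them, contain genuine gaps.

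\textbf{Surrogate stage.} The pointwise inequality you need, $c\,k \le \sigma_k\,\bar\ell_k(f)$, cannot hold uniformly over $k\in[K]$. With $a=1.1,b=5$ you have $\sigma_k \propto (k/(K+1))^{0.1}(1-k/(K+1))^{4}$, so for $k$ in the upper range (say $k=K/2$ or $k=K-1$) the ratio $k/\sigma_k$ scales like $K\cdot(1-k/(K+1))^{-4}$, i.e.\ at least linearly in $K$ and polynomially in $(K+1-k)^{-1}$. Your envelope $\pi_{(k)}(X)\le 1-(k/K)^\beta$ only gives $\bar\ell_k(f)\ge -\log\bigl(1-(k/K)^\beta\bigr)$, which is $O(1)$ for $k$ of order $K/2$ and only logarithmic in $(K-k)^{-1}$ near the top; it never catches $K^5/(K-k)^4$. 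So ``the two sides match up to a multiplicative constant'' is false. The paper does \emph{not} try to prove $k\le\sigma_k\bar\ell_k$; instead it bounds $k\,p_k/\bar\ell_k \le \gamma\,p_k$ (using $\bar\ell_k\ge k/\gamma$) and then invokes a decay assumption on the rank distribution itself, $p_k \le \xi(1-k/(K+1))^{b-1}$, which supplies the missing $(1-k/(K+1))^{4}$ factor. In other words, the $p_k$ are not dropped via $p_k\le 1$; they are the mechanism that matches the Beta shape. Your chain $\sum_k k p_k \le \sum_k p_k\sigma_k\bar\ell_k \le \sum_k \sigma_k\bar\ell_k$ fails at the first inequality.

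\textbf{Conformal stage.} Your identity $\E_X\bigl[\sum_{j\ge k}\pi_{(j)}(X)\bigr]=\P_{X,Y}(r_f(X,Y)\ge k)$ is precisely the statement that $f(X)$ is the true conditional law of $Y$ given $X$. You flag this as ``secondary'', but it is not: the theorem is about an arbitrary trained classifier $f$ (in fact an adversarially-perturbed one), and no calibration slack can repair the identity---if $f$ is far from the posterior the two sides are unrelated. The paper's Lemma~\ref{lemma:CP_PSS_ub_by_partial_average_rank} avoids this entirely: it decomposes $\E_X[|\calC_f(X)|]=\sum_y \E_{XY}[\indicator[V(X,y)\le\tau_{1-\alpha}]]$ by conditioning on whether $r_f(X,Y)<r_f(X,y)$ or $\ge r_f(X,y)$, and controls the pieces using the population coverage \emph{upper} bound $\P[V(X,Y)\le\tau_{1-\alpha}]\le 1-\alpha+1/n$ together with a monotonicity assumption on $H(k)=\P[\sum_{l\le k}f(X)_{(l)}\le\tau_{1-\alpha}\mid r_f(X,Y)\ge k]$ (verified empirically). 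This yields $\E_X[|\calC_f(X)|]\le K(1-\alpha)+\sum_{k\le K^*}k\,p_k$ without any oracle assumption on $f$, and without a $(1-\hat\tau_{\mathrm{cal}})^{-1}$ factor that, as you correctly worry, need not be bounded.
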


\begin{proof}
(of Theorem \ref{theorem:cost_sensitive_learning_bound_for_CP_PSS})

Before the proof, we first present two key lemmas (Lemma \ref{lemma:CP_PSS_ub_by_partial_average_rank} and Lemma \ref{lemma:partial_average_rank_ub_by_L_Beta}) below. Note that our theoretical analysis only uses the original Beta function $p_{\text{Beta}}$ instead of the up-shifted version, which does not affect the conclusion since the orignal Beta-weighting can be regarded as a regularization term. Thus, this theorem shows that the Beta weighting term controls the prediction set size, while the ERM term minimizes the generalization error. As we noted in the main paper, we use the ranking starting from 0 in our implementation while the theoretical analysis assumes the ranking starts from 1. Nevertheless, we can shift the index from 1 to 0 to get the same theoretical result.

The proof for Lemma \ref{lemma:CP_PSS_ub_by_partial_average_rank} and Lemma \ref{lemma:partial_average_rank_ub_by_L_Beta} can be found in Section \ref{section:proof_for_lemma_CP_PSS_ub_by_partial_average_rank} and Section \ref{section:proof_for_lemma_partial_average_rank_ub_by_L_Beta}, respectively.
\begin{lemma}
\label{lemma:CP_PSS_ub_by_partial_average_rank}
(Expected size of CP prediction upper bounded by partial average rank)
Let $K^* = \max\{ k \in [K] : \P_{XY} [ \sum_{l=1}^{k} f(X)_{(l)} \leq \tau_{1-\alpha} | r_f(X, Y) \geq k ] \geq 1 - \alpha \}$.
\begin{align}\label{eq:CP_PSS_ub_by_partial_average_rank}
\E_X[ | \calC_f(X) | ]
\leq 
\sum_{k=1}^{K^*} k \cdot \P[ r_f(X, Y) = k ]
\end{align}
\end{lemma}

\begin{lemma}
\label{lemma:partial_average_rank_ub_by_L_Beta}
(Partial average rank upper bounded by $L_\Beta$)
\begin{align}\label{eq:partial_average_rank_ub_by_L_Beta}
\sum_{k=1}^{K^*} k \cdot \P\big[ r_f(X, Y) = k \big]
\leq 
\sum_{k=1}^K \sigma_k \cdot \E\big[ \ell(f(X), Y) \big| r_f(X, Y) = k \big]
,
\end{align}
where $\sigma_k = 3 / 5 \cdot \gamma \cdot \xi \cdot p_\Beta(k/(K+1); a, b) $, $\gamma$ is a positive constant satisfying $\bar l_k\geq k/\gamma,\forall k\in[K^*]$ and $\xi$ is a positive constant satisfying $p_k \leq \xi \cdot ( 1 - \frac{k}{K+1} )^{b-1},\forall k\in [K^*]$.
\end{lemma}

Now we can start proving Theorem \ref{theorem:cost_sensitive_learning_bound_for_CP_PSS}.
By inequality (\ref{eq:CP_PSS_ub_by_partial_average_rank}) from Lemma \ref{lemma:CP_PSS_ub_by_partial_average_rank} and (\ref{eq:partial_average_rank_ub_by_L_Beta}) from Lemma \ref{lemma:partial_average_rank_ub_by_L_Beta}, we have
\begin{align*}
\E_X\big[ | \calC_f(X) | \big]
\stackrel{ (\ref{eq:CP_PSS_ub_by_partial_average_rank}) }{ \leq } &
\sum_{k=1}^{K^*} k \cdot \P\big[ r_f(X, Y) = k \big]
\\
\stackrel{ (\ref{eq:partial_average_rank_ub_by_L_Beta}) }{ \leq } &
\underbrace{
\sum_{k=1}^K \sigma_k \cdot \E\big[ \ell(f(X), Y) | r_f(X, Y) = k \big]
}_{ = L_\Beta(f) }
,
\end{align*}
where $\sigma_k = 3 / 5 \cdot \gamma \cdot \xi \cdot p_\Beta(k/(K+1); a, b).$
This completes the proof of Theorem \ref{theorem:cost_sensitive_learning_bound_for_CP_PSS}
\end{proof}

\subsection{ Proof for Lemma \ref{lemma:CP_PSS_ub_by_partial_average_rank} }
\label{section:proof_for_lemma_CP_PSS_ub_by_partial_average_rank}

\begin{proof}
(of Lemma \ref{lemma:CP_PSS_ub_by_partial_average_rank})
We first introduce the notations. $f(X)_{(l)}$ is the $l$th sorted predictive probability with the descending order, $V(X,y)$ is the cumulative summation of $f(X)_{(l)}$, i.e., $V(X,y)=\sum_{l=1}^{y} f(X)_{(l)}$. $r_f(X,Y)$ is the TCPR of input $(X,Y)$ when using the classifier $f$. $\tau_{1-\alpha}$ is the $1-\alpha$ quantile of the conformity score at the population level and $\alpha$ is the confidence level for conformal prediction.

Now we start with the expected size of prediction sets of CP method:
\begin{align*}
&
\E_X[ | \calC_f(X) | ]
= 
\E_X \Bigg[ \sum_{y=1}^K \indicator [ V(X, y) \leq \tau_{1-\alpha} ] \Bigg]
\\
= &
\sum_{y=1}^K \E_X \Big[ \indicator [ V(X, y) \leq \tau_{1-\alpha} ] \cdot \E_Y \big[ \indicator[ r_f(X, Y) < r_f(X, y) ] + \indicator[ r_f(X, Y) \geq r_f(X, y) ] \big] \Big]
\\
= &
\sum_{y=1}^K \E_X \Big[ \indicator [ V(X, y) \leq \tau_{1-\alpha} ] \cdot \E_Y \big[ \indicator[ r_f(X, Y) < r_f(X, y) ] \big] \Big]
\\
&
+ \sum_{y=1}^K \E_X \Big[ \indicator [ V(X, y) \leq \tau_{1-\alpha} ] \cdot \E_Y \big[ \indicator[ r_f(X, Y) \geq r_f(X, y) ] \big] \Big]
\\
= &
\underbrace{ 
\sum_{y=1}^K \E_{XY} \big[ \indicator [ V(X, y) \leq \tau_{1-\alpha} ] \cdot \indicator[ r_f(X, Y) < r_f(X, y) ] \big]
}_{ = A }
\\
&
+ 
\underbrace{
\sum_{y=1}^K \E_{XY} \big[ \indicator [ V(X, y) \leq \tau_{1-\alpha} ] \cdot \indicator[ r_f(X, Y) \geq r_f(X, y) ] \big]
}_{ = B }
.
\end{align*}

Below we upper bound the two terms $A$ and $B$, respectively.
For $A$, we have
\begin{align}\label{eq:ub_A}
&
A
=
\sum_{y=1}^K \P_{XY} [ V(X, y) \leq \tau_{1-\alpha}, r_f(X, Y) < r_f(X, y) ]
\nonumber\\
= &
\sum_{y=1}^K \P_{XY} [ r_f(X, Y) < r_f(X, y) ] \cdot \P_{XY} [ V(X, y) \leq \tau_{1-\alpha} | r_f(X, Y) < r_f(X, y) ]
\nonumber\\
\stackrel{ (a) }{ \leq } &
\sum_{y=1}^K \P_{XY} [ r_f(X, Y) < r_f(X, y) ] \cdot \P_{XY} [ V(X, Y) \leq \tau_{1-\alpha} | r_f(X, Y) \leq r_f(X, y) ]
\nonumber\\
&
+ \sum_{y=1}^K \P_{XY} [ r_f(X, Y) \geq r_f(X, y) ] \cdot \bigg( 1 - \alpha + \frac{1}{n} \bigg)
- \sum_{y=1}^K \P_{XY} [ r_f(X, Y) \geq r_f(X, y) ] \cdot \bigg( 1 - \alpha + \frac{1}{n} \bigg)
\nonumber\\
\stackrel{ (b) }{ \leq } &
\sum_{y=1}^K \P_{XY} \Big( \big[ r_f(X, Y) < r_f(X, y) \big] + \big[ r_f(X, Y) \geq r_f(X, y) \big] \Big) \cdot ( 1 - \alpha )
\nonumber\\
&
- \sum_{y=1}^K \P_{XY} \big[ r_f(X, Y) \geq r_f(X, y) \big] \cdot ( 1 - \alpha )
\nonumber\\
= &
K ( 1 - \alpha ) - \sum_{y=1}^K \P_{XY} \big[ r_f(X, Y) \geq r_f(X, y) \big] \cdot ( 1 - \alpha )
,
\end{align}
where the above inequality $(a)$ is due to $\P[ V(X, y) \leq \tau_{1-\alpha} | r_f(X, Y) < r_f(X, y) ] \leq \P[ V(X, Y) \leq \tau_{1-\alpha} | r_f(X, Y) < r_f(X, y) ]$, and
the inequality $(b)$ is due to $\P[ V(X, Y) \leq \tau_{1-\alpha} | r_f(X, Y) < r_f(X, y) ] \leq \P[ V(X, Y) \leq \tau_{1-\alpha} ] \leq 1 - \alpha + \frac{1}{n}$, the latter is due to the upper bound in \cite{romano2020classification}.
It is also worth highlighting that the second term in the last line above can be re-written as the average rank:
\begin{align*}
\sum_{y=1}^K \P[ r_f(X, Y) \geq r_f(X, y) ]
=
\sum_{k=1}^K k \cdot \P[ r_f(X,Y) = k ]
.
\end{align*}

\begin{figure}[t]
     \centering
     \includegraphics[width=0.5\textwidth]{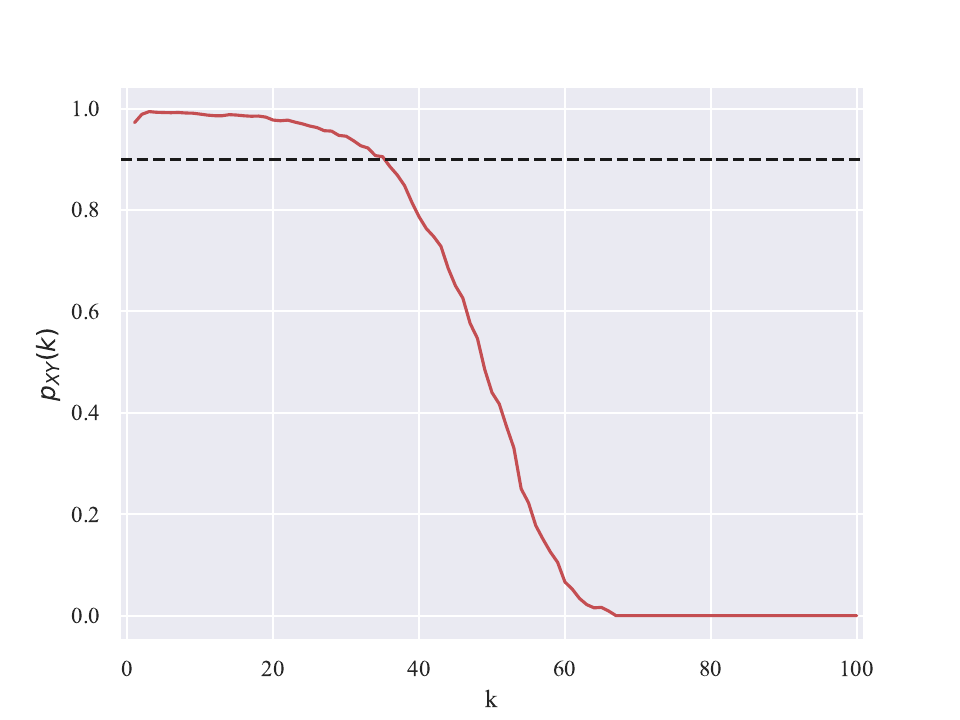}
    \caption{The empirical estimation of $\P_{XY} [ \sum_{l=1}^{k} f(X)_{(l)} \leq \tau_{1-\alpha} | r_f(X, Y) \geq k ]$ with the test set of CIFAR100 using an AT-trained model. The black dashed line is the confidence level $1-\alpha=0.9$.}
    \label{fig:p_xy}
\end{figure}

Now we turn to term $B$ and upper bound it as follows:
\begin{align}\label{eq:ub_B}
&
B - \sum_{y=1}^K \P_{XY} \big[ r_f(X, Y) \geq r_f(X, y) \big] \cdot ( 1 - \alpha )
\nonumber\\
= &
\sum_{y=1}^K \E_{XY} \Big[ \indicator \big[ V(X, y) \leq \tau_{1-\alpha} \big] \cdot \indicator[ r_f(X, Y) \geq r_f(X, y) ] \Big]
- \sum_{y=1}^K \P_{XY} \big[ r_f(X, Y) \geq r_f(X, y) \big] \cdot ( 1 - \alpha )
\nonumber\\
= &
\sum_{y=1}^K \P_{XY} \big[ V(X, y) \leq \tau_{1-\alpha} ,~ r_f(X, Y) \geq r_f(X, y) \big]
- \sum_{y=1}^K \P_{XY} \big[ r_f(X, Y) \geq r_f(X, y) \big] \cdot ( 1 - \alpha )
\nonumber\\
= &
\sum_{y=1}^K \P_{XY} \big[ r_f(X, Y) \geq r_f(X, y) \big] \cdot \P_{XY} \big[ V(X, y) \leq \tau_{1-\alpha} \big| r_f(X, Y) \geq r_f(X, y) \big]
\nonumber\\
&
- \sum_{y=1}^K \P_{XY} \big[ r_f(X, Y) \geq r_f(X, y) \big] \cdot ( 1 - \alpha )
\nonumber\\
= &
\sum_{y=1}^K \P_{XY} \big[ r_f(X, Y) \geq r_f(X, y) \big] \cdot \Big( \P_{XY} \big[ V(X, y) \leq \tau_{1-\alpha} \big| r_f(X, Y) \geq r_f(X, y) \big] - ( 1 - \alpha ) \Big)
\nonumber\\
= &
\sum_{y=1}^K \P_{XY} \big[ r_f(X, Y) \geq r_f(X, y) \big] \cdot \Big( \P_{XY} \big[ \sum_{l=1}^{r_f(X,y)} f(X)_{(l)} \leq \tau_{1-\alpha} \big| r_f(X, Y) \geq r_f(X, y) \big] - ( 1 - \alpha ) \Big)
\nonumber\\
\stackrel{ (a) }{ = } &
\sum_{k=1}^K \P_{XY} \big[ r_f(X, Y) \geq k \big] \cdot \Big( 
\underbrace{ 
\P_{XY} \big[ \sum_{l=1}^{k} f(X)_{(l)} \leq \tau_{1-\alpha} \big| r_f(X, Y) \geq k \big]
}_{ =: H(k) }
- ( 1 - \alpha ) \Big)
\nonumber\\
= &
\sum_{k=1}^{K^*} \P_{XY} \big[ r_f(X, Y) \geq k \big] \cdot \Big( \P_{XY} \big[ \sum_{l=1}^{k} f(X)_{(l)} \leq \tau_{1-\alpha} \big| r_f(X, Y) \geq k \big] - ( 1 - \alpha ) \Big)
\nonumber\\
&
+ \sum_{k=K^*+1}^K \P_{XY} \big[ r_f(X, Y) \geq k \big] \cdot \Big( \P_{XY} \big[ \sum_{l=1}^{k} f(X)_{(l)} \leq \tau_{1-\alpha} \big| r_f(X, Y) \geq k \big] - ( 1 - \alpha ) \Big)
\nonumber\\
\stackrel{ (b) }{ \leq} &
\sum_{k=1}^{K^*} \P_{XY} \big[ r_f(X, Y) \geq k \big] \cdot \Big( \P_{XY} \big[ \sum_{l=1}^{k} f(X)_{(l)} \leq \tau_{1-\alpha} \big| r_f(X, Y) \geq k \big] - ( 1 - \alpha ) \Big)
,
\end{align}
where the above equality $(a)$ is due to $k = r_f(X, y)$,
the inequality $(b)$ is due to the definition of $K^* = \max\{ k \in [K] : \P_{XY} [ \sum_{l=1}^{k} f(X)_{(l)} \leq \tau_{1-\alpha} | r_f(X, Y) \geq k ] \geq 1 - \alpha ) \}$ and the assumption of the monotonically decreasing function of $H(k) = \P_{XY} [ \sum_{l=1}^{k} f(X)_{(l)} \leq \tau_{1-\alpha} | r_f(X, Y) \geq k ]$ in $k$.

We plot the empirical estimation of $H(k) = \P_{XY} [ \sum_{l=1}^{k} f(X)_{(l)} \leq \tau_{1-\alpha} | r_f(X, Y) \geq k ]$ with the test set (adversarially attacked by PGD100) of CIFAR100 using an AT-trained model in Fig.~\ref{fig:p_xy}, which validates our assumption on the monotonically decreasing property of this function.

Combining the above two inequalities (\ref{eq:ub_A}) and (\ref{eq:ub_B}) together, we have
\begin{align*}
&
\E_X\big[ | \calC_f(X) | \big]
\\
\leq &
K ( 1 - \alpha )
+ \sum_{k=1}^{K^*} \P_{XY} \big[ r_f(X, Y) \geq k \big] \cdot \Big( \P_{XY} \big[ \sum_{l=1}^{k} f(X)_{(l)} \leq \tau_{1-\alpha} \big| r_f(X, Y) \geq k \big] - ( 1 - \alpha ) \Big)
\\
\leq &
K ( 1 - \alpha )
+ \sum_{k=1}^{K^*} \P_{XY} \big[ r_f(X, Y) \geq k \big] 
\\
= &
K ( 1 - \alpha )
+ \sum_{k=1}^{K^*} k \cdot \P_{XY} \big[ r_f(X, Y) = k \big]
.
\end{align*}
After dropping the constant (since the training does not optimize the constant), this completes the proof for Lemma \ref{lemma:CP_PSS_ub_by_partial_average_rank}.
\end{proof}

\subsection{ Proof for Lemma \ref{lemma:partial_average_rank_ub_by_L_Beta} }
\label{section:proof_for_lemma_partial_average_rank_ub_by_L_Beta}

\begin{proof}
(of Lemma \ref{lemma:partial_average_rank_ub_by_L_Beta})

The proof for Lemma \ref{lemma:partial_average_rank_ub_by_L_Beta} needs the following technical lemma, which is proved in Section \ref{section:proof_for_lemma_partial_average_rank_ub_by_L_Beta}.
\begin{lemma}
\label{lemma:ub_of_Gamma_functions}
(Upper bound of Gamma functions)
Let $a = 1.1, b = 5$.
Then we have the following inequality
\begin{align*}
\frac{ \Gamma(a+b) }{ \Gamma(a) \Gamma(b) }
\leq 
3 / 10.
\end{align*}
\end{lemma}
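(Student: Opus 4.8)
The plan is to eliminate all transcendental content from $\Gamma(a+b)/(\Gamma(a)\Gamma(b))$ and reduce the stated bound to a single explicit numerical comparison against $3/10$. Since $b=5$ is an integer, the factor $\Gamma(b)=\Gamma(5)=4!=24$ is exact and contributes nothing transcendental. For the remaining two factors I would repeatedly apply the functional equation $\Gamma(z+1)=z\,\Gamma(z)$ to the numerator $\Gamma(a+b)=\Gamma(6.1)$, writing
\begin{equation*}
\Gamma(6.1) = 5.1\cdot 4.1\cdot 3.1\cdot 2.1\cdot 1.1\cdot \Gamma(1.1),
\end{equation*}
and then observe that the denominator factor $\Gamma(a)=\Gamma(1.1)$ is \emph{exactly} the leftover transcendental term. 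Thus $\Gamma(1.1)$ cancels identically, and at no point do I need to estimate $\Gamma$ at a non-integer argument; this is the key structural simplification that makes the lemma self-contained.

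After the cancellation the quantity of interest is a finite product of rationals,
\begin{equation*}
\frac{\Gamma(a+b)}{\Gamma(a)\,\Gamma(b)} = \frac{5.1\cdot 4.1\cdot 3.1\cdot 2.1\cdot 1.1}{24},
\end{equation*}
so proving $\Gamma(a+b)/(\Gamma(a)\Gamma(b))\le 3/10$ becomes a purely arithmetic statement with no special-function content. I would then discharge this comparison by clearing denominators: multiplying through by $24$ and by the powers of $10$ hidden in the one-decimal factors ($5.1=51/10$, and so on) turns the claim into a comparison of two explicit integers, namely $51\cdot 41\cdot 31\cdot 21\cdot 11$ against $24\cdot 3/10$ scaled by $10^{5}$. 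Framing it as an integer inequality removes any dependence on floating-point evaluation and makes the verification unambiguous. One could equivalently phrase the same quantity through the Beta-integral identity $\Gamma(a+b)/(\Gamma(a)\Gamma(b)) = 1/\!\int_0^1 t^{a-1}(1-t)^{b-1}\,dt$, upper-bounding the ratio by lower-bounding the normalizing integral via a binomial expansion of $(1-t)^{4}$ and term-by-term integration of $t^{a-1+j}$; but both routes collapse to the identical numerical comparison.

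The main obstacle, and indeed the only nontrivial content, is this final quantitative comparison: the entire weight of the lemma rests on confronting the explicit product $5.1\cdot 4.1\cdot 3.1\cdot 2.1\cdot 1.1/24$ with the target constant $3/10$. Everything upstream — the exact value $\Gamma(5)=24$, the functional-equation reduction, and the exact cancellation of $\Gamma(1.1)$ — is routine and introduces no approximation error, so the argument succeeds or fails entirely on the integer inequality produced after clearing denominators, which must therefore be checked with exact rational arithmetic rather than a rough estimate.
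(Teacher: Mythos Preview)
Your reduction via the functional equation is correct and far cleaner than the paper's Weierstrass-product machinery: since $b=5$ is an integer, $\Gamma(a+b)/\Gamma(a)$ collapses to the Pochhammer product $a(a+1)(a+2)(a+3)(a+4)=1.1\cdot 2.1\cdot 3.1\cdot 4.1\cdot 5.1$, and $\Gamma(b)=24$. But you stopped one step short of the arithmetic check on which, as you yourself say, the entire argument rests---and that check \emph{fails}. Carrying it out gives
\[
\frac{\Gamma(6.1)}{\Gamma(1.1)\,\Gamma(5)}=\frac{1.1\cdot 2.1\cdot 3.1\cdot 4.1\cdot 5.1}{24}=\frac{149.73651}{24}\approx 6.239,
\]
which is nowhere near $\le 3/10$. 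The lemma as stated is simply false; your integer inequality $51\cdot 41\cdot 31\cdot 21\cdot 11=14\,973\,651$ versus $24\cdot(3/10)\cdot 10^{5}=720\,000$ goes the wrong way by a factor of about twenty.

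The paper's own proof conceals a compensating error: in expanding $\Gamma(a+b)/(\Gamma(a)\Gamma(b))$ via the Weierstrass product it silently inverts the expression, obtaining $\tfrac{a+b}{ab}\prod_{i}\tfrac{i(i+a+b)}{(i+a)(i+b)}$, which is actually $\Gamma(a)\Gamma(b)/\Gamma(a+b)=B(a,b)$. So what the paper really bounds is the Beta function $B(1.1,5)$, and indeed that is what the downstream use in Lemma~\ref{lemma:partial_average_rank_ub_by_L_Beta} requires (there is a matching inversion typo in that proof as well). Your method handles the corrected statement instantly: $B(1.1,5)=24/149.73651\approx 0.1603<3/10$. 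In short, your approach is sound and more elementary than the paper's; the only gap is that you deferred the one numerical comparison that would have exposed the sign error in the statement.
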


We use $p_k$ to denote $\P\big[ r_f(X, Y) = k \big]$ and $\bar l_k$ to denote $\E\big[ \ell(f(X), Y) \big| r_f(X, Y) = k \big]$.
\begin{align*}
&
\frac{ k \cdot p_k }{ \bar \ell_k }
\stackrel{ (a) } { \leq }
\frac{ k \cdot p_k }{ k / \gamma }
\stackrel{ (b) } { \leq }
\gamma \cdot \xi \cdot (1 - \frac{k}{K+1})^{b-1} 
\stackrel{ (c) } { \leq }
\gamma \cdot \xi \cdot \frac{ 2 }{(K+1)^{a-1}} \cdot (1 - \frac{k}{K+1})^{b-1}
\\
\stackrel{ (d) } { \leq } &
2 \gamma \cdot \xi \cdot \frac{k^{a-1}}{(K+1)^{a-1}} \cdot (1 - \frac{k}{K+1})^{b-1}  \cdot 
\frac{ \Gamma(a) \Gamma(b) }{ \Gamma(a+b) } \cdot
\frac{ \Gamma(a+b) }{ \Gamma(a) \Gamma(b) }
\\
= &
2 \gamma \cdot \xi \cdot p_\Beta(k/(K+1); a, b)  \cdot 
\frac{ \Gamma(a+b) }{ \Gamma(a) \Gamma(b) }
\\
\stackrel{ (e) } { \leq } &
3 / 5 \cdot \gamma \cdot \xi \cdot p_\Beta(k/(K+1); a, b) 
=
\sigma_k
,
\end{align*}
where the above inequality $(a)$ is due to the assumption $\bar \ell_k \geq k / \gamma$,
the inequality $(b)$ is due to the assumption $p_k \leq \xi \cdot ( 1 - \frac{k}{K+1} )^{b-1}$,
the inequality $(c)$ is due to the assumption $K \leq 2^{10}$ and $a-1 = 1/10$,
the inequality $(d)$ is due to $1 \leq k$,
the inequality $(e)$ is due to Lemma \ref{lemma:ub_of_Gamma_functions}. 

We plot the curve of $\bar l_k$ versus $k/\gamma$ ($\gamma=10$) and $p_k$ versus $\xi(1-\frac{k}{K+1})^{b-1}$ ($\xi=0.5$, $b=5$) with an adversarially trained model on CIFAR100 in Fig.~\ref{fig:assumption_simulation}, indicating that the two assumptions are valid in practice. This proves the inequality 
$$\sum_{k=1}^{K^*} k \cdot \P[ r_f(X, Y) = k ]
\leq 
\sum_{k=1}^{K^*} \sigma_k \cdot \E\big[ \ell(f(X), Y) \big| r_f(X, Y) = k \big].$$

\begin{figure}[t]
\centering
\includegraphics[width=0.9\textwidth]{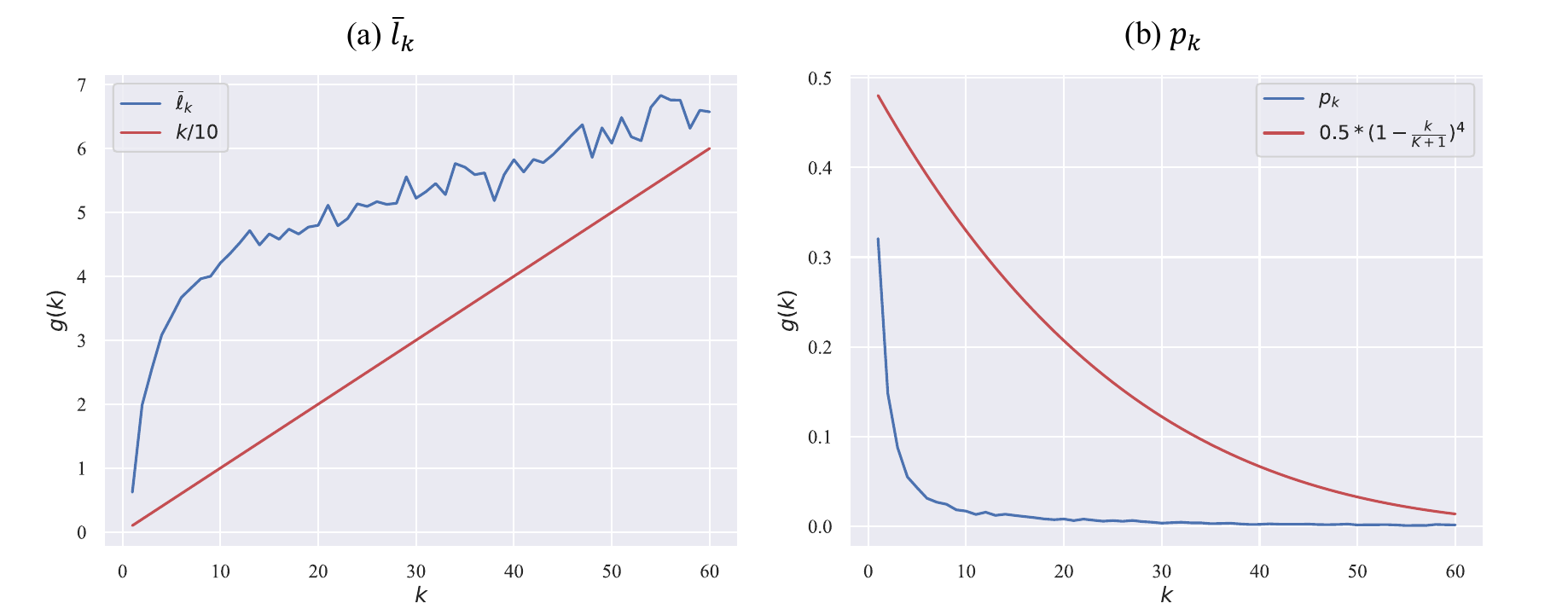}
\caption{The empirical estimation of \textbf{(a)} $\bar l_k$ and \textbf{(b)} $p_k$ on the test set of CIFAR100, with an adversarially trained model on CIFAR100.
}
\label{fig:assumption_simulation}
\end{figure}

This completes the proof of Lemma \ref{lemma:partial_average_rank_ub_by_L_Beta}.
\end{proof}

\subsection{ Proof for Lemma \ref{lemma:ub_of_Gamma_functions} }
\label{section:proof_lemma_ub_of_Gamma_functions}

\begin{proof}
(of Lemma \ref{lemma:ub_of_Gamma_functions})

We use Weierstrass's definition for Gamma function:
\begin{align*}
\Gamma(z)
=
\frac{ \exp( -\gamma z ) }{ z } \prod_{i=1}^\infty ( 1 + z/i )^{-1} \cdot \exp( z / i )
,
\end{align*}
where $\gamma_0$ is the Euler–Mascheroni constant.
Now we start the proof:
\begin{align*}
\frac{ \Gamma(a+b) }{ \Gamma(a) \Gamma(b) }
= &
\frac{ \exp(-\gamma_0 a) }{ a } \cdot
\frac{\exp(-\gamma_0 b) }{ b } \cdot
\frac{ a+b }{\exp(-\gamma_0 (a+b)) } \cdot
\\
&
\prod_{i=1}^\infty (1+a/i)^{-1} \cdot (1+b/i)^{-1} \cdot (1+(a+b)/i) \cdot \exp( a/i + b/i - (a+b)/i )
\\
= &
\frac{ a+b }{ ab } \cdot  
\prod_{i=1}^\infty \frac{ 1 + (a+b)/i }{ ( 1+a/i )( 1+b/i ) }
\stackrel{ (a) }{ \leq }
\frac{ 6 }{ 5 } \cdot \prod_{i=1}^\infty \frac{ ( i + a + b ) i }{ (i+a) (i+b) }
\\
= &
\frac{ 6 }{ 5 } \cdot \prod_{i=1}^\infty \Big( 1 - \frac{ ab }{ i^2 + (a+b)i + ab } \Big)
\stackrel{ (b) }{ \leq }
\frac{ 6 }{ 5 } \cdot \prod_{i=1}^\infty \exp\Big( - \frac{ ab }{ i^2 + (a+b)i + ab } \Big)
\\
= &
\frac{ 6 }{ 5 } \cdot \exp \Big( -ab \cdot \sum_{i=1}^\infty \frac{ 1 }{ i^2 + (a+b)i + ab } \Big)
\stackrel{ (c) }{ = }
\frac{ 6 }{ 5 } \cdot \exp\Big( -ab \cdot \sum_{i=1}^\infty \frac{ 1 }{ i^2 + 61/10 \cdot i + 11/2 } \Big)
\\
\stackrel{ (d) }{ < } &
\frac{ 6 }{ 5 } \cdot \exp( - 3 ab / 10 )
\stackrel{ (e) }{ < }
\frac{ 6 }{ 5 } \cdot \exp( - 3 / 2 )
<
3/10
,
\end{align*}
where the inequality $(a)$ is due to $1/a < 1, 1/b = 1/5$,
the inequality $(b)$ is due to $1+x \leq \exp(x)$,
the equality $(c)$ is due to $a = 1.1, b = 5$,
the inequality $(d)$ is due to 
$$
\sum_{i=1}^\infty \frac{ 1 }{ i^2 + 61/10 \cdot i + 11/2 }
>
\sum_{i=1}^{100} \frac{ 1 }{ i^2 + 61/10 \cdot i + 11/2 }
>
3/10,$$
and the inequality $(e)$ is due to $a > 1, b = 5$.
This completes the proof of Lemma \ref{lemma:ub_of_Gamma_functions}.
\end{proof}


\end{document}